\documentclass[10pt]{article} 
\usepackage[accepted]{tmlr}
\usepackage{amsmath,amssymb,amsfonts,amsthm}
\usepackage{graphicx} 
\usepackage{cite}
\usepackage{xspace}
\usepackage{algorithmic}
\usepackage{hyperref}
\usepackage{textcomp}
\usepackage{xcolor}
\usepackage{breqn}
\usepackage{bbm}
\usepackage{subcaption}
\usepackage{array}
\usepackage{multirow}
\usepackage{booktabs}
\usepackage{caption}
\usepackage{enumitem}
\usepackage{wrapfig}
\usepackage{cancel}

\newcommand{\bs}{\boldsymbol}
\newcommand{\cl}{\mathcal}
\newcommand{\bb}{\mathbb}

\newcommand{\bbm}{\mathbbm}

\newtheorem{theorem}{Theorem}
\newtheorem{lemma}[theorem]{Lemma}
\newtheorem*{lemma*}{Lemma}
\newtheorem{proposition}[theorem]{Proposition}
\newtheorem*{proposition*}{Proposition} 
\newtheorem{corollary}[theorem]{Corollary}
\newtheorem*{corollary*}{Corollary}
\newtheorem{definition}[theorem]{Definition}
\newtheorem*{definition*}{Definition}
\newtheorem{remark}[theorem]{Remark}
\newtheorem*{remark*}{Remark}

\DeclareMathOperator{\rank}{rank}
\DeclareMathOperator{\tr}{tr}
\DeclareMathOperator{\pdet}{pdet}

\newcommand{\mathtxt}[1]{%
  \ifmmode
  \mathrm{#1}%
  \else%
  #1\@\xspace%
  \fi%
}

\newcommand{\iid}{\mathtxt{i.i.d.}}

\newcommand{\wh}{\widehat}

\DeclareMathOperator{\sign}{sign}
\DeclareMathOperator{\spn}{span}

\DeclareMathOperator{\diag}{diag}
\DeclareMathOperator{\bdiag}{bdiag}

\DeclareMathOperator{\mvec}{vec}

\newcommand{\im}{{\sf i}}
\newcommand{\scp}[3][]{#1\langle #2, #3 #1\rangle}

\newcommand{\ud}{\mathrm{d}} 
\renewcommand{\leq}{\leqslant}
\renewcommand{\geq}{\geqslant}
\newcommand{\Id}{\bs I}
\newcommand{\bZ}{\textbf{0}}
\newcommand{\ts}{\textstyle}
\newcommand{\ds}{\displaystyle}
\newcommand{\ie}{\emph{i.e.}, }
\newcommand{\eg}{\emph{e.g.}, }

\newcommand{\cO}{{\cl O}}

\usepackage{pifont}
\newcommand{\revR}[2]{{\color{blue} \textbf{\small [#1]} #2}}
\renewcommand{\revR}[2]{{\color{black}#2}}

\usepackage[normalem]{ulem}

\newcommand{\del}[1]{}

\renewcommand{\angle}{\measuredangle}

\newcommand{\grass}{\cl G}
\newcommand{\stief}{\bb V}
\newcommand{\pgrass}{\bb G}

\newcommand{\detemb}{\phi}
\newcommand{\randemb}{\psi}

\newcommand{\denseCS}{\Psi}
\newcommand{\denseCSm}{\bs \Psi}

\newcommand{\drandemb}{\randemb^{\, \mathrm{d}}}
\newcommand{\rop}{\randemb^{\, \mathrm{rop}}}
\newcommand{\ropg}{\randemb^{\, g}}
\newcommand{\brop}{{\randemb^{\scriptscriptstyle \pm}}}
\newcommand{\bropz}{\bar\randemb}
\newcommand{\crop}{{\randemb^{\circ}}}
\newcommand{\srop}{\randemb^{\mathrm{st}}}

\newcommand{\kg}{\kappa^{g}}
\newcommand{\kpr}{\kappa^{\mathrm{p}}}
\newcommand{\kbc}{\kappa^{\mathrm{BC}}}
\newcommand{\ksg}{\kappa^{\scriptscriptstyle \pm}}
\newcommand{\kexp}{\kappa^{\circ}}

\newcommand{\akd}{\wh\kappa^{\, \mathrm{d}}}
\newcommand{\akg}{\wh\kappa^{\,g}}
\newcommand{\akrop}{\wh\kappa^{\, \mathrm{rop}}}
\newcommand{\aksg}{\wh\kappa^{\,\scriptscriptstyle \pm}}
\newcommand{\akexp}{\wh\kappa^{\,\circ}}

\newcommand{\distG}{d_{\mathrm{G}}}
\newcommand{\distP}{d_{\mathrm{P}}}
\newcommand{\distBC}{d_{\mathrm{BC}}}

\newcommand{\phipr}{\detemb^{\, \mathrm{p}}}

\newcommand{\HDHDfactors}{S}

\title{Random features for Grassmannian kernel approximation\linebreak with bounded rank-one projections}
\author{\name Rémi Delogne \email remi.delogne@uclouvain.be \\
       \addr INMA/ICTEAM, UCLouvain, Belgium 
       \AND
       \name Laurent Jacques \email laurent.jacques@uclouvain.be \\
       \addr INMA/ICTEAM, UCLouvain, Belgium}

\def\month{07}  
\def\year{2026} 
\def\openreview{\url{https://openreview.net/forum?id=wq18dZJ2pA}} 

\begin{document}

\maketitle

\begin{abstract}
We propose a family of random feature maps for scalable kernel machines defined over low-dimensional subspaces in high dimensions, \ie over the Grassmannian manifold. This is typically useful in a machine learning context when data classes or clusters are well represented by the span of a few data points. Classical Grassmannian kernels such as the \emph{projection} or \emph{Binet–Cauchy} kernels require constructing full Gram matrices for practical applications, leading to prohibitive computational and memory costs for large subspace datasets in high dimensions. We address this limitation by computing specific random features of subspaces. These combine random rank-one projections of the subspace projection matrices with bounded non-linear transforms---periodic or binary---to tame the resulting heavy-tailed distribution.
We show that, in the random feature space, inner products approximate well-defined, rotation-invariant Grassmannian kernels, \ie depending only on the principal angles of the considered subspaces. Provided the number of random features is large compared to the subspace intrinsic dimension, we show that this approximation holds uniformly over all subspaces of fixed dimensions with high probability.
When the non-linear transform is periodic, the approximated kernel admits a closed-form expression with a tunable behaviour bridging inverse Binet–Cauchy and Gaussian-type regimes, while the binarised feature has no known closed-form kernel but lends itself to even more compactly represented one-bit subspace features. Moreover, we show how structured rank-one projections, leveraging randomised fast Fourier transforms, further reduce the random feature computational complexity without sacrificing accuracy in practical experiments.
We demonstrate the practicality of these techniques with synthetic experiments and classification tasks on the ETH-80 dataset representing visual object images from different viewpoints. The proposed random features recover Grassmannian geometry with high accuracy while reducing computation, memory, and storage requirements. This demonstrates that rank-one embeddings offer a practical and scalable alternative to classical Grassmannian kernels.
\end{abstract}

\section{Introduction}\label{sec:introduction}

There exist many supervised and unsupervised learning tasks where data classes or clusters are better represented by low-dimensional \emph{subspaces} spanned by the instances of a specific class of data. This assumption arises naturally in a range of contexts such as image and video processing \citep{WatanabePakvasa1973, Basri2003, Rao2010, LiuEtAl2013, ji2015angular}, wireless communications \citep{schwarz2021codebook}, dynamic subspace modelling and signal processing \citep{srivastava2004bayesian,saad2024subspace,jayasumana2015kernel}. Implicitly, this means that, rather than working with separate data points in a Euclidean space, we now find ourselves dealing with data belonging to the \emph{Grassmannian manifold} $\grass(k,n)$, the set of all subspaces of dimension $k$ in $\bb R^n$.

In supervised learning, provided one accesses enough training data, \emph{kernel machines} (\eg SVM, regression \citep{bach2024learning}) can learn arbitrary complex function or classification boundaries thanks to an appropriate positive-definite \emph{kernel}, \ie a similarity score computed over all pairs of data and stored in a \emph{Gram matrix}. In the context of Grassmannian data, there exist numerous kernels to quantify similarity between Grassmannian elements (see \eg \citep{harandi2014expanding} for a list of such \emph{kernels}), hence allowing for complex learning tasks \citep{hamm2008grassmann,WolfShashua2003}.

Although effective in theory, kernel machines suffer from a scalability drawback. As noted in \citet{RahimiRecht2007}, large datasets (with $N$ samples) require handling an $N \times N$ Gram matrix, hence requiring $\cO(N^2)$ (possibly costly) evaluation of the kernel and an $\cO(N^2)$ memory cost. A first fix to this problem is to approximate the Gram matrix, \eg from a low-rank approximation as in the Nyström method \citep{DrineasMahoney2005, Bach2005}. \revR{R-XJL8}{This reduces the storage cost to $\cO(mN)$, where $m \leq N$ is the number of selected samples but the method remains data-dependent since the approximation is built from samples selected from the dataset and still requires kernel evaluations involving all $N$ samples.} \emph{Random features} provide another efficient solution directly approximating the kernel: each data point is embedded into a Euclidean space thanks to random projections, possibly coupled with non-linear functions (\eg periodic, binary). In this space, mere inner products between random features approximate, with a controlled error, a specific kernel between pairs of initial data points \citep{RahimiRecht2007}. For Grassmannian data, the compressive sensing literature provides us direct random feature constructions \citep{foucart2016flavors,CandesPlan2011} (see Sec.~\ref{sec:dense-rand-proj}). We can embed Grassmannian data by vectorising and projecting them onto a few random Gaussian matrices, their number scaling with the intrinsic dimension of $\grass(k,n)$~\citep{LiGu2018}. However, the cost of such dense Gaussian projections makes this approach prohibitive both computationally and in terms of memory.

 \textbf{Main contributions.} In this work, we propose novel random features adapted to Grassmannian data. More precisely, each subspace $\cl P \in \grass(k,n)$ is represented by its orthogonal projector $\bs P=\bs U\bs U^\top$, and is probed through $m$ independent rank-one quadratic measurements of the form $\bs a_i^\top \bs P \bs b_i$, where $\bs a_i,\bs b_i \sim_{\iid} \cl N(\bZ,\Id_n)$. Stacking these measurements defines the rank-one projection (ROP) feature map $\rop(\bs U)=(\bs a_i^\top \bs U\bs U^\top \bs b_i)_{i=1}^m$, whose empirical inner product is an estimator of the projection kernel but exhibits heavy-tailed concentration. To control this behaviour, we compose $\rop$ with bounded non-linear functions. Using the sign function yields binary random features $\brop=\sign\circ\rop$, whose empirical kernel $\aksg(\bs U,\bs V)=\frac{1}{m}\scp{\brop(\bs U)}{\brop(\bs V)}$ converges uniformly, with high probability, to a well-defined, positive-definite and rotationally invariant Grassmannian kernel $\ksg$ that depends only on the \emph{principal angles} between two subspaces~\citep{conway1996packing,harandi2014expanding}. Although no explicit closed form is known for $\ksg$, we show that it can be expressed as a function of the related projectors $\bs P = \bs U\bs U^\top$ and $\bs Q = \bs V\bs V^\top$ with
\[
\ts \ksg(\bs U,\bs V)=1-\frac{2}{\pi} \bb E \angle(\bs P\bs g,\bs Q\bs g),
\]
where $\angle(\bs u, \bs v)$ is the angle between two vectors $\bs u$ and $\bs v$ and $\bs g \sim \cl N(\bZ,\Id_n)$. This semi-explicit characterisation of $\ksg$ will allow us to get an intuitive understanding of the behaviour of $\ksg$ with respect to the principal angles between subspaces. Using instead the complex exponential defines periodic random features $\crop=\exp(\im\omega\,\rop)$, for which the expected kernel admits the closed form
\[\ts
\kexp(\bs U,\bs V)=\bb E\scp{\crop (\bs U)}{\crop(\bs V)}=\prod_{j=1}^k\big(1+\omega^2\sin^2\theta_j\big)^{-1},
\]
where $\{\theta_j\}_{j=1}^k$ are the principal angles between $\bs U$ and $\bs V$. In both cases, we show that inner products of the random features provide unbiased estimators of the target kernels and satisfy uniform approximation bounds over $\stief(k,n)$ (the set of orthonormal bases of dimension $k$ in $\bb R^n$) if $m=\cO(kn)$ (up to $\log$ factors), yielding scalable Grassmannian kernel approximations with controlled error. This process is illustrated in Fig.~\ref{fig:process}. \revR{R-FSmh}{Note that these different feature maps each approximate a different kernel. Dense Gaussian projections and unbounded ROPs target the projection kernel $\kpr$, while the bounded binary and periodic ROP features induce the distinct Grassmannian kernels $\ksg$ and $\kexp$, respectively.}

Beyond statistical guarantees, we evaluate the computational and memory efficiency of the proposed random feature constructions. While classical Grassmannian kernels require $\cO(N^2)$ kernel evaluations and storage for an $N$-sample Gram matrix, and dense random embeddings of projectors incur $\cO(n^2)$ cost per feature, the proposed bounded ROP-based features can be computed in $\cO(kmn)$ operations and stored using $\cO(mn)$ memory. Moreover, the uniform approximation results established in Secs.~\ref{sec:binary-rand-feat} and \ref{sec:peri-rand-feat} show that $m=\cO(kn)$ random features (up to logarithmic factors) suffice to control the approximation error, yielding overall complexities that scale linearly with the ambient dimension. We further show in Sec.~\ref{sec:efficient} that these costs can be significantly reduced by replacing Gaussian probing vectors with structured random transforms inspired from \citep{Choromanski2016TripleSpin}, leading to feature maps computable in $\cO(km\log n)$ operations with drastically reduced storage. A detailed comparison of computational and memory complexities for all considered methods is provided in Sec.~\ref{sec:comp-memory-compl}, and the practical impact of these gains is illustrated experimentally in Sec.~\ref{sec:exp}.

\begin{figure}
  \centering
  \includegraphics[width=0.65\textwidth]{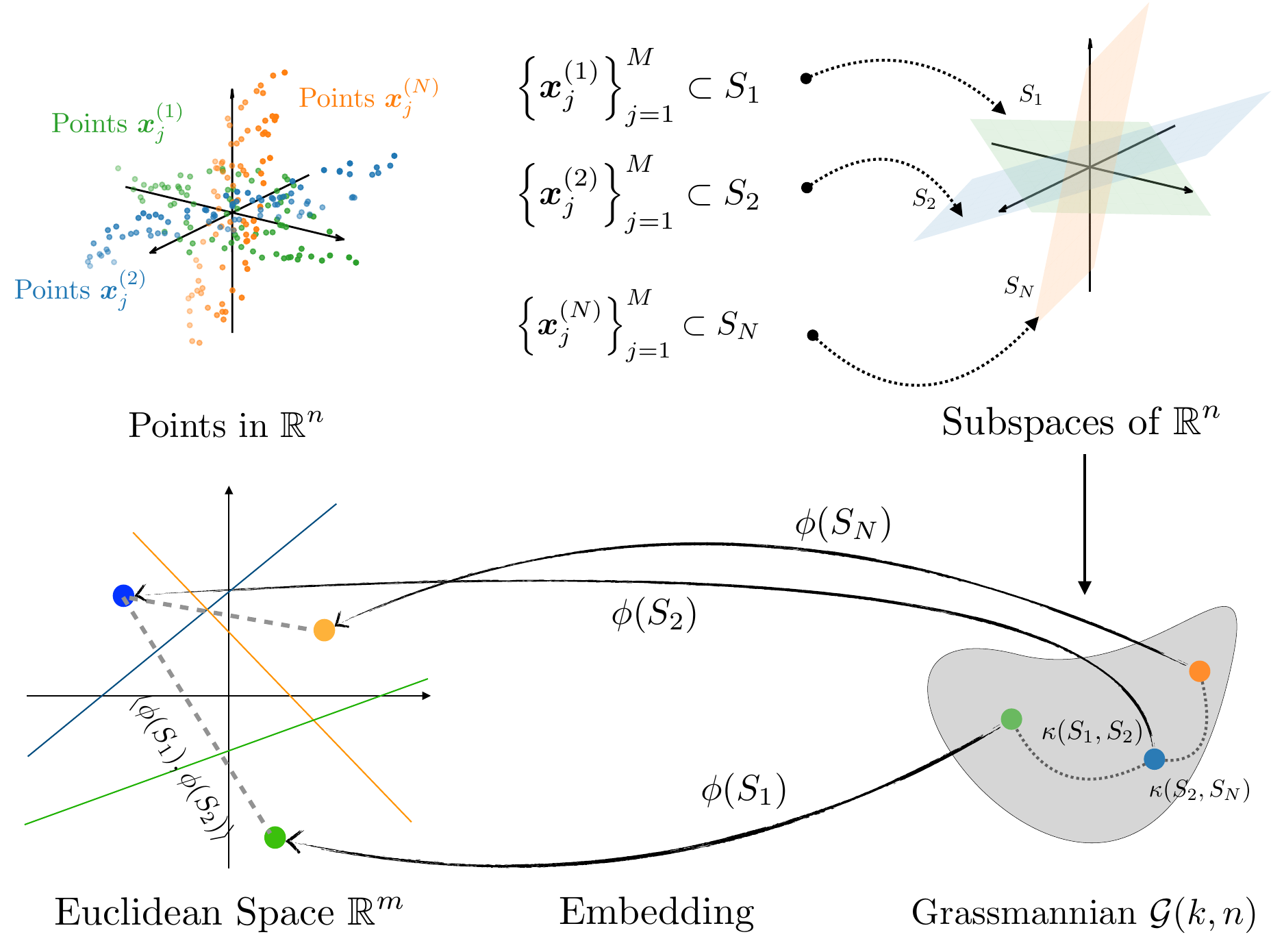}
  \caption{Schematic representation of the proposed method. Data points are assumed to belong to low-dimensional subspaces of $\bb R^n$. Subspaces are represented by their orthonormal bases, which are then embedded into a Euclidean space. Inner products in this space approximate well-defined Grassmannian kernels allowing for learning tasks to be performed efficiently.}
  \label{fig:process}
\end{figure}

The rest of the paper is organised as follows. We start by providing the essential tools for navigating the geometry of $\grass(k,n)$ as well as key reminders on random feature construction for kernel machines in Sec.~\ref{sec:background}. The limitations met by two direct designs of linear random features of Grassmannian elements are developed in Sec.~\ref{sec:embedding}. We provide in Sec.~\ref{sec:bounded} two solutions to these limitations by composing linear rank-one projections with non-linear, bounded functions, \ie the sign and the periodic imaginary exponential functions. We provide uniform error guarantees on the possibility to approximate specific Grassmannian kernels with the corresponding binary and periodic random features. We dwell upon structured random feature schemes in Sec.~\ref{sec:efficient}, and then highlight how our approach connects with existing works, such as the lineage of subspace classification, in Sec.~\ref{sec:rw}. We finally show in Sec.~\ref{sec:exp} how our approach allows for the classification of Grassmannian data, before concluding. We postpone to appendices all technical proofs that would otherwise slow the flow of reading. 

\paragraph*{Notations and conventions}
We find it useful to gather here some notations, concepts and conventions used throughout this paper. As we target global complexity analyses, many of the bounds developed in this work depend on constants, denoted by $C, c, c', c'', \ldots > 0$, whose values may vary from one line to another. We denote matrices and vectors with bold symbols, \eg $\bs A \in \bb R^{m\times n}$, $\bs x \in \bb R^n$, and scalar values with light symbols. The identity matrix in $\bb R^n$ is $\Id_n$. The $m \times n$ zero matrix is denoted $\bZ_{m \times n}$, and we omit the dimensions when clear from the context.
The angle between two vectors $\bs u$ and $\bs v$ reads $\angle(\bs u, \bs v)$, and the $\ell_2$-norm of $\bs u$ is $\|\bs u\|$. The scalar product between two matrices in $\bb R^{n \times n}$  $\bs A$ and $\bs B$ is $\scp{\bs A}{\bs B} = \tr(\bs A^\top \bs B) = \scp{\mvec(\bs A)}{\mvec{\bs B}}$, where $\mvec{\bs A} \in \bb R^{n^2}$ denotes the vectorisation of $\bs A$. The related Frobenius norm reads $\|\bs A\|^2_F = \scp{\bs A}{\bs A}$. The cardinality of a finite set $\cl S$ is denoted $|\cl S|$. The standard normal and Rademacher $\pm1$ distributions read $\cl N(0,1)$ and $\cl U(\pm1)$, respectively, and the multivariate normal distribution in $\bb R^n$ (with identity covariance) is $\cl N(\bZ,\Id_n)$. We use the shorthand $\iid$ to define \emph{identically and independently distributed} random quantities (variables, vectors or matrices). The groups of orthogonal matrices and rotations in $\bb R^n$ are denoted by ${\sf O}(n)$ and ${\sf SO}(n)$, respectively, while $\cO$ is reserved for the complexity symbol. Finally, as this work considers several types \emph{embeddings}, we reserve the symbol $\detemb$ for all deterministic embeddings, \eg those used in the kernel trick, and the symbol $\randemb$ for all the random embeddings used to define random features.

\section{Background and preliminaries}
\label{sec:background}

This section provides the tools needed to navigate the Grassmannian manifold $\grass(k,n)$ as well as a brief introduction to random features for kernel machines in supervised learning. 

\subsection{The Grassmannian manifold}
\label{sec:grassm-manif}
Mathematically, the Grassmannian manifold $\grass(k,n)$ represents the set of $k$-dimensional subspaces of $\bb R^n$. For example in the case where $k=1$ and $n=3$, $\grass(1,3)$ contains all the lines through the origin in a three dimensional space. 

Each subspace $\cl P \in \grass(k,n)$ can be represented as the span of an \emph{orthonormal basis} $\bs U \in \bb R^{n \times k}$, \ie $\cl P = \spn(\bs U)$, where $\bs U$ is a matrix with orthonormal columns, \ie it belongs to the Stiefel manifold $\stief(k,n)$
$$
\stief(k,n) := \{\bs V \in \bb R^{n \times k} : \bs V^\top \bs V = \Id_k\}.
$$
If the subspace is provided through a collection of vectors $\{\bs x_j\}_{j=1}^N \subset \bb R^n$ such that the rank of the data matrix $\bs X = (\bs x_1, \ldots, \bs x_N) \in \bb R^{n \times N}$ equals $k$, then $\bs U$ can be obtained, for instance, from the QR or the SVD factorisation of $\bs X$.

A subspace $\cl P$ is not uniquely represented by a basis $\bs U$. In fact, any basis $\bs U' = \bs U \bs R$ obtained by transforming the columns of $\bs U$ with an orthogonal matrix $\bs R \in {\sf SO}(k)$ is also a basis of the subspace. However, each subspace of $\grass(k,n)$ can be represented by the \emph{unique} projector $\bs P=\bs U\bs U^\top$ such that $\bs P^2=\bs P = \bs P^\top$ and $\rank \bs P = k$, and we define
$$
\pgrass(k,n) = \{\bs P \in \bb R^{n \times n}: \bs P^2=\bs P = \bs P^\top, \rank \bs P = k\},
$$
which is the projector representation of $\grass(k,n)$. This representation of $\grass(k,n)$ is invariant to the choice of basis: rotating the columns of $\bs U$ leaves $\bs P$ unchanged, since $\bs U\bs R(\bs U\bs R)^\top=\bs U\bs U^\top$ for any orthogonal matrix $\bs R$. In this work, we often identify a subspace $\cl P \in \grass(k,n)$ with its projector $\bs P \in \pgrass(k,n)$. 

An alternative but equivalent characterisation views $\grass(k,n)$ as a homogeneous space ${\sf O}(n)/({\sf O}(k)\times{\sf O}(n-k))$, identifying each subspace with a set of orthogonal transformations mapping a fixed reference plane onto it. While this \emph{Lie-group formulation} offers valuable geometric insight, it will not be needed in what follows.

Equipped with these representations, we can now define notions of distance and similarity between subspaces, which will serve as the basis for learning methods on $\grass(k,n)$.

\subsection{Grassmannian distances}
\label{sec:grassm-dist}
Distances between elements of $\grass(k,n)$ may be derived from their \emph{principal angles}. Intuitively, in $\grass(1,n)$, the set of lines through the origin, comparing two lines amounts to looking at the angle they form. In higher dimensions, two $k$-dimensional subspaces $\cl P, \cl Q \in \grass(k,n)$ form $k$ principal angles $0\leq\theta_1,\ldots,\theta_k\leq\pi/2$.

\revR{R-XJL8}{The principal angles between two subspaces $\cl P,\cl Q\in\grass(k,n)$ measure how aligned these subspaces are. Given two orthonormal bases $\bs U,\bs V\in\stief(k,n)$ of $\cl P$ and $\cl Q$, respectively, these angles $\theta_1,\ldots,\theta_k\in[0,\pi/2]$ are defined by
$$
\cos(\theta_i)=\sigma_i(\bs U^\top\bs V),\quad i=1,\ldots,k,
$$
where $\sigma_1(\bs U^\top\bs V)\geq\cdots\geq\sigma_k(\bs U^\top\bs V)$ are the singular values of $\bs U^\top\bs V$. Equivalently, the principal angles can be obtained recursively by looking for pairs of unit vectors in $\cl P$ and $\cl Q$ with maximal inner product, under orthogonality constraints with the previously selected pairs. If $\cl P$ and $\cl Q$ have dimensions $k$ and $k'$ with $k\neq k'$, then $\min(k,k')$ principal angles can be defined in the same way~\citep{mandolesi2021grassmann}.}

Among the existing distances on $\grass(k,n)$, two are most relevant to our work. The \emph{projection distance},
\[
    \ts\distP(\bs P,\bs Q)=\big(\sum_{i=1}^k \sin^2\theta_i\big)^{1/2}=\frac{1}{\sqrt{2}}\|\bs P-\bs Q\|_F,
\]
which amounts to the distance of the projectors $\bs P$ and $\bs Q$, and the \emph{Binet-Cauchy distance},
\[
    \ts\distBC(\bs P,\bs Q)=\big(1-\prod_{i=1}^k \cos^2\theta_i\big)^{1/2} = \big(1 - \pdet(\bs P \bs Q)\big)^{1/2},
  \]
where the \emph{pseudo-determinant} $\pdet(\bs M)$ of any rank-$r$ matrix $\bs M$ is defined by \citep[p.~529]{florescu2014probability} 
\begin{equation}
  \label{eq:pseudo-det}
  \pdet(\bs M) = \lim_{t \to 0} t^{r - n} \det(\bs M + t \bs I_n),
\end{equation}
with $\pdet(\bs P \bs Q) = \det(\bs U^\top \bs V)^2$ if $\bs P = \bs U\bs U^\top$ and $\bs Q = \bs V\bs V^\top$.

The distance $\distBC$ quantifies how the squared volume of the basis $\bs V$ projected onto $\bs U$ (as measured by $\det(\bs U^\top \bs V)^2$) deviates from the (unit) value it takes when $\bs P = \bs Q$. We can also mention the \emph{geodesic distance}, \ie the length of the shortest curve in $\grass(k,n)$ connecting two subspaces, which is the most natural notion of distance on the manifold
\[\ts\distG(\bs P,\bs Q)=\big(\sum_{i=1}^k \theta_i^2\big)^{1/2}.\]

These distances are crucial to compare elements of $\grass(k,n)$ and will later be used to define similarity measures and kernels between subspaces.

\subsection{Grassmannian kernels}
\label{sec:grassmannian-kernels}
We are interested in solving problems on $\grass(k,n)$ by embedding elements into a Hilbert space using kernel functions. Such kernel functions must respect specific criteria as explained in \citet{ScholkopfSmola2002} in the case of general kernels and in \citet{harandi2014expanding} for Grassmannian kernels.

First, the kernel must be \emph{positive-semidefinite} to allow its use in kernel machines. 
\begin{definition}[Positive-semidefinite kernel]
Let $\cl X$ be a non-empty set. A symmetric function $\kappa:\cl X\times\cl X\to\bb R$ is called \emph{positive-semidefinite} if, for any $N\in\bb N$, any choice of points $x_1,\ldots,x_N\in\cl X$, and any real coefficients $c_1,\ldots,c_N$, we have $\sum_{i=1}^N\sum_{j=1}^N c_i c_j\kappa(x_i,x_j) \geq 0$.
\end{definition}

Second, as we target kernels defined over subspace bases, they must be independent of the orthonormal bases of $\stief(k,n)$ used to represent the compared subspaces of $\grass(k,n)$. 
\begin{definition}[Well-defined Grassmannian kernel]
A well-defined \emph{Grassmannian kernel} is a positive-definite function $\kappa:\stief(k,n)\times\stief(k,n)\mapsto\bb R$ that measures similarity between subspaces of $\grass(k,n)$ independently of the bases chosen to represent them. Formally, given two orthonormal bases $\bs U, \bs V \in \stief(k,n)$, $\kappa$ satisfies 
\[
  \kappa(\bs U,\bs V)=\kappa(\bs U\bs R,\bs V\bs R'),\quad \forall \bs R,\bs R'\in {\sf SO}(k),
\]
where ${\sf SO}(k)$ denotes the special orthogonal group in $\bb R^k$.
\end{definition}

Finally, the kernel must not change if both compared subspaces are similarly rotated in $\bb R^n$ \citep{wong1967differential}.
\begin{definition}[Rotational invariance]
 A \emph{Grassmannian kernel} $\kappa:\stief(k,n)\times\stief(k,n)\mapsto\bb R$ is rotationally invariant if, given any two bases $\bs U, \bs V \in \stief(k,n)$ of the subspaces $\cl P, \cl Q \in \grass(k,n)$,
  $$
  \kappa(\bs U, \bs V) = \kappa(\bs R\bs U, \bs R\bs V),\ \forall \bs R \in {\sf SO}(n).
  $$ 
\end{definition}
A direct consequence of the rotational invariance is the following important fact.  
\begin{theorem}
  \label{thm:grassmannian-kernels-dep-angle}
A rotationally invariant \emph{Grassmannian kernel} $\kappa:\stief(k,n)\times\stief(k,n)\mapsto\bb R$ only depends on the principal angles of the compared subspaces, \ie given two bases $\bs U, \bs V \in \stief(k,n)$ of the subspaces $\cl P, \cl Q \in \grass(k,n)$ and their $k$ principal angles $\{\theta_i\}_{i=1}^k$, there exists a function $f$ such that $\kappa(\bs U, \bs V) = f(\theta_1, \ldots, \theta_k)$. 
\end{theorem}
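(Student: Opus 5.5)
The plan is to reduce any pair $(\bs U,\bs V)$ to a \emph{canonical pair} that depends only on $\{\theta_i\}_{i=1}^k$. We use only the two invariances in the definitions: right multiplication by ${\sf SO}(k)$ on each basis, and left multiplication of both bases by a common element of ${\sf SO}(n)$. If both moves leave $\kappa$ unchanged, then $\kappa(\bs U,\bs V)=\kappa(\bs U_0,\bs V_0)$, where $(\bs U_0,\bs V_0)$ is built from the angles alone. We then set $f(\theta_1,\ldots,\theta_k):=\kappa(\bs U_0,\bs V_0)$.

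\emph{Step 1 (diagonalise the cross-Gram matrix).} Take the SVD $\bs U^\top\bs V=\bs A\,\bs D\,\bs B^\top$ with $\bs A,\bs B\in{\sf O}(k)$ and $\bs D=\diag(\cos\theta_1,\ldots,\cos\theta_k)$. Set $\bs U'=\bs U\bs A$ and $\bs V'=\bs V\bs B$, so that $\bs U'^\top\bs V'=\bs D$. To use only ${\sf SO}(k)$ we must have $\det\bs A=\det\bs B=1$. Flipping the sign of the $i$-th column of both $\bs A$ and $\bs B$ leaves $\bs A\bs D\bs B^\top$ unchanged, and this lets us enforce $\det\bs A=1$. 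If some $\theta_i=\pi/2$, we may flip column $i$ of $\bs B$ alone, since it multiplies a zero entry of $\bs D$, and this enforces $\det\bs B=1$.

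\emph{Step 2 (canonical position).} Write the columns of $\bs U'$ as $\bs u_i$ and those of $\bs V'$ as $\bs v_i$. Since $\bs U'^\top\bs V'=\bs D$, for each $\theta_i>0$ we can write $\bs v_i=\cos\theta_i\,\bs u_i+\sin\theta_i\,\bs w_i$. The unit vectors $\bs w_i$ are orthogonal to every $\bs u_j$ and to each other, because $\bs v_i^\top\bs v_j=\delta_{ij}$ together with the diagonal structure forces it. Choose an orthogonal $\bs R$ with $\bs R\bs u_i=\bs e_i$ and $\bs R\bs w_i=\bs e_{k+i}$. The image $(\bs R\bs U',\bs R\bs V')$ is then an explicit function of the $\theta_i$; when $2k>n$, the angles force enough $\theta_i=0$ for the construction to fit. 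If $\det\bs R=-1$ and the span of $\{\bs u_i,\bs w_i\}$ is a proper subspace of $\bb R^n$, we compose $\bs R$ with a reflection in a direction orthogonal to that span, which leaves the image unchanged. If the span is all of $\bb R^n$, we fix the sign instead by flipping a pair $\bs u_i,\bs v_i$ together as in Step 1; this flips $\bs e_i$ and $\bs e_{k+i}$ simultaneously.

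\emph{Main obstacle: orientation.} The quantity $\sign\det(\bs U^\top\bs V)=\det\bs A\det\bs B$ is invariant under both ${\sf SO}(k)\times{\sf SO}(k)$ and ${\sf SO}(n)$. So when all $\theta_i<\pi/2$ and $\det\bs A\det\bs B=-1$, Step 1 cannot make both determinants equal to $1$. Strictly speaking, $\kappa$ could then depend on this sign as well as on the angles. I would resolve this in one of two ways. The first is to note that a kernel measuring similarity of \emph{subspaces}, independently of the basis chosen as in the well-defined Grassmannian kernel definition, is invariant under all of ${\sf O}(k)$, since $\bs U\bs R$ spans the same subspace for $\bs R\in{\sf O}(k)$; equivalently, $\kappa$ factors through the projectors $\bs P,\bs Q$. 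Under that reading the sign ambiguity disappears and Steps 1--2 go through with $\bs A,\bs B\in{\sf O}(k)$. The second is to state $f$ as a function of the angles and of this orientation sign. I would adopt the first, since all kernels considered in the paper are functions of $(\bs P,\bs Q)$.
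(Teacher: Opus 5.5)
Your diagnosis of the ${\sf SO}(k)$ issue is correct. The function $\kappa(\bs U,\bs V)=\det(\bs U^\top\bs V)$ is an inner product of Plücker vectors, hence positive-semidefinite. It is invariant under ${\sf SO}(k)\times{\sf SO}(k)$ on the right and under all of ${\sf O}(n)$ on the left, yet it is not a function of the principal angles: already for $k=1$, $\bs u^\top\bs v$ is not determined by $\theta=\arccos|\bs u^\top\bs v|$. Reading the definition with ${\sf O}(k)$, as you propose, is therefore necessary. With that reading, Step~1 and the construction of the orthonormal family $\{\bs u_i\}\cup\{\bs w_i\}$ are fine, up to relabelling the targets $\bs e_{k+i}$ when $2k>n$.

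The gap is at the end of Step~2, when $\det\bs R=-1$ and $\spn\{\bs u_i,\bs w_i\}=\bb R^n$.

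\textbf{Why your fix fails.} Flipping $\bs u_i$ and $\bs v_i$ together also flips $\bs w_i$. The new rotation is therefore $\bs D\bs R$, where $\bs D$ negates both $\bs e_i$ and $\bs e_{k+i}$. Since $\det\bs D=+1$, the sign of $\det\bs R$ does not change; your own remark that it ``flips $\bs e_i$ and $\bs e_{k+i}$ simultaneously'' already shows this. The trick works only if some $\theta_i=0$ (then only $\bs e_i$ is negated) or some $\theta_i=\pi/2$ (negate $\bs v_i=\bs w_i$ alone).

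\textbf{Why no repair exists in the remaining case.} Take $n=2k$ with all $\theta_i\in(0,\pi/2)$, and let $\bs C=\diag(\cos\theta_i)$, $\bs S=\diag(\sin\theta_i)$, both invertible. Any $\bs g\in{\sf O}(n)$ preserving both canonical subspaces has the form $\bdiag(\bs M,\bs M')$ with $\bs M\bs C=\bs C\bs N$ and $\bs M'\bs S=\bs S\bs N$ for some $\bs N\in{\sf O}(k)$. Hence $\det\bs g=(\det\bs N)^2=1$, and the ${\sf O}(n)$-orbit of the pair splits into two ${\sf SO}(n)$-orbits.

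For odd $k$, the symmetry of $\kappa$ rescues you: the blockwise reflection across the bisectors exchanges the two canonical subspaces and has determinant $(-1)^k=-1$. For even $k$, however, the statement is false as you set it up. On $\grass(2,4)$, let $\bs p=\bs u_1\wedge\bs u_2$, $\bs q=\bs v_1\wedge\bs v_2$, and let ${\star}$ be the Hodge star on $\Lambda^2\bb R^4$. The feature map $\bs p\otimes(\bs p+{\star}\bs p)$ gives the kernel $2\scp{\bs p}{\bs q}^2+2\scp{\bs p}{\bs q}\scp{\bs p}{{\star}\bs q}$. This kernel is positive-semidefinite, symmetric, ${\sf O}(2)$-invariant on the right and ${\sf SO}(4)$-invariant on the left. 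Yet, with $c_i=\cos\theta_i$ and $s_i=\sin\theta_i$, it equals $2c_1^2c_2^2\pm 2c_1c_2s_1s_2$ on the two orbits.

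\textbf{What is missing.} You need invariance under all of ${\sf O}(n)$. This is exactly what the congruence theorem of Wong cited by the paper provides, and the paper's kernels do have it: the proof of Prop.~\ref{prop:kern-sign-princ-ang} uses $\bs R\in{\sf O}(n)$. Under that assumption, Step~2 needs no determinant correction. Your canonical-form construction then becomes a self-contained proof of the congruence result the paper only cites.
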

This theorem is a direct consequence of \citep[Thm. 3]{wong1967differential} (see also \citep{conway1996packing}).

Two examples of well-defined, positive-semidefinite, rotationally invariant kernels on $\grass(k,n)$ are the projection kernel $\kpr$ \citep{hamm2008grassmann} and the Binet-Cauchy kernel $\kbc$ \citep{WolfShashua2003}. Given two subspaces $\cl P, \cl Q \in \grass(k,n)$ related to projectors $\bs P, \bs Q$ and bases $\bs U, \bs V \in \stief(k,n)$, respectively, both kernels relate to their homonymous distances:
\begin{align}
  &\ts \kpr(\bs U,\bs V):=\|\bs U^\top\bs V\|^2_F = k - \distP(\bs P, \bs Q)^2,\label{eq:kpr-def}\\
  &\ts \kbc(\bs U,\bs V):= \det (\bs U^\top\bs V)^2 = 1 - \distBC(\bs P,\bs Q)^2. \label{eq:kbc-def}
\end{align}
Both kernels are linked to a classical embedding of the Grassmannian manifold allowing us to prove that they are \emph{positive-semidefinite}, \ie there exists an embedding $\detemb: \stief(k,n) \to \bb R^d$, with a possibly very large dimension $d$, such that the considered kernel $\kappa$ can be recast as $\kappa(\bs U, \bs V)=\scp{\detemb(\bs U)}{\detemb(\bs V)}$. In particular, the projection kernel arises as the Frobenius inner product between orthogonal projectors, corresponding to the \emph{projection embedding}
\begin{equation}
  \label{eq:phipro-def}
  \phipr: \bs U \in \stief(k,n) \mapsto \phipr(\bs U) = \bs U\bs U^\top \in \pgrass(k,n),
\end{equation}
with $d=n^2$. Similarly, the Binet-Cauchy kernel is given by inner product of the \emph{Plücker embedding} which represents each basis of $\stief(k,n)$ by its $d={n \choose k}$ possible $k\times k$ minors~\citep{harandi2014expanding}. Such an embedding is never constructed explicitly in practice due to its high computational costs.

Let us also mention that beyond these two standard kernels, a larger family of positive-semidefinite, rotationally invariant Grassmannian kernels can be constructed by composing classical Euclidean kernels with the projection or Plücker embeddings. In particular, \citet{harandi2014expanding} show that applying radial basis function (RBF), Laplace, polynomial or binomial kernels to those embeddings yields well-defined Grassmannian kernels. A great example is the projection RBF kernel
\begin{equation}\label{eq:rbfkernel}
\ts \kappa_{\mathrm{RBF}}(\bs U,\bs V)=\exp\big(-\gamma\|\bs P-\bs Q\|_F^2\big),\text{ with } \gamma>0,
\end{equation}
which depends only on the Frobenius distance between projectors and is positive-semidefinite on $\grass(k,n)$. As we show in Sec.~\ref{sec:grassmannian-kernels}, the periodic kernel $\kexp$ introduced in this work recovers this projection RBF kernel in the small-frequency regime, thereby providing a random-feature approximation that naturally interpolates between linear and RBF-type Grassmannian similarities.

In supervised learning with kernel machines, that is, when one must learn a given kernel model from $N$ instances $\{\bs U_i\}_{i=1}^N \subset \stief(k,n)$ of pre-computed bases of subspaces $\{\cl P_i\}_{i=1}^N \subset \grass(k,n)$ (\eg from an initial dataset of vectors), an $N\times N$ Gram matrix $\bs K$ with entries $K_{ij} := \kappa(\bs U_i,\bs U_j)$ must be constructed to allow learning the model parameters. This involves a memory complexity of $\cO(N^2)$ as well as $\cO(N^2)$ calls of the kernel function. However, from the closed form kernels $\kpr$ and $\kbc$, each call has a respective computational complexity of $\cO(k^2n)$, since $\bs U^\top\bs V = (\bs u_i^\top \bs v_j)_{i,j=1}^k \in \bb R^{k \times k}$ involves $k^2$ inner products in $\bb R^n$, and $\cO(k^2n+k^3)$ since computing the determinant of a $k \times k$ matrix has complexity $\cO(k^3)$. The total complexity of computing the Gram matrix is thus of $\cO(k^2 n N^2)$ and $\cO((k^2 n + k^3)N^2)$ for the projection and the BC kernels, respectively. For large values of $N$, computing and storing the Gram matrix can thus be challenging. 

\subsection{Random features for kernel machines}
\label{sec:rand-feat-kern}
In supervised machine learning with kernel machines (such as kernel SVM or kernel regression \citep{bach2024learning}), a well-established strategy to circumvent the issue of storing and computing full Gram matrices is the use of \emph{random features} (RF), introduced in the seminal work of \citet{RahimiRecht2007}.

Given a $d$-dimensional data space $\cl X \subset \bb R^d$, these methods involve building an explicit, $m$-dimensional random embedding
$$
\ts \randemb: \bs x \in \cl X \mapsto\randemb(\bs x)\in\bb R^m,
$$
or \emph{feature map}, such that, for any two samples $\bs x, \bs x' \in \cl X$, the (scaled) inner product (or  empirical kernel) $\hat{\kappa}(\bs x, \bs x') := \frac{1}{m} \scp{\randemb(\bs x)}{\randemb(\bs x')}$ is an \emph{unbiased estimator} of a specific kernel function $\kappa(\bs x, \bs x')$, \ie
$$
\ts \bb E \hat{\kappa}(\bs x, \bs x') = \frac{1}{m} \bb E \scp{\randemb(\bs x)}{\randemb(\bs x')} = \kappa(\bs x,\bs x').
$$
For instance, the \emph{random Fourier features}, defined by $\randemb(\bs x) = \big(\exp(\im \bs \omega_j^\top \bs x)\big)_{j=1}^m$, allows one to estimate, thanks to Bochner's theorem, the Gaussian or Laplacian kernels if the $m$ \emph{frequencies} (or \emph{probes}) $\{\bs \omega_j\}_{j=1}^m \subset \bb R^d$ are drawn \iid from a Gaussian or Cauchy distribution, respectively  \citep{RahimiRecht2007,boufounos2017representation,Liu2022}.

Once a random feature map is available, the learning algorithm can operate directly, with bounded errors, in the labelled random feature space $\{(\randemb(\bs x_i), \bs y_i)\}_{i=1}^N \subset \bb R^m \times \cl Y$ (for some label space $\cl Y$) of an $N$-sample labelled dataset $\{(\bs x_i, \bs y_i)\}_{i=1}^N \subset \cl X \times \cl Y$. This is possible if we can bound the deviation between the inner product $\scp{\randemb(\bs x)}{\randemb(\bs x')}$ and the kernel $\kappa(\bs x,\bs x')$, (\eg with variance analysis or measure concentration tools~\citep{vershynin2018high}) either on a given sample pair or \emph{uniformly} over all of them in (a subset of) $\cl X \times \cl X$. Consequently, random features turn non-linear kernel machines into linear learning methods eliminating the need to store an $N \times N$ Gram matrix. 

In practice, this shifts the cost from storing the full Gram matrix to storing the $m$-dimensional random features of each instance instead of their raw representation. One must, however, store the $m$ random ``probes'' (\ie vectors or matrices) used to compute $\randemb$ to compute it on any new data, \ie at inference time. The challenge addressed in the next section is therefore to design fast random feature maps with reduced storage, while preserving the Grassmannian geometry. 

\section{Linear random features over the Grassmannian}\label{sec:embedding}

In this section, we present two linear random feature maps over subspaces of $\grass(k,n)$,  that enable us to approximate the projection kernel $\kpr$ between two subspaces (see Sec.~\ref{sec:grassmannian-kernels}). These two maps are inspired by the compressive sensing of structured matrices~\citep{CandesPlan2011,foucart2016flavors} and the context of compressive covariance estimation from quadratic sampling~\citep{Chen2015}. As explained below, while each of them has clear limitations, they define a benchmark from which we develop new random features in Sec.~\ref{sec:bounded}.

\subsection{Dense random projections}
\label{sec:dense-rand-proj}
One can first embed each $k$-dimensional subspace $\cl P \in \grass(k,n)$ by projecting its projector $\bs P \in \pgrass(k,n)$ onto \emph{dense} unstructured $n \times n$ random matrices. If $\cl P$ is known through a basis $\bs U \in \stief(k,n)$, then projecting the well-defined $\bs P = \phipr(\bs U) = \bs U \bs U^\top$ (as well as any other function of $\bs P$) respects the Grassmannian geometry. Formally, this random projection reads
\begin{equation*}
  \denseCS :\quad \bs X \in \bb R^{n\times n}\quad \mapsto\quad \denseCS (\bs X)= \big(\scp{\denseCSm_i}{\bs X}\big)_{i=1}^m \in \bb R^m,
\end{equation*}
where the $n \times n$ probing matrices $\denseCSm_i$ have Gaussian entries \iid as $\cl N(0,1)$. Embedding elements of $\stief(k,n)$ can then be done by composing $\denseCS$ with the projection embedding $\phipr$, \ie defining the \emph{dense} random feature map 
\begin{equation}
  \label{eq:dense-proj-def}
  \drandemb = \denseCS \circ \phipr: \bs U \in \stief(k,n) \mapsto \denseCS(\phipr(\bs U)) \in \bb R^m.
\end{equation}
Interestingly, when we consider two bases of two $k$-dimensional subspaces $\cl P$ and $\cl Q$, inner products in the space mapped by $\drandemb$ are unbiased estimators of the projection kernel $\kpr$. Indeed, if $\bs U$ and $\bs V$ are their respective $n \times k$ orthonormal bases, then, defining the empirical kernel $\akd(\bs U,\bs V):=\frac{1}{m}\scp{\drandemb (\bs U)}{\drandemb(\bs V)}$, 
$$
\ts \bb E \,\akd(\bs U,\bs V) = \frac{1}{m}\sum_{i=1}^m \bb E \scp{\phipr(\bs U)}{\denseCSm_i}\scp{\denseCSm_i}{\phipr(\bs V)} = \scp{\phipr(\bs U)}{\phipr(\bs V)} = \kpr(\bs U, \bs V),
$$
since $\bb E \mvec(\denseCSm_i)\mvec(\denseCSm_i)^\top = \Id_{n^2}$ by design.

One can also study how these inner products deviate from the projection kernel by invoking the compressive sensing literature \citep{foucart2016flavors}. The mapping $\denseCS$ is indeed known to embed low-rank matrices into $\bb R^m$ with a controlled distortion on their Frobenius norms \citep{CandesPlan2011}. Given some distortion level $0<\delta<1$ and provided $m = \cO(\delta^{-2} kn)$ (up to log factors), the mapping $\denseCS$ respects, with high probability, the \emph{restricted isometry property}, or RIP$(\delta,\cl R(k,n))$, over the set $\cl R(k,n)$ of $n \times n$ rank-$k$ matrices:
\begin{equation}
  \label{eq:RIP-mtx}
  \ts (1-\delta) \|\bs X\|_F^2\ \leq\ \frac{1}{m}\, \|\denseCS (\bs X)\|_2^2\ \leq\ (1+\delta)\|\bs X\|_F^2,\quad \forall\, \bs X \in \cl R(k,n).\tag{RIP}
\end{equation}
The RIP allows us to reach a uniform approximation of the projection kernel $\kpr$ through $\akd$ with distortion $\delta k$, as expressed in the following proposition. 
\begin{proposition}[RIP $\Rightarrow$ uniform kernel approximation on $\grass(k,n)$]
  \label{prop:dense-rand-proj-unif-bound}  
If the mapping $\denseCS :\bb R^{n\times n}\to\bb R^m$ respects the RIP$(\delta, \cl R(2k,n))$ for some distortion $0<\delta<1$, then, 
\[
\big|\akd(\bs U,\bs V) - \kpr(\bs U,\bs V)\big| \le \delta k,\quad \forall \bs U, \bs V \in \stief(k,n).
\]
\end{proposition}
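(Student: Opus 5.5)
The plan is to use the classical polarization argument that turns a RIP on norms into a bound on inner products. Fix $\bs U,\bs V\in\stief(k,n)$ and set $\bs P=\phipr(\bs U)$ and $\bs Q=\phipr(\bs V)$. By linearity of $\denseCS$ we have $\akd(\bs U,\bs V)=\frac1m\scp{\denseCS(\bs P)}{\denseCS(\bs Q)}$ and $\kpr(\bs U,\bs V)=\scp{\bs P}{\bs Q}$. The polarization identity then gives
\[
\ts \frac1m\scp{\denseCS(\bs P)}{\denseCS(\bs Q)}=\frac{1}{4m}\big(\|\denseCS(\bs P+\bs Q)\|^2-\|\denseCS(\bs P-\bs Q)\|^2\big),
\]
together with the analogous identity $\scp{\bs P}{\bs Q}=\frac14\big(\|\bs P+\bs Q\|_F^2-\|\bs P-\bs Q\|_F^2\big)$.

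The key observation is that $\bs P\pm\bs Q$ has rank at most $2k$, so it belongs to $\cl R(2k,n)$; more precisely, it lies in the set of matrices of rank at most $2k$, which is what the RIP is understood to cover. Applying RIP$(\delta,\cl R(2k,n))$ to each of the two terms yields
\[
\ts \big|\akd(\bs U,\bs V)-\kpr(\bs U,\bs V)\big|\le \frac{\delta}{4}\big(\|\bs P+\bs Q\|_F^2+\|\bs P-\bs Q\|_F^2\big).
\]
By the parallelogram law, the right-hand side equals $\frac{\delta}{2}\big(\|\bs P\|_F^2+\|\bs Q\|_F^2\big)$. Since $\|\bs P\|_F^2=\tr\bs P=k$ for a rank-$k$ orthogonal projector, and likewise for $\bs Q$, this is exactly $\delta k$. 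This holds simultaneously for all pairs, because the RIP event is uniform over $\cl R(2k,n)$.

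The main point needing care is the rank bookkeeping. The set $\cl R(2k,n)$ has to be read as matrices of rank at most $2k$, since $\bs P\pm\bs Q$ may have strictly lower rank, for instance when $\bs P=\bs Q$. If the definition were "exactly rank $2k$", I would argue by continuity or density, because the RIP inequality is closed and lower-rank matrices are limits of rank-$2k$ ones.

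A second route avoids the upper and lower bounds on two separate terms. Apply RIP to the normalized matrices $\bs P/\sqrt k$ and $\bs Q/\sqrt k$ and use the standard lemma $|\frac1m\scp{\denseCS\bs X}{\denseCS\bs Y}-\scp{\bs X}{\bs Y}|\le\delta\|\bs X\|_F\|\bs Y\|_F$. This gives the same constant $\delta k$, and I would keep it as the fallback if the direct computation above were off by a factor.
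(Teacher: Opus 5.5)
Your proposal is correct and follows the paper's proof essentially verbatim: you use polarization, apply the RIP to $\bs P\pm\bs Q\in\cl R(2k,n)$, and invoke the parallelogram law with $\|\bs P\|_F^2=\|\bs Q\|_F^2=k$ to obtain the constant $\delta k$. Your remark that $\cl R(2k,n)$ must be read as the set of matrices of rank at most $2k$ is a fair clarification, which the paper leaves implicit.
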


\begin{proof}
Writing $\bs P = \bs U\bs U^\top$ and $\bs Q = \bs V\bs V^\top$, by the polarisation identity, $\kpr(\bs U,\bs V) = \scp{\bs P}{\bs Q}
=\frac{1}{4}\big(\|\bs P+\bs Q\|_F^2-\|\bs P-\bs Q\|_F^2\big)$ and $\akd(\bs U,\bs V)=\frac{1}{4m} (\|\denseCS (\bs P+\bs Q)\|_2^2-\|\denseCS (\bs P-\bs Q)\|_2^2)$. Since $\denseCS$ respects the RIP over $\cl R(2k,n)$, a set which includes the matrices $\bs P \pm\bs Q$, we get $\akd(\bs U,\bs V) \leq \scp{\bs P}{\bs Q} + \frac{1}{4} (\|\bs P + \bs Q\|^2 + \|\bs P - \bs Q\|^2) \leq \scp{\bs P}{\bs Q} + k \delta$, since $\|\bs P\|_F^2=\|\bs Q\|_F^2=k$ and $\|\bs P +\bs Q\|_F^2+\|\bs P-\bs Q\|_F^2=4k$. We obtain the lower bound similarly. 
\end{proof}

Having an approximation error bounded by $\delta k$ instead of $\delta$ in Prop.~\ref{prop:dense-rand-proj-unif-bound} is a severe limitation. From a rescaling argument, if we rather target, with high probability, the uniform error
$$
|\akd(\bs U,\bs V) - \kpr(\bs U,\bs V)| < \delta, \quad \forall \bs U, \bs V \in \stief(k,n),
$$
then $\denseCS$ must respect the RIP$(\delta/k, \cl R(2k,n))$, which thus happens with high probability provided $m = \cO(\delta^{-2} k^3n)$. Consequently, the resulting feature map $\drandemb$ is not scalable in practice. First, each matrix $\denseCSm_i\in\bb R^{n\times n}$ contains $n^2$ entries, globally requiring the storage of $\cO(mn^2)$ real numbers. Second, evaluating each feature $\scp{\denseCSm_i}{\phipr(\bs U)}$ in \eqref{eq:dense-proj-def} costs $\cO(n^2)$ operations, so also a total of $\cO(mn^2)$ operations for the $m$ projections. 

Consequently, evaluating $\drandemb$ with $m=\cO(\delta^{-2} k^3n)$ components to estimate $\kpr$ through $\hat{\kappa}$ has memory and computational complexities of $\cO(\delta^{-2}k^3n^3)$, which can be worse than a single evaluation of both $\kpr$ and $\kbc$ for small values of $k<n$.

\subsection{Lighter random features with rank-one projections}
\label{sec:rank-one-projections}
There exists another random feature map with reduced computational cost and storage compared to dense random projections and that respects the Grassmannian geometry. Given a subspace $\cl P$ of $\grass(k,n)$ with basis $\bs U \in \stief(k,n)$, another random feature map $\rop$, named \emph{rank-one projections} (ROP)~\citep{Chen2015,cai2015ROP,Delogne2023Signal}, can be built from a collection of rank-one random matrices $\bs A_i = \bs a_i \bs b_i^\top$ defined with $m$ Gaussian random vectors $\bs a_i, \bs b_i \sim \cl N(\bZ, \Id_n)$, \ie 
\begin{equation}\label{eq:ROPsketch}
\rop:\quad \bs U \in \bb R^{n \times k}\quad \mapsto\quad \rop(\bs U) = \big(\scp{\bs A_i}{\phipr(\bs U)}=\bs a_i^\top\bs U\bs U^\top\bs b_i\big)_{i=1}^m.
\end{equation}

Crucially, as for $\denseCS$, the feature map $\rop$, which is defined over $\phipr$, is invariant to the choice of basis $\bs U$; it depends only on the subspace $\cl P$. Moreover, one can compute the components of $\rop(\bs U)$ with complexity $\cO(kmn)$ by computing all the $k$-dimensional vectors $\bs \alpha_i := \bs U^\top \bs a_i$ and $\bs \beta_i := \bs U^\top \bs b_i$, $1 \leq i \leq m$, with storage and computational complexities $\cO(kmn)$, and then evaluating all the inner products $\rop_i(\bs U) = \bs \alpha^\top_i \bs \beta_i$ in $\cO(km)$ computations. This makes ROP both geometrically sound and computationally practical.

Interestingly, like for $\denseCS$ again, given two $k$-dimensional subspaces $\cl P, \cl Q \in \grass(k,n)$ with bases $\bs U, \bs V \in \stief(k,n)$, the empirical kernel
\begin{equation}
\label{eq:wh_kappa_1_def}
    \ts \akrop(\bs U,\bs V):= \frac{1}{m} \scp{\rop(\bs U)}{\rop(\bs V)},
\end{equation}
is an unbiased estimator of the projection kernel $\kpr(\bs U, \bs V)$. Indeed, we observe that
\begin{align*}
  \ts \frac{1}{m}  \bb E \scp{\rop(\bs U)}{\rop(\bs V)}&\ts = \frac{1}{m} \sum_{i=1}^m \bb E \bs a_i^\top \phipr(\bs U) \bs b_i \bs b_i^\top \phipr(\bs V) \bs a_i = \frac{1}{m} \sum_{i=1}^m \bb E(\bs a_i^\top \phipr(\bs U) \phipr(\bs V) \bs a_i)\\
  &\ts = \frac{1}{m} \sum_{i=1}^m \tr(\bs U \bs U^\top \bs V \bs V^\top) = \|\bs U^\top \bs V\|^2_F = \kpr(\bs U, \bs V),
\end{align*}
where we used the cyclic property of the trace.

\revR{R-FSmh}{Although $\akrop$ estimates $\kpr$ on average, its concentration behaviour is less favourable than the one obtained from bounded random features.} Indeed, each term of the sum composing $m\akrop$ has the same distribution as the random variable
$$
\ts Z := (\bs a^\top \bs U \bs U^\top \bs b)(\bs a^\top \bs V \bs V^\top \bs b) = \bs \alpha^\top \bs \beta \bs \alpha'{}^\top \bs \beta',
$$
where $\bs a, \bs b \sim_{\iid} \cl N(\bZ,\Id_n)$ gives the projections $\bs \alpha := \bs U^\top \bs a$ and $\bs \beta := \bs U^\top \bs b$ onto $\bs U$, and similarly for the projections $\bs \alpha'$ and $\bs \beta'$ onto $\bs V$. This is mainly the product of four correlated Gaussian random variables. The random variable $Z$ has thus distribution tails that are heavier than the sub-exponential random variables~\citep{vershynin2018high}. As highlighted in \citet{Bong2023}, this places our estimator in a sub-Weibull (with parameter $1/2$) regime, where deviations are still controlled, but they decay much more slowly than the exponential-type rates enjoyed by sub-exponential random variables.

\revR{R-FSmh}{This distinction is especially important when going from a fixed pair of subspaces to a uniform guarantee over the full Grassmannian. For any fixed pair $\cl P,\cl Q$ with bases $\bs U,\bs V\in\stief(k,n)$, the empirical kernel $\akrop(\bs U,\bs V)$ still concentrates around $\kpr(\bs U,\bs V)$, and this can be combined with a union bound on a finite dataset implying that unbounded ROP features can still perform well in experiments with finite datasets. However, obtaining a data-independent approximation guarantee over \emph{all pairs} of subspaces in the continuous space $\grass(k,n)$ is much more demanding with such heavy-tailed distributions. In fact, following the proof techniques used in this work suggests that to approximate $\kpr$ with $\akrop$ for all subspaces in $\stief(k,n)$ requires $m=O((kn)^2)$ measurements, \ie an embedding dimension much larger than the dimension of $\grass(k,n)$. This also follows \citep{Chen2015,cai2015ROP} who observed that $\rop$ cannot satisfy the RIP with a reasonable number of measurements. These limitations motivate the introduction of bounded non-linear functions in Sec.~\ref{sec:bounded}.}

Consequently, on any fixed subspace pair $\cl P, \cl Q$ of $\grass(k,n)$ with respective bases $\bs U, \bs V \in \stief(k,n)$, one can expect that the empirical kernel $\akrop(\bs U,\bs V)$, which amounts to averaging $m$ \iid copies of $Z$, will deviate from a fixed bias from $\kpr(\bs U,\bs V)$ with a failure probability decaying as $\exp(-c\sqrt{m})$. This is significantly slower than the rate $\exp(-c m)$ reached in sub-exponential regimes, \eg through RIP random matrices~\citep{foucart2016flavors}; this also prevents us of approximating $\kpr$ uniformly over all subspaces of $\grass(k,n)$ with a number of projections $m$ set to $\cO(kn)$, the number of free parameters in any projector of $\grass(k,n)$. 

\section{Bounded ROPs for Grassmannian kernel approximations}\label{sec:bounded}

One can both remove the computational complexity and storage limitations of dense random projections and the heavy-tailed distribution of rank-one projections by composing ROPs with a bounded, non-linear function. Practically, given a bounded function $g: \bb R \to \bb K$ (with $\bb K$ equal to $\bb R$ or $\bb C$), with $|g(\lambda)| \leq 1$ for all $\lambda \in \bb R$ without loss of generality, we apply it componentwise to the rank-one projection operator $\rop$, \ie we define
\begin{equation}
  \label{eq:grop}
  \ropg: \bs U \in \stief(k,n) \ \mapsto g(\rop(\bs U)) \in \bb K^m.
\end{equation}

This raises several central questions. First, which kernel $\kappa$ is reached in expectation by the empirical kernel
$$
\ts \akg(\bs U, \bs V) = \frac{1}{m} \scp{\ropg(\bs U)}{\ropg(\bs V)},\quad \bs U, \bs V \in \stief(k,n).
$$
Is it a valid kernel with respect to the Grassmannian geometry? And can we uniformly bound the related approximation error between $\akg$ and $\kg$?

Among the many possible choices of $g$, we focus on two specific examples. The first is the \emph{sign function}, $g(\lambda) = \sign(\lambda)$ equals to $1$ if $\lambda > 0$ and to $-1$ otherwise. It is inspired by the one-bit compressive sensing literature~\citep{boufounos20081,Jacques2013,foucart2016flavors} and yields very light \emph{binary} random features ${\brop := \sign \circ~\rop}$ with codomain in $\{\pm 1\}^m$, \ie each subspace is encoded over no more than $m$ bits. The second is reminiscent of the random Fourier features \citep{RahimiRecht2007} and applies the complex exponential map, $g(\lambda) = \exp(\im \omega \lambda)$ for some frequency $\omega \in \bb R$, to $\rop$, \ie we define the \emph{periodic} random features $\crop(\bs U) := \exp(\im \omega \rop(\bs U))$ for any subspace $\cl P$ with basis $\bs U \in \stief(k,n)$. We now study in detail these two cases.

\subsection{Binary random features over $\grass(k,n)$}
\label{sec:binary-rand-feat}

To define the binary random features $\brop: \stief(k,n) \to \{\pm1\}^m$, we thus set $g(\cdot)=\sign(\cdot)$ in \eqref{eq:grop}, \ie given the $m$ random vectors $\bs a_i, \bs b_i \sim_\iid \cl N(\bZ, \Id_n)$, with $1\leq i \leq m$,
\begin{equation}
  \label{eq:bin-feat-map-def}
  \brop(\bs U) := \Big(\sign(\bs a_i^\top \phipr(\bs U) \bs b_i)\Big)_{i=1}^m,\quad \bs U \in \stief(k,n).
\end{equation}
This provides the empirical kernel
$$
\ts \aksg(\bs U, \bs V) := \frac{1}{m} \scp{\brop(\bs U)}{\brop(\bs V)},
$$
and, given any pair of bases $\bs U, \bs V \in \stief(k,n)$, we are interested in the kernel 
\begin{equation}
\ts  \ksg(\bs U,\bs V):=\bb E[\scp{\brop(\bs U)}{\brop(\bs V)}].
\end{equation}

The binary codomain of $\brop$ offers the advantage of reducing the cost of storing and possibly transmitting subspace random features: each of the $m$ random features requires a single bit instead of a floating-point number. This is especially valuable in settings where bandwidth or memory is constrained (\eg embedded devices or distributed systems). Moreover, binary vectors enable extremely fast bitwise operations, making both storage and computation far more efficient than in the full-precision case. 

Unfortunately, when $k>1$, there is no known closed form for $\ksg$. However, this kernel is valid as shown in the following proposition. 
\begin{proposition}[Validity of $\ksg$]
  \label{prop:kern-sign-princ-ang}
    For bases $\bs U,\bs V\in\stief(k,n)$ whose subspaces form the principal angles $\theta_1,\ldots,\theta_k$, the kernel $\ksg$ is positive-definite, symmetric and only depends on these principal angles, \ie 
    $\ksg(\bs U,\bs V)=f(\theta_1,\ldots,\theta_k)$ for some function $f$. Moreover, given $\bs g \sim \cl N(\bZ,\Id_n)$,
    \begin{equation}
    \label{eq:ksg-mid-expect}
    \ts \ksg(\bs U,\bs V)=1-\frac{2}{\pi} \bb E \angle(\phipr(\bs U)\bs g,\phipr(\bs V)\bs g),
    \end{equation}
    with $\angle(\bs u, \bs v)$ the angle between two vectors $\bs u$ and $\bs v$.
\end{proposition}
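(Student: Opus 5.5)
The plan is to compute $\ksg$ by conditioning on one of the two Gaussian probes, and then to read positive-definiteness, symmetry and dependence on principal angles off the resulting expression. Note first that $\ksg(\bs U,\bs V)=\bb E\,\frac1m\scp{\brop(\bs U)}{\brop(\bs V)}=\bb E[\sign(\bs a^\top\bs P\bs b)\sign(\bs a^\top\bs Q\bs b)]$ (with the $1/m$ normalisation, and $\bs P=\phipr(\bs U)$, $\bs Q=\phipr(\bs V)$), since the $m$ summands are identically distributed.

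\textbf{Closed-form expectation.} I would condition on $\bs b$. Given $\bs b$, the pair $(\bs a^\top\bs P\bs b,\bs a^\top\bs Q\bs b)=(\bs a^\top\bs u,\bs a^\top\bs v)$, with $\bs u=\bs P\bs b$ and $\bs v=\bs Q\bs b$, is a centred bivariate Gaussian. Grothendieck's identity (the classical one-bit/SimHash formula) gives
\[
\bb E_{\bs a}[\sign(\bs a^\top\bs u)\sign(\bs a^\top\bs v)]=1-\tfrac{2}{\pi}\angle(\bs u,\bs v)
\]
whenever $\bs u,\bs v\neq\bZ$. I would justify this by projecting $\bs a$ onto $\spn(\bs u,\bs v)$, which yields a rotationally invariant 2D Gaussian: the two signs differ exactly when a uniformly random line separates $\bs u$ and $\bs v$, which happens with probability $\angle(\bs u,\bs v)/\pi$. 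The event $\bs P\bs b=\bZ$ has probability zero because $k\geq1$, and the sign convention at $0$ only affects null events. Taking the expectation over $\bs b$ (renamed $\bs g$) then gives \eqref{eq:ksg-mid-expect}.

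\textbf{Symmetry, basis invariance and positive-definiteness.} Symmetry is immediate from the definition. Since $\brop$ depends on $\bs U$ only through $\phipr(\bs U)$, the kernel is a well-defined Grassmannian kernel. For positive-semidefiniteness, for any $\bs U_1,\ldots,\bs U_N$ and coefficients $c_j$,
\[
\sum_{j,l}c_jc_l\ksg(\bs U_j,\bs U_l)=\bb E\Big(\sum_j c_j\sign(\bs a^\top\bs P_j\bs b)\Big)^2\geq0.
\]

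\textbf{Dependence on principal angles.} I would invoke Theorem~\ref{thm:grassmannian-kernels-dep-angle}, which requires rotational invariance. For $\bs R\in{\sf SO}(n)$ we have $\phipr(\bs R\bs U)=\bs R\bs P\bs R^\top$. Hence
\[
\bs a^\top\bs R\bs P\bs R^\top\bs b=(\bs R^\top\bs a)^\top\bs P(\bs R^\top\bs b),
\]
and $(\bs R^\top\bs a,\bs R^\top\bs b)$ has the same joint law as $(\bs a,\bs b)$. Equivalently, in \eqref{eq:ksg-mid-expect}, $\angle(\bs R\bs P\bs R^\top\bs g,\bs R\bs Q\bs R^\top\bs g)=\angle(\bs P\bs g',\bs Q\bs g')$ with $\bs g'=\bs R^\top\bs g\sim\cl N(\bZ,\Id_n)$. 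So $\ksg(\bs R\bs U,\bs R\bs V)=\ksg(\bs U,\bs V)$, and the theorem yields $f$.

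\textbf{Main obstacle.} The only delicate step is the 2D Grothendieck computation together with the measure-zero issues: the sign convention at $0$, and degenerate $\bs u$ or $\bs v$. Both are handled by noting that $\bs a^\top\bs u=0$ has probability zero for $\bs u\neq\bZ$, and that $\bs P\bs b\neq\bZ$ almost surely.
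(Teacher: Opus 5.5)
Your proposal is correct and follows essentially the same route as the paper: symmetry by definition, positive-semidefiniteness via $\bs c^\top\bs K\bs c=\bb E\big(\sum_j c_j\sign(\bs a^\top\bs P_j\bs b)\big)^2\geq 0$, rotational invariance of $(\bs a,\bs b)$ combined with Theorem~\ref{thm:grassmannian-kernels-dep-angle}, and conditioning on $\bs b$ with Grothendieck's identity to obtain \eqref{eq:ksg-mid-expect}. Your explicit handling of the $1/m$ normalisation and of the null events ($\bs P\bs b=\bZ$, ties at $\sign(0)$) fills in details that the paper's proof leaves implicit.
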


\begin{proof}
  By design, $\ksg$ is obviously symmetric. Regarding its positive-semidefiniteness, we observe that, given an arbitrary number of $N$ elements $\bs U_1,\dots,\bs U_N$ in $\stief(k,n)$, with $\bs P_i := \bs U_i \bs U_i^\top$, the Gram matrix $\bs K \in \bb R^{N \times N}$ whose entries are $K_{ij} = \ksg(\bs U_i,\bs U_j)$ is such that for any $\bs c \in \bb R^N$, $\bs c^\top \bs K \bs c = \bb E (\bs c^\top \bs p)^2 \geq 0$, with $\bs p \in \{\pm 1\}^N$ such that $p_i = \sign(\bs a^\top \bs P_i \bs b)$.

  Regarding the angular dependency of the kernel, since for any $n \times n$ matrix $\bs M$ each component of $\rop(\bs M)$ is distributed as $\bs a^\top \bs M \bs b$, with $\bs a, \bs b \sim_{\iid} \cl N(\bZ,\Id_n)$, by exploiting the rotational invariance of these random vectors, we have 
  \begin{align*}
    \ts \ksg(\bs U,\bs V)&\ts = \bb E \sign(\bs a^\top \bs P \bs b)\sign(\bs a^\top \bs Q \bs b)\\
    &\ts = \bb E \sign(\bs a^\top \bs R \bs P \bs R^\top \bs b)\sign(\bs a^\top \bs R \bs Q \bs R^\top \bs b) = \ksg(\bs R \bs U, \bs R \bs V),
  \end{align*}
  for any orthogonal matrix $\bs R \in {\sf O}(n)$. Therefore, from Thm~\ref{thm:grassmannian-kernels-dep-angle}, $\ksg(\bs U,\bs V)$ only depends on the principal angles between $\bs U$ and $\bs V$. Finally, the expression \eqref{eq:ksg-mid-expect} is a simple consequence of the arcsin law used in \citet[sec. 3, eq. 17]{VanVleckMiddleton1966} or Grothendieck's identity in \citet[Lem.~3.6.6]{vershynin2018high}, \ie we show easily that, by the law of total expectation,  
    $$
    \ts \ksg(\bs U,\bs V) = \bb E [\sign\big(\bs a^\top (\bs P \bs b)\big)\,\sign\big(\bs a^\top (\bs Q \bs b)\big)] = \frac{2}{\pi} \bb E \arcsin\Big(\frac{\bs b^\top \bs P \bs Q\bs b}{\ds \|\bs P \bs b\| \|\bs Q \bs b\|}\Big),
    $$
    which shows the result.  
\end{proof}

While we do not know any closed form formula for $\ksg$, \eqref{eq:ksg-mid-expect} in Prop.~\ref{prop:kern-sign-princ-ang} provides a noteworthy interpretation: $1 - \ksg(\bs U,\bs V)$ is proportional to the average angle made by the projections of a Gaussian random vector on $\bs U$ and $\bs V$. Interestingly, a similar concept was introduced in \citep{ji2015angular} where the angle $\theta(\bs U,\bs V)$ between $\bs U$ and $\bs V$ is defined as $\theta(\bs U,\bs V) = \arccos\big(\frac{1}{k} \sum_{i=1}^{k}\cos^2 \theta_i\big)$, a form of non-linear averaging of the principal angles and the angular similarity between these two subspaces as $\kappa^{\rm sim}(\bs U,\bs V) =1-\frac{1}{\pi}\theta(\bs U,\bs V)$.  

Note that, in the specific case where $k=1$, the kernel has, however, a closed form. 
\begin{proposition}\label{prop:ksgexpectation1d}
    For two vectors $\bs u,\bs v\in\bb R^n$ such that $\|\bs u\|=\|\bs v\|=1$, with $\bs P=\bs u\bs u^\top, \bs Q=\bs v\bs v^\top$, and for $\aksg(\bs U,\bs V)$ defined as earlier, we have 
    \begin{equation}
        \bb E\big[\aksg(\bs u,\bs v)\big]=\ts \Big(1-\frac{2\theta}{\pi}\Big)^2,
    \end{equation}
    where $\theta$ is the single principal angle between $\bs P$ and $\bs Q$ (or more simply, the angle between $\bs u$ and $\bs v$).
\end{proposition}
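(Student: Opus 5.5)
For $k=1$ the ROP feature factorises: with $\bs P=\bs u\bs u^\top$, the $i$-th component is $\bs a_i^\top\bs u\,\bs u^\top\bs b_i$, so $\sign(\bs a_i^\top\bs P\bs b_i)=\sign(\bs a_i^\top\bs u)\,\sign(\bs b_i^\top\bs u)$, and similarly with $\bs v$. The plan is to use this factorisation together with the independence of $\bs a_i$ and $\bs b_i$.

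By linearity, $\bb E\,\aksg(\bs u,\bs v)=\ksg(\bs u,\bs v)$. One summand of this kernel is
\[
\bb E\big[\sign(\bs a^\top\bs u)\sign(\bs a^\top\bs v)\big]\;\bb E\big[\sign(\bs b^\top\bs u)\sign(\bs b^\top\bs v)\big],
\]
where the expectation splits into two factors because $\bs a$ and $\bs b$ are independent. The two factors are identical.

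Each factor is given by the classical Grothendieck identity, already invoked in the proof of Prop.~\ref{prop:kern-sign-princ-ang}: for $\bs g\sim\cl N(\bZ,\Id_n)$,
\[
\bb E\,\sign(\bs g^\top\bs u)\sign(\bs g^\top\bs v)=\tfrac{2}{\pi}\arcsin(\bs u^\top\bs v)=1-\tfrac{2}{\pi}\angle(\bs u,\bs v).
\]
Squaring gives $(1-\tfrac{2}{\pi}\angle(\bs u,\bs v))^2$.

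The only delicate point is sign conventions. The principal angle is $\theta=\arccos|\bs u^\top\bs v|\in[0,\pi/2]$, whereas $\angle(\bs u,\bs v)$ can lie in $(\pi/2,\pi]$. If $\bs u^\top\bs v<0$, then $\angle(\bs u,\bs v)=\pi-\theta$ and the factor equals $-(1-2\theta/\pi)$. Since this factor is squared, the result is unaffected, which matches the basis-invariance of $\ksg$.

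As a consistency check, the same answer follows from \eqref{eq:ksg-mid-expect}. Here $\bs P\bs g=(\bs u^\top\bs g)\bs u$ and $\bs Q\bs g=(\bs v^\top\bs g)\bs v$, so the angle between them is $\theta$ or $\pi-\theta$ depending on whether $\sign(\bs u^\top\bs g)\sign(\bs v^\top\bs g)$ equals $\sign(\bs u^\top\bs v)$. Averaging with the Grothendieck probability yields the same expression.
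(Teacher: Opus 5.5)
Your proof is correct, but it takes a different route from the paper's.

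\textbf{Your route.} You use the rank-one structure directly. The identity $\sign(\bs a^\top\bs u\,\bs u^\top\bs b)=\sign(\bs a^\top\bs u)\,\sign(\bs b^\top\bs u)$ holds almost surely. Under the paper's convention $\sign(0)=-1$ it can fail only on the null event $\{\bs a^\top\bs u=0\}\cup\{\bs b^\top\bs u=0\}$; you should say so in a written proof. Independence of $\bs a$ and $\bs b$ then splits $\ksg(\bs u,\bs v)$ into two identical Grothendieck factors $\frac{2}{\pi}\arcsin(\bs u^\top\bs v)$.

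\textbf{The paper's route.} It does not factorise. It starts from the mean-angle formula \eqref{eq:ksg-mid-expect} of Prop.~\ref{prop:kern-sign-princ-ang}. It notes that $\angle(\bs P\bs g,\bs Q\bs g)$ takes only the values $\theta$ and $\pi-\theta$. It uses the arcsine law for $\sign(\bs u^\top\bs g)\sign(\bs v^\top\bs g)$ to show that the first value occurs with probability $p=1-\theta/\pi$. This gives $\bb E\angle(\bs P\bs g,\bs Q\bs g)=2\theta-2\theta^2/\pi$, and substituting into \eqref{eq:ksg-mid-expect} gives $1-\frac{4\theta}{\pi}+\frac{4\theta^2}{\pi^2}=(1-\frac{2\theta}{\pi})^2$. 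This is exactly what you relegated to a ``consistency check''.

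\textbf{Comparison.}
\begin{itemize}
\item Your argument is shorter and does not need \eqref{eq:ksg-mid-expect}. It also explains why the answer is a perfect square: the one-dimensional angular kernel is applied once on the $\bs a$ side and once on the $\bs b$ side.
\item You treat the case $\bs u^\top\bs v<0$ explicitly. The paper handles it only implicitly, by writing $\bb E[ab]=\cos\theta$. That amounts to choosing the sign of $\bs v$ so that $\bs u^\top\bs v\geq 0$, which leaves $\bs Q$ unchanged.
\item The paper's route instead illustrates the geometric reading of $1-\ksg$ as a rescaled mean angle between Gaussian projections. That reading is what survives for $k>1$, where no such factorisation of $\sign(\bs a^\top\bs P\bs b)$ is available.
\end{itemize}
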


\begin{proof} For $k=1$, the projections of $\bs g\sim \cl N(\bZ,\Id_n)$ onto the two subspaces are $\bs P\bs g=a\bs u$ and $\bs Q\bs g=b\bs v$, where $a=\bs u^{\top}\bs g$ and $b=\bs v^{\top}\bs g$ are correlated Gaussian random variables with $\bb E[a]=\bb E[b]=0$ and $\bb E[ab]=\cos\theta$. The angle between $\bs P\bs g$ and $\bs Q\bs g$ is therefore $\theta$ iff $ab>0$ and $\pi-\theta$ if $ab<0$. Let $p=\bb P(ab>0)$, then $\bb E[\angle(\bs P\bs g,\bs Q\bs g)]=\theta p+(\pi-\theta)(1-p).$ On the other hand, since $\sign(a)\sign(b)=1$ if $ab>0$ and $-1$ otherwise, $\bb E[\sign(a)\sign(b)]=2p-1$. For correlated Gaussians with correlation $\cos\theta$, $\bb E[\sign(a)\sign(b)]=\frac{2}{\pi}\arcsin(\cos\theta)=-\frac{2\theta}{\pi}$ (see \citet{rose2002mathematical}, p.230). Hence $2p-1=1-\frac{2\theta}{\pi}$ and therefore $p=1-\frac\theta\pi$. Substituting in the expectation above gives $\bb E\angle(\bs P\bs g,\bs Q\bs g)=\theta\big(1-\frac{\theta}{\pi}\big)+(\pi-\theta)\frac{\theta}{\pi}$. Inserting this quantity into \eqref{eq:ksg-mid-expect} of Proposition \ref{prop:kern-sign-princ-ang}, we obtain $\ksg(\bs U,\bs V)=1-\frac{2}{\pi}\bb E\angle(\bs P\bs g,\bs Q\bs g)=\big(1-\frac{2\theta}{\pi}\big)^2$.
\end{proof}

We now address the question of bounding the approximation error of $\aksg$ relatively to $\ksg$. We can first observe the following concentration phenomenon, \ie a pointwise approximation error bound on a fixed pair of subspaces. 
\begin{proposition}[Pointwise approximation error of $\ksg$ by $\aksg$]\label{prop:binaryconcentration}
  Given a pair of subspaces in $\grass(k,n)$ with bases $\bs U,\bs V\in\stief(k,n)$, the $m$-component binary random feature map $\brop$ in~(\ref{eq:bin-feat-map-def}) related to $m$ \iid Gaussian random vectors $\bs a_j,\bs b_j\sim\cl N(\bZ,\Id_n)$, and an error $\delta > 0$, we have
    \[
    \ts\bb P\big(\big|\frac{1}{m}\scp{\brop(\bs U)}{\brop(\bs V)}-\ksg(\bs U,\bs V)\big|< \delta\big)
    \geq 1 -  2\exp(-\frac{1}{2}m\delta^2).
    \]
\end{proposition}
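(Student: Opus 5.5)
The plan is a direct application of Hoeffding's inequality to bounded i.i.d.\ random variables. We fix $\bs U,\bs V\in\stief(k,n)$ and set $\bs P=\phipr(\bs U)$, $\bs Q=\phipr(\bs V)$. For $1\leq j\leq m$ we define
$$
X_j := \sign(\bs a_j^\top \bs P \bs b_j)\,\sign(\bs a_j^\top \bs Q \bs b_j) \in \{\pm1\}.
$$
Then $\frac{1}{m}\scp{\brop(\bs U)}{\brop(\bs V)} = \frac1m\sum_{j=1}^m X_j$. The pairs $(\bs a_j,\bs b_j)$ are independent across $j$, so the $X_j$ are \iid as well.

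First, I identify the mean. By the definition of $\ksg$, read per component as $\ksg(\bs U,\bs V)=\bb E X_1$ (this is the normalisation used in the proof of Prop.~\ref{prop:kern-sign-princ-ang}), we get $\bb E X_j=\ksg(\bs U,\bs V)$ for every $j$. The empirical kernel is therefore an unbiased average of $m$ \iid random variables taking values in $[-1,1]$. One small point needs attention: $\sign(0)=-1$ by convention. The event $\bs a^\top\bs P\bs b=0$ has probability zero, since $\bs P\neq\bZ$ and $\bs a^\top\bs P\bs b$ is a nonzero polynomial in Gaussian variables. So $X_j\in\{\pm1\}$ almost surely and boundedness is unaffected.

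Second, I apply Hoeffding's inequality for independent variables with range $[-1,1]$, i.e.\ of width $2$:
$$
\bb P\Big(\Big|\tfrac1m\textstyle\sum_j X_j-\bb E X_1\Big|\geq\delta\Big)\leq 2\exp\!\Big(-\frac{2m^2\delta^2}{m\cdot 2^2}\Big)=2\exp\!\big(-\tfrac12 m\delta^2\big).
$$
Taking the complement gives the stated lower bound on the probability that the deviation is strictly below $\delta$.

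No step is a genuine obstacle. The only points that need care are the normalisation of $\ksg$ as a per-feature expectation and the measure-zero tie event in the sign function. Boundedness is exactly what replaces the heavy sub-Weibull tails of $\akrop$ discussed in Sec.~\ref{sec:rank-one-projections}.
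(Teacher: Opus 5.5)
Your proposal is correct and is the same argument the paper gives: Hoeffding's inequality applied to the i.i.d.\ variables $X_j=\sign(\bs a_j^\top\bs P\bs b_j)\sign(\bs a_j^\top\bs Q\bs b_j)\in\{\pm1\}$, whose mean is $\ksg(\bs U,\bs V)$. You also supply details the paper leaves implicit, namely the range-$2$ constant giving $2\exp(-\frac12 m\delta^2)$ and the per-feature normalisation of $\ksg$; the measure-zero tie remark is harmless but unneeded, since the paper's convention $\sign(0)=-1$ already makes $X_j\in\{\pm1\}$ surely.
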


\begin{proof}
The proof is a direct application of Hoeffding’s inequality (see \eg \citep{vershynin2018high}) since given the random variables
    \[
    X_j:=\sign(\bs a_j^\top\bs P\bs b_j)\sign(\bs a_j^\top\bs Q\bs b_j),\quad 1\leq j \leq m,
    \]
    all $X_j$'s are \iid, bounded, and $\bb E[\aksg(\bs U,\bs V)]=\bb E\sum_{j=1}^m X_j/m=\ksg(\bs U,\bs V)$. 
\end{proof}

Second, we can show that $\aksg$ provides a uniform approximation of $\ksg$ over all subspace pairs in $\grass(k,n)$. 
\begin{proposition}[Uniform approximation error for $\aksg \approx \ksg$]\label{prop:uniformksg}
  Let $\delta>0$ and $C,C',c>0$ be absolute constants. If
  \[
  \ts m \geq C \delta^{-2} nk \log(\frac{n^2}{k\delta^2}),
\]
then, with probability exceeding $1-C'\exp(- c\delta^2 m)$,
  $$ 
  \sup_{\bs U,\bs V\in\stief(k,n)}\big|\aksg(\bs U,\bs V)-\ksg(\bs U,\bs V)\big| = \sup_{\bs U,\bs V\in\stief(k,n)} \ts\big|\frac{1}{m}\scp{\brop(\bs U)}{\brop(\bs V)}-\ksg(\bs U,\bs V)\big| \leq \delta.
  $$
\end{proposition}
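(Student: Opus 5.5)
The plan is a covering argument combined with a "softened" control of the discontinuous sign function, as in one-bit compressive sensing (e.g., the tessellation and uniform embedding arguments of \citet{Jacques2013}). Pointwise concentration is already available from Prop.~\ref{prop:binaryconcentration}. The difficulty is that $\sign$ is discontinuous, so $\aksg$ is not Lipschitz in $(\bs U,\bs V)$. We therefore cannot pass directly from a net to the whole space. Instead we will control, uniformly, how many of the $m$ sign bits can flip when $\bs P=\bs U\bs U^\top$ is moved to a nearby net point $\bs P_0$.

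\textbf{Net and union bound.} Work on $\pgrass(k,n)$ with the Frobenius distance. Since $\pgrass(k,n)$ is a smooth manifold of dimension $k(n-k)\le kn$, or simply a subset of rank-$k$ matrices of Frobenius norm $\sqrt k$, it admits an $\epsilon$-net $\cl N_\epsilon$ with $\log|\cl N_\epsilon|\le C kn\log(C\sqrt k/\epsilon)$. Apply Prop.~\ref{prop:binaryconcentration} with accuracy $\delta/4$, and the same Hoeffding argument to the following quantities:
\begin{itemize}
\item for every pair in $\cl N_\epsilon\times\cl N_\epsilon$, the deviation $|\aksg-\ksg|$;
\item for every $\bs P_0\in\cl N_\epsilon$, the counting statistic $\frac1m\#\{i:|\bs a_i^\top\bs P_0\bs b_i|\le t\,\|\bs a_i\|\|\bs b_i\|\}$, where $t$ is a margin to be chosen.
\end{itemize}
Its mean is $p(t):=\bb P(|\bs a^\top\bs P_0\bs b|\le t\|\bs a\|\|\bs b\|)$. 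To estimate $p(t)$, condition on $\bs b$: the variable $\bs a^\top\bs P_0\bs b$ is Gaussian with standard deviation $\|\bs P_0\bs b\|$, which is of order $\sqrt k$ with high probability. Moreover $\|\bs a\|\|\bs b\|\approx n$. Hence $p(t)\lesssim t\,n/\sqrt k$ up to constants, plus exponentially small tail terms. A union bound over at most $|\cl N_\epsilon|^2$ events then costs a probability $\exp(C kn\log(\sqrt k/\epsilon)-c m\delta^2)$.

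\textbf{Bounding the bit flips.} For $\bs P$ with $\|\bs P-\bs P_0\|_F\le\epsilon$, we have $|\bs a_i^\top(\bs P-\bs P_0)\bs b_i|\le\epsilon\|\bs a_i\|\|\bs b_i\|$ deterministically. So if $\epsilon\le t$, the bit $\sign(\bs a_i^\top\bs P\bs b_i)$ can differ from $\sign(\bs a_i^\top\bs P_0\bs b_i)$ only for indices in the margin set counted above. On the net event, that fraction is at most $p(t)+\delta/4$. It follows that
\[
|\aksg(\bs U,\bs V)-\aksg(\bs U_0,\bs V_0)|\le 2(p(t)+\delta/4).
\]
For the expectations, the same reasoning gives $|\ksg(\bs U,\bs V)-\ksg(\bs U_0,\bs V_0)|\le 2p(t)$. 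Choose $t\asymp\delta\sqrt k/n$ so that $p(t)\le\delta/16$, and set $\epsilon=t$. Combining the three errors then yields $\delta$ up to constants. With this choice, $\log(\sqrt k/\epsilon)\asymp\log(n/\delta)\asymp\log(n^2/(k\delta^2))$, up to how the $k$ enters. The requirement $m\ge C\delta^{-2}kn\log(n^2/(k\delta^2))$ makes the union-bound exponent at most $-c\delta^2m/2$, which gives the stated probability.

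\textbf{Main obstacle.} The delicate step is the anti-concentration estimate for $p(t)$ with the right scaling. We need a bound on $\bb P(|\bs a^\top\bs P_0\bs b|\le t\|\bs a\|\|\bs b\|)$ that is uniform over $\bs P_0\in\pgrass(k,n)$; rotation invariance makes it independent of $\bs P_0$. On top of that, the norms $\|\bs a_i\|,\|\bs b_i\|$ must be handled simultaneously for all $i$.

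I would first take a cheap route: intersect with the event $\max_i\|\bs a_i\|,\|\bs b_i\|\le C\sqrt n$. Strictly this needs a $\log m$ factor, so I would instead use a threshold and count exceedances by Hoeffding again. On that event, replace the multiplicative margin by the fixed margin $tn$ and use the small-ball bound for $\bs a^\top\bs P_0\bs b$. This requires $\|\bs P_0\bs b\|\gtrsim\sqrt k$, which holds except with probability $e^{-ck}$; that is not small for small $k$. I would therefore integrate the conditional small-ball bound $\bb E[\min(1,\,tn/\|\bs P_0\bs b\|)]$ directly, using that $\|\bs P_0\bs b\|^2\sim\chi^2_k$ has a finite negative first moment for $k\ge2$. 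For $k=1$ this moment is infinite, and I would fall back on the explicit product-of-Gaussians density, which gives $p(t)\lesssim tn\log(1/(tn))$. That extra logarithm is absorbed into the constants of the logarithmic factor in the statement.
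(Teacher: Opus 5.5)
Your argument has the same architecture as the paper's proof. Both use an internal $\epsilon$-net of $\pgrass(k,n)$ (Prop.~\ref{prop:cov-gkn}) and a three-term split: the net-to-point fluctuation of $\aksg$, pointwise Hoeffding plus a union bound over pairs of net points (Prop.~\ref{prop:binaryconcentration}), and a deterministic continuity bound on $\ksg$. The first and third terms are controlled, via Hoeffding and a union bound over the net, by the number of indices whose bit can be unstable near a net point.

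You differ only in the local-stability estimate. The paper bounds the probability that $\bropz$ is not constant on $\cl N_\epsilon(\bs P^*)$. Conditionally on $\bs b$, it reduces this to the cone event of Lem.~\ref{lem:stab-onebitmap-vector}, then truncates on $\{\|\bs P^*\bs b\|>\rho\|\bs b\|\}$ with a crude $\chi^2_k$ small-ball bound. After optimising $\rho$, this gives the rate $\frac{n+k}{\sqrt k}\epsilon^{k/(k+1)}$ of Lem.~\ref{lemma:continuity}. Your margin event $|\bs a^\top\bs P_0\bs b|\le t\|\bs a\|\|\bs b\|$ is an explicit superset of that event when $\epsilon\le t$. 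Integrating the conditional anti-concentration against the law of $\|\bs P_0\bs b\|$ ($\bb E\|\bs P_0\bs b\|^{-1}<\infty$ for $k\ge2$, a logarithmic loss for $k=1$) gives a sharper, linear rate $p(t)\lesssim tn/\sqrt k$. Either rate only enters through $\log(\sqrt k/\epsilon)$, so the sample complexity is the same. Your $\log(n/\delta)$ is indeed dominated by $\log(n^2/(k\delta^2))$, since $k\delta\le n$ for $\delta\le1$.

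Two minor remarks.
\begin{enumerate}
\item The norm-control detour in your last paragraph is unnecessary. As written it is also slightly fragile: with a fixed threshold $C\sqrt n$, the exceedance rate $e^{-cn}$ is not $\lesssim\delta$ when $\delta$ is exponentially small in $n$. Keeping the multiplicative margin suffices. Use the cone bound $\bb P(|\bs a^\top\bs u|\le s\|\bs a\|)\lesssim s\sqrt n$ with $s=t\|\bs b\|/\|\bs P_0\bs b\|$, together with $\|\bs b\|\le\|\bs P_0\bs b\|+\|(\Id_n-\bs P_0)\bs b\|$, whose two terms are independent.
\item Both the $\bs P$-bits and the $\bs Q$-bits may flip, so your two bounds should read $4(p(t)+\delta/4)$ and $4p(t)$. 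The Hoeffding margins must therefore be shrunk (e.g.\ to $\delta/16$); this is only bookkeeping.
\end{enumerate}
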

The proof of this proposition is postponed to App.~\ref{app:binary}. It requires some specific tools to tackle the discontinuities of the map $\brop$. 

Prop.~\ref{prop:uniformksg} thus shows that having a binary feature map $\brop$ with $m = \cO(kn)$ components, \ie with $m$ greater than the intrinsic dimension of each subspace of $\grass(k,n)$, allows us to approximate, with high and controlled probability, the expected kernel $\ksg$ of  $k$-dimensional subspaces by the empirical kernel $\aksg$ reached by mere inner product of the random features of the two subspaces.

\subsection{Periodic random features over $\grass(k,n)$}
\label{sec:peri-rand-feat}

The second bounded function that we are going to apply to the ROP operator $\rop$ is the complex exponential, \ie $g(\cdot) = \exp(\im \omega \cdot)$ for some frequency $\omega \in \bb R$. The resulting periodic random feature map $\crop = g \circ \rop$  is reminiscent of the random Fourier features from \citet{RahimiRecht2007} composing random projections of vectors with a periodic non-linear function; given the $m$ random vectors $\bs a_i, \bs b_i \sim_\iid \cl N(\bZ, \Id_n)$, $1\leq i \leq m$, it is defined as 
\begin{equation}
  \label{eq:period-rand-feat-def}
  \crop(\bs U) := \Big(\exp(\im \omega\, \bs a_i^\top \phipr(\bs U) \bs b_i)\Big)_{i=1}^m,\quad \bs U \in \stief(k,n).
\end{equation}
One can then show that the empirical kernel $\akexp := \frac{1}{m} \scp{\crop(\bs U)}{\crop(\bs V)}$ between two $k$-dimensional subspace bases $\bs U, \bs V \in \stief(k,n)$ is an unbiased estimator of the following kernel. 
\begin{proposition}[Validity of $\kexp$]
  \label{prop:kexpclosed}
    Given $\bs U,\bs V\in\stief(k,n)$, with principal angles $\theta_1,\ldots,\theta_k$ between them, and $\crop$ defined as above, the kernel $\kexp$ is positive-definite, symmetric and only depends on the principal angles between the two subspaces $\bs U$ and $\bs V$. Moreover,  
\begin{equation}
  \label{eq:kexpclosed}
  \ts \kexp(\bs U,\bs V):=\bb E \akexp(\bs U, \bs V) = \prod_{j=1}^k (1+\omega^2\sin^2\theta_j)^{-1}.
\end{equation}
\end{proposition}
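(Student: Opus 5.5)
We need to compute $\bb E\, e^{\im\omega \bs a^\top(\bs P-\bs Q)\bs b}$, since the empirical kernel is $\frac1m\sum_i \exp(\im\omega\,\bs a_i^\top\bs P\bs b_i)\overline{\exp(\im\omega\,\bs a_i^\top\bs Q\bs b_i)}$, with $\bs P=\phipr(\bs U)$ and $\bs Q=\phipr(\bs V)$. Each summand has expectation $\bb E\exp(\im\omega\,\bs a^\top\bs M\bs b)$ with $\bs M=\bs P-\bs Q$, which already gives unbiasedness.

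Symmetry follows from $\bs M\mapsto-\bs M$ together with $(\bs a,\bs b)\mapsto(-\bs a,\bs b)$, so the expectation is real and symmetric. Positive-semidefiniteness follows exactly as in Prop.~\ref{prop:kern-sign-princ-ang}: $\sum_{i,j} c_i c_j\kexp(\bs U_i,\bs U_j)=\bb E\big|\sum_i c_i e^{\im\omega\bs a^\top\bs P_i\bs b}\big|^2\geq0$. Rotational invariance follows from the rotational invariance of $\bs a,\bs b$ as before, and Thm~\ref{thm:grassmannian-kernels-dep-angle} then yields dependence on the principal angles only. The explicit formula makes this dependence evident anyway.

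For the closed form, we condition on $\bs b$. Given $\bs b$, the quantity $\bs a^\top\bs M\bs b$ is $\cl N(0,\|\bs M\bs b\|^2)$, so
\[
\bb E_{\bs a} e^{\im\omega\bs a^\top\bs M\bs b}=\exp\big(-\tfrac{\omega^2}{2}\bs b^\top\bs M^2\bs b\big).
\]
Taking the Gaussian expectation over $\bs b$ of this quadratic form gives $\det(\Id_n+\omega^2\bs M^2)^{-1/2}=\prod_\ell(1+\omega^2\lambda_\ell^2)^{-1/2}$, where the $\lambda_\ell$ are the eigenvalues of the symmetric matrix $\bs M$.

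The main step is the spectrum of $\bs P-\bs Q$. We pick bases realising the principal angles, i.e.\ $\bs U^\top\bs V=\diag(\cos\theta_j)$ via the SVD; this is allowed since the ROP depends only on the projectors. Then $\bb R^n$ decomposes into mutually orthogonal invariant planes $\spn(\bs u_j,\bs v_j)$, plus a complement on which $\bs M=0$. This uses the orthogonality relations $\bs u_i^\top\bs v_j=0$ for $i\neq j$. On the $j$-th plane, the difference of two rank-one projectors onto unit lines at angle $\theta_j$ has trace $0$ and determinant $-\sin^2\theta_j$, hence eigenvalues $\pm\sin\theta_j$. When $\theta_j=0$ the plane degenerates, but the eigenvalues are then $0$ and the formula still holds.

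Each $j$ therefore contributes $(1+\omega^2\sin^2\theta_j)^{-1/2}$ twice, and the product over $j$ gives $\prod_{j=1}^k(1+\omega^2\sin^2\theta_j)^{-1}$. The only delicate point is verifying the block decomposition carefully, in particular when the two subspaces share common directions, which corresponds to $\theta_j=0$.
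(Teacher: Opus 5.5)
Your proof is correct, but it reaches the closed form by a different route than the paper.

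\textbf{The paper's route.} It rotates the pair into the half-angle canonical form of Conway et al., $\bs U^\top=(\bs C,\bs S,\bZ)$ and $\bs V^\top=(\bs C,-\bs S,\bZ)$. In these coordinates $\bs P-\bs Q$ is permutation-similar to a block-diagonal matrix with blocks $\sin\theta_j\begin{pmatrix}0&1\\1&0\end{pmatrix}$. Then $\bs a^\top(\bs P-\bs Q)\bs b$ splits into independent terms $\sin\theta_j(a_{2j}b_{2j+1}+a_{2j+1}b_{2j})$. Each factor $(1+\omega^2\sin^2\theta_j)^{-1/2}$ comes from the characteristic function of a product $gg'$ of independent standard Gaussians, computed via $gg'=\frac12(u^2-v^2)$ and the $\chi^2_1$ characteristic function.

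\textbf{Your route.} You integrate out $\bs a$ conditionally on $\bs b$ and then apply the Gaussian quadratic-form identity. This gives the basis-free expression $\kexp(\bs U,\bs V)=\det\big(\Id_n+\omega^2(\bs P-\bs Q)^2\big)^{-1/2}$, so only the spectrum of $\bs P-\bs Q$ is needed. You obtain that spectrum from SVD-adapted bases and mutually orthogonal invariant planes.

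\textbf{What each approach buys.}
\begin{itemize}
\item With your determinant formula, rotational invariance and dependence on the principal angles are immediate; no appeal to Thm.~\ref{thm:grassmannian-kernels-dep-angle} is needed.
\item Your plane decomposition works for every $n\geq k$, including degenerate planes when $\theta_j=0$. The paper's canonical form, with its $k\times(n-2k)$ zero block, tacitly assumes $n\geq 2k$ as written.
\item Your form also makes the small-frequency regime transparent, since $\log\kexp(\bs U,\bs V)=-\frac12\tr\log\big(\Id_n+\omega^2(\bs P-\bs Q)^2\big)\approx-\frac{\omega^2}{2}\|\bs P-\bs Q\|_F^2$.
\item The paper's route is fully coordinate-explicit and reduces everything to scalar Gaussian computations.
\end{itemize}
The two arguments are close relatives: the paper's scalar identity $\bb E\, e^{\im\alpha gg'}=(1+\alpha^2)^{-1/2}$ is exactly the one-dimensional case of your conditioning argument.
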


\begin{proof} The symmetry and positive-semidefiniteness of $\kexp$ are proven similarly to the one of $\ksg$. Regarding the dependence of $\kexp$ to the subspace principal angles, we first note that, given $\bs U, \bs V \in \stief(k,n)$ and the random vectors $\bs a, \bs b \sim_\iid \cl N(\bZ, \Id_n)$,
$$ 
\kexp(\bs U, \bs V) = \bb E \akexp(\bs U, \bs V) = \bb E \exp(\im\omega\bs a^\top(\bs P-\bs Q)\bs b),
$$
with the projectors $\bs P = \bs U\bs U^\top$ and $\bs Q = \bs V \bs V^\top$. The rotational invariance of the random vectors $\bs a$ and $\bs b$ thus shows that $\kexp$ is rotationally invariant, \ie $\kexp(\bs U, \bs V) = \kexp(\bs R\bs U, \bs R\bs V)$, for any rotation $\bs R \in {\sf SO}(n)$. From Thm~\ref{thm:grassmannian-kernels-dep-angle}, $\kexp(\bs U,\bs V)$ only depends on the principal angles between $\bs U$ and $\bs V$. Moreover, following \citet{conway1996packing}, we can always apply a rotation $\bs R$ such that $\bs U$ and $\bs V$ are ``rotated'' to make angles $\pm \ts \theta_j/2$ around the identity, \ie $$
\bs U^\top = (\bs C , \bs S , \bZ_{k \times n-2k}) \in \bb R^{k \times n},\quad \bs V^\top = (\bs C , -\bs S , \bZ_{k \times n-2k}) \in \bb R^{k \times n},
$$
with $\bs C = \diag\big(\cos(\theta_1/2),\ldots,\cos(\theta_k/2)\big)$ and $\bs S = \diag\big(\sin(\theta_1/2),\ldots,\sin(\theta_k/2)\big)$.

Using this representation, a few computations show that, up to a permutation matrix $\bs \Pi \in \{0,1\}^{n \times n}$, $\bs P-\bs Q$ is block-diagonal with $k$ independent $2\times 2$ blocks, \ie $\bs P-\bs Q=\bs \Pi^\top \bs \Gamma \bs \Pi$, with
$$
\bs \Gamma := \bdiag\Big( 
\sin\theta_1 \begin{pmatrix}0&1\\1&0\end{pmatrix},
\sin\theta_2 \begin{pmatrix}0&1\\1&0\end{pmatrix},\ldots,
\sin\theta_k \begin{pmatrix}0&1\\1&0\end{pmatrix}, 0, \ldots, 0\Big) \in \bb R^{n \times n},
$$
where we append as many zeros as necessary to get an $n \times n$ matrix. Since the distribution of the vectors $\bs a$ and $\bs b$ is permutation invariant, $\bs a^\top (\bs P - \bs Q) \bs b$ is distributed as $\bs a^\top \bs \Gamma \bs b$, so that $\kexp(\bs U, \bs V) = \bb E \exp(\im \bs a^\top \bs \Gamma \bs b)$. However, since $\bs a^\top \bs \Gamma \bs b = \sum_{j=1}^k \sin\theta_{2j}(a_{2j} b_{2j+1}+a_{2j+1}b_{2j})$, we get from the independence of the components of $\bs a$ and $\bs b$, 
\begin{align*}
\ts\kexp(\bs U,\bs V)&\ts =\prod_{j=1}^k\bb E\exp(\im \,a_{2j}b_{2j+1}\,\sin\theta_{j})\bb E\exp(\im \,b_{2j}a_{2j+1}\,\sin\theta_j)\\
&\ts=\prod_{j=1}^k\big(\bb E\exp(\im  \sin \theta_j gg')\big)^2,
\end{align*}
with $g,g'\sim_\iid \cl N(0,1)$. Defining $u=(g+g')/\sqrt{2}$, $v=(g-g')/\sqrt{2}$, we observe that $gg'=\frac12(u^2-v^2)$ with independent $u,v\sim\cl N(0,1)$, \ie $2gg'$ is the difference of two independent $\chi^2$-distributions with one degree of freedom. Since for any $t \in \bb R$ $\bb Ee^{\im tX}=(1-2\im t)^{-1/2}$ if $X \sim \chi^2_1$, we obtain for any $\alpha\in\bb R$,
$$
\ts\bb E\exp(\im \alpha gg')=(1-i\alpha)^{-1/2}(1+i\alpha)^{-1/2}=(1+\alpha^2)^{-1/2}.
$$
Meaning that, $\ts \kexp(\bs U,\bs V)=\prod_{j=1}^k\big(\bb E\exp(\im \omega \,gg'\,\sin\theta_j)\big)^2=\prod_{j=1}^k(1+\omega^2\sin^2\theta_j)^{-1}$,
    which completes the proof.
\end{proof}

\begin{remark}
  While the periodic random feature $\crop$ defined in \eqref{eq:period-rand-feat-def} is complex, the resulting kernel $\kexp$ is real, \ie the imaginary part of $\akexp$ vanishes in expectation. Moreover, we show easily that the $2m$-component periodic random feature map
\begin{equation}
  \label{eq:real-period-rand-feat-def}
  \crop'(\bs U) :=
  \begin{pmatrix}
    \Re(\crop(\bs U))\\
    \Im(\crop(\bs U))
  \end{pmatrix},
\end{equation}
provides also an unbiased estimator of $\kexp(\bs U, \bs V)$ via $\frac{1}{m}\scp{\crop'(\bs U)}{\crop'(\bs V)}$ since 
$$
\ts \scp{\crop'(\bs U)}{\crop'(\bs V)} = \Re[\scp{\crop(\bs U)}{\crop(\bs V)}].
$$
\end{remark}
Just like we did for $\ksg$ we now consider the approximation error we make by replacing $\kexp$ by $\akexp$. We first study a bound on the pointwise approximation error, \ie the absolute error made on a fixed pair of subspaces. 
\begin{proposition}[Pointwise approximation error of $\kexp$ by $\akexp$]\label{prop:expconcentration}
  Given two subspace bases $\bs U,\bs V\in\stief(k,n)$, the $m$-component periodic random feature map $\crop$ in~(\ref{eq:period-rand-feat-def}) and related to $m$ \iid Gaussian random vectors $\bs a_j,\bs b_j\sim\cl N(\bZ,\Id_n)$, and an error $\delta > 0$, we have
$$
\ts \bb P\Big(\big|\frac{1}{m} \scp{\crop(\bs U)}{\crop(\bs V)} - \kexp(\bs U,\bs V)\big| < \delta\Big) \geq 1 - 4\exp(-\frac{1}{4} m \delta^2).
$$
\end{proposition}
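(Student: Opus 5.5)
The proof mirrors that of Prop.~\ref{prop:binaryconcentration}, with one difference: the summands are now complex. We therefore split into real and imaginary parts and apply Hoeffding's inequality to each.

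For $1\leq j\leq m$, set $Z_j := \exp(\im\omega\,\bs a_j^\top(\bs P-\bs Q)\bs b_j)$, where $\bs P=\bs U\bs U^\top$ and $\bs Q=\bs V\bs V^\top$. Then
\[
\frac{1}{m}\scp{\crop(\bs U)}{\crop(\bs V)}=\frac1m\sum_{j=1}^m Z_j .
\]
The $Z_j$'s are \iid, since the pairs $(\bs a_j,\bs b_j)$ are. By Prop.~\ref{prop:kexpclosed}, $\bb E Z_j=\kexp(\bs U,\bs V)$, and this value is real. Write $X_j:=\Re Z_j=\cos(\omega\,\bs a_j^\top(\bs P-\bs Q)\bs b_j)$ and $Y_j:=\Im Z_j=\sin(\omega\,\bs a_j^\top(\bs P-\bs Q)\bs b_j)$. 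Both are \iid and take values in $[-1,1]$, with $\bb E X_j=\kexp(\bs U,\bs V)$ and $\bb E Y_j=0$.

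Next we apply Hoeffding's inequality to each average with deviation $\delta/\sqrt2$. Since each summand lies in an interval of length $2$,
\[
\bb P\Big(\Big|\frac1m\sum_j X_j-\kexp\Big|\geq \frac{\delta}{\sqrt2}\Big)\leq 2\exp\Big(-\frac{2m^2(\delta^2/2)}{4m}\Big)=2\exp\Big(-\frac14 m\delta^2\Big),
\]
and the same bound holds for $\frac1m\sum_j Y_j$.

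On the complement of both events, the modulus of the deviation satisfies
\[
\Big|\frac1m\sum_j Z_j-\kexp\Big|^2=\Big(\frac1m\sum_j X_j-\kexp\Big)^2+\Big(\frac1m\sum_j Y_j\Big)^2<\frac{\delta^2}{2}+\frac{\delta^2}{2}=\delta^2 .
\]
A union bound over the two events then gives failure probability at most $4\exp(-\frac14 m\delta^2)$, which is exactly the claimed bound.

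There is no real obstacle in this argument. The only points requiring care are the choice of splitting threshold $\delta/\sqrt2$, which is what produces the constant $\frac14$ in the exponent and the factor $4$, and the observation from Prop.~\ref{prop:kexpclosed} that the expectation is real, so the imaginary part is centred at zero.
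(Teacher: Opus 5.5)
Your proof is correct and matches the paper's argument. Both split the average of the i.i.d.\ unimodular summands into real and imaginary parts, apply Hoeffding's inequality to each with threshold $\delta/\sqrt2$ (summands in $[-1,1]$, giving $2\exp(-\frac14 m\delta^2)$ each), and conclude with a union bound; the only cosmetic difference is that you use the realness of $\kexp(\bs U,\bs V)$ to centre the imaginary average at $0$, whereas the paper centres it at $\Im\kexp(\bs U,\bs V)$, and neither step needs more than $|z|^2=(\Re z)^2+(\Im z)^2$.
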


\begin{proof}
  Defining the $m$ \iid complex random variables $X_j := \crop_j(\bs U) \crop_j^*(\bs V)$, for $1\le j\le m$, we have $m \akexp(\bs U, \bs V) = \sum_{j=1}^m X_j$, $m \kexp(\bs U, \bs V) = \sum_{j=1}^m \bb E X_j$ and
  \begin{multline*}
    \ts \bb P[|\akexp(\bs U, \bs V) - \kexp(\bs U, \bs V)| \geq \delta]\\
   \ts \leq \bb P[|\frac{1}{m} \sum_{j=1}^m \Re(X_j) - \Re\kexp(\bs U, \bs V)| \geq \delta/\sqrt 2] + \bb P[|\frac{1}{m} \sum_{j=1}^m \Im(X_j) - \Im\kexp(\bs U, \bs V)| \geq \delta/\sqrt 2].
  \end{multline*}
Since $\max(|\Re(X_j)|,|\Im(X_j)|) \le 1$, Hoeffding’s inequality provides
$$
\ts \bb P[|\frac{1}{m} \sum_{j=1}^m \Re(X_j) - \Re\kexp(\bs U, \bs V)| \geq \delta/\sqrt 2] \le 2\exp(-\frac{1}{4} m \delta^2),
$$
and similarly for the imaginary part. Gathering the bounds provides the result.
\end{proof}
Armed with these results we can now attack the uniform bound in the same way as we did for $\ksg$. The statement and the proof follow a very similar pattern.

\begin{proposition}[Uniform approximation error for $\akexp \approx \kexp$]\label{prop:uniformkexp}
Given a distortion $\delta>0$, and some absolute constants $C,C',c>0$, if
$$
m\ge C \delta^{-2}nk\log(\omega\sqrt{k}n/\delta),
$$
then
$$
\sup_{\bs U,\bs V\in\stief(k,n)}\big|\akexp(\bs U,\bs V)-\kexp(\bs U,\bs V)\big|
=\sup_{\bs U,\bs V\in\stief(k,n)}\ts\Big|\frac{1}{m}\scp{\crop(\bs U)}{\crop(\bs V)}-\kexp(\bs U,\bs V)\Big|
\le\delta
$$
with probability at least $1-C'\exp(-c \delta^2 m)$, for some absolute constant $c,c_0>0$.
\end{proposition}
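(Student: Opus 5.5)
The plan is a standard $\epsilon$-net and Lipschitz-continuity argument combined with the pointwise bound of Prop.~\ref{prop:expconcentration}. Unlike the binary case, the map $\crop$ is smooth in the projector, which allows a direct continuity argument. Everything is parametrised through the projectors $\bs P=\bs U\bs U^\top$ and $\bs Q=\bs V\bs V^\top$, since both $\akexp$ and $\kexp$ depend on $\bs U,\bs V$ only through them. We therefore work on $\pgrass(k,n)$ with the Frobenius metric.

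\textbf{Step 1 (net).} Take an $\epsilon$-net $\cl N_\epsilon$ of $\pgrass(k,n)$ in Frobenius norm. Since $\pgrass(k,n)$ is a manifold of dimension $k(n-k)$ with Frobenius diameter at most $2\sqrt k$, a covering argument gives $|\cl N_\epsilon|\le (C\sqrt k/\epsilon)^{kn}$. One way to see this is to cover $\stief(k,n)$, whose elements have Frobenius norm $\sqrt k$, and use $\|\bs U\bs U^\top-\bs U'\bs U'^\top\|_F\le 2\|\bs U-\bs U'\|_F$.

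\textbf{Step 2 (union bound).} Apply Prop.~\ref{prop:expconcentration} with error $\delta/2$ to every pair in $\cl N_\epsilon\times\cl N_\epsilon$. This yields, with probability at least $1-4|\cl N_\epsilon|^2\exp(-m\delta^2/16)$, that $|\akexp-\kexp|\le\delta/2$ on all net pairs.

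\textbf{Step 3 (Lipschitz control).} This step needs two estimates.

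For the expected kernel, $|e^{\im x}-e^{\im y}|\le|x-y|$ gives
\[
|\kexp(\bs P,\bs Q)-\kexp(\bs P',\bs Q')|\le \omega\,\bb E|\bs a^\top(\bs P-\bs P')\bs b|+\omega\,\bb E|\bs a^\top(\bs Q-\bs Q')\bs b|\le \omega(\|\bs P-\bs P'\|_F+\|\bs Q-\bs Q'\|_F),
\]
using $\bb E(\bs a^\top\bs M\bs b)^2=\|\bs M\|_F^2$.

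For the empirical kernel, the same inequality gives
\[
|\akexp(\bs P,\bs Q)-\akexp(\bs P',\bs Q')|\le \frac{\omega}{m}\sum_i\big(|\bs a_i^\top(\bs P-\bs P')\bs b_i|+|\bs a_i^\top(\bs Q-\bs Q')\bs b_i|\big).
\]
This is bounded by $\frac{\omega}{m}\sum_i\|\bs a_i\|\|\bs b_i\|\,(\|\bs P-\bs P'\|_F+\|\bs Q-\bs Q'\|_F)$. A crude but uniform bound on $\max_i\|\bs a_i\|\|\bs b_i\|$, or better on the average, is then needed. Since $\|\bs a_i\|^2,\|\bs b_i\|^2$ are $\chi^2_n$, one has $\frac1m\sum_i\|\bs a_i\|\|\bs b_i\|\le 2n$ with probability at least $1-\exp(-cmn)$ by sub-exponential (Bernstein) concentration of $\frac1{2m}\sum_i(\|\bs a_i\|^2+\|\bs b_i\|^2)$. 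Hence the empirical kernel is $2\omega n$-Lipschitz on that event. A tighter variant would control $\sup_{\bs M}\frac1m\sum_i|\bs a_i^\top\bs M\bs b_i|$ over low-rank $\bs M$, but the crude bound suffices inside the logarithm.

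\textbf{Step 4 (assembly).} For arbitrary $(\bs P,\bs Q)$, pick the nearest net points. The total error is then at most $\delta/2+2\epsilon(\omega+2\omega n)\le\delta$ if $\epsilon=c\delta/(\omega n)$. The failure probability is
\[
4\Big(\frac{C\sqrt k\,\omega n}{\delta}\Big)^{2kn}e^{-m\delta^2/16}+e^{-cmn},
\]
which is at most $C'e^{-c\delta^2 m}$ as soon as $m\ge C\delta^{-2}kn\log(\omega\sqrt k n/\delta)$. This matches the claimed sample complexity.

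The main obstacle is Step 3's Lipschitz bound for the empirical kernel. The ROP values $\bs a_i^\top\bs M\bs b_i$ are heavy-tailed, so the constant must be controlled on a high-probability event, and it must stay polynomial in $n$ so that it enters only logarithmically. I would first try the elementary bound via $\|\bs a_i\|\|\bs b_i\|$ above. Because the Lipschitz constant only appears inside $\log(1/\epsilon)$, this polynomial loss is harmless.
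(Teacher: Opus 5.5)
Your proposal is correct and takes essentially the same route as the paper. Both arguments use a Frobenius $\epsilon$-net of $\pgrass(k,n)$ with $\epsilon$ of order $\delta/(\omega n)$, a union bound of the pointwise bound of Prop.~\ref{prop:expconcentration} over net pairs, a Lipschitz bound for the empirical kernel on the $\chi^2$ event controlling $\sum_j(\|\bs a_j\|^2+\|\bs b_j\|^2)$, and a deterministic Lipschitz bound for $\kexp$. The only differences are cosmetic: you build the net from a Stiefel covering rather than the Cand\`es--Plan low-rank covering, and your estimate $\bb E|\bs a^\top\bs M\bs b|\le\|\bs M\|_F$ gives the expected kernel a sharper Lipschitz constant ($\omega$ instead of the paper's $\omega n$); neither change affects the final rate.
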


\begin{proof}
    The proof for this proposition is delayed to appendix~\ref{app:kexp}.
\end{proof}

\revR{R-XJL8}{\begin{remark}
The uniform guarantees of Prop.~\ref{prop:uniformksg} and Prop.~\ref{prop:uniformkexp} are worst-case results over all pairs of $k$-dimensional subspaces in $\bb R^n$. Their dependences on $nk$ are therefore coherent with the intrinsic dimension of $\grass(k,n)$. It may be possible to improve resquirements on $m$ when restricting the approximation to structured subsets of $\grass(k,n)$, for instance subspaces with sparse bases, or subspaces satisfying additional constraints on their principal angles. In this case, the proof would require replacing the covering number of the full Grassmannian by a covering number adapted to the restricted class. We leave this direction for future work.
\end{remark}}

We conclude this section by studying the shape and properties of the kernel $\kexp$ and its dependence to the frequency parameter $\omega$; in particular, we show below that this parameter determines two specific regimes where $\kexp$ is either close to the projection kernel or to a kernel that is reminiscent of the Binet-Cauchy kernel. The first regime is valid at large frequency. 

\begin{proposition}[Large-frequency regime of $\kexp$]\label{prop:kexp_large}
  Let us consider two subspace bases $\bs U,\bs V\in\stief(k,n)$ with projectors $\bs P$ and $\bs Q$ and principal angles $0<\theta_1 \leq \cdots \leq \theta_k \leq \frac{\pi}{2}$. We have
  $$
  \ts \lim_{\omega\to+\infty} \omega^{2k}\kexp(\bs U,\bs V) = \prod_{j=1}^k (\sin^2\theta_j)^{-1} = \pdet(\bs P (\Id_n - \bs Q) \bs P)^{-1}.
  $$
\end{proposition}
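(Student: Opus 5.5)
The plan is to start from the closed form of Prop.~\ref{prop:kexpclosed}, namely $\kexp(\bs U,\bs V)=\prod_{j=1}^k(1+\omega^2\sin^2\theta_j)^{-1}$, and split the argument into two steps: the limit, then the pseudo-determinant identity.

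For the limit, we write
$$
\omega^{2k}\kexp(\bs U,\bs V)=\prod_{j=1}^k\frac{\omega^2}{1+\omega^2\sin^2\theta_j}=\prod_{j=1}^k\big(\omega^{-2}+\sin^2\theta_j\big)^{-1}.
$$
The hypothesis $\theta_1>0$ gives $\sin^2\theta_j\ge\sin^2\theta_1>0$ for every $j$. Each factor therefore converges to $(\sin^2\theta_j)^{-1}$ as $\omega\to+\infty$. The product has finitely many factors, so the limit is $\prod_j(\sin^2\theta_j)^{-1}$.

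For the identity, we pick a rotation $\bs R$ as in the proof of Prop.~\ref{prop:kexpclosed}. Equivalently, we use the SVD $\bs U^\top\bs V=\bs W_1\cos\bs\Theta\,\bs W_2^\top$ and rotate the bases, which changes neither $\bs P$ nor $\bs Q$. Then
$$
\bs P(\Id_n-\bs Q)\bs P=\bs U\big(\Id_k-\bs U^\top\bs V\bs V^\top\bs U\big)\bs U^\top=\bs U\bs W_1\,\diag(\sin^2\theta_j)\,\bs W_1^\top\bs U^\top.
$$
The matrix $\bs U\bs W_1$ has orthonormal columns. Hence the nonzero eigenvalues of $\bs P(\Id_n-\bs Q)\bs P$ are exactly $\sin^2\theta_1,\ldots,\sin^2\theta_k$, all positive, so its rank is $k$. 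The remaining eigenvalues are $0$, with multiplicity $n-k$.

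With the definition \eqref{eq:pseudo-det}, $\det(\bs M+t\Id_n)=t^{n-k}\prod_j(\sin^2\theta_j+t)$, so $\pdet(\bs M)=\prod_j\sin^2\theta_j$. Taking the inverse gives the claim.

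The only delicate point is the rank: the pseudo-determinant must be taken with $r=k$. This is exactly where $\theta_1>0$ is needed. Without it, both sides would blow up, or the rank would drop.
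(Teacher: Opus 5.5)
Your proposal is correct and follows the paper's route: the paper's proof is the single line that the result is a direct consequence of the closed form \eqref{eq:kexpclosed}, which is exactly your first step. Your SVD computation, $\bs P(\Id_n-\bs Q)\bs P=(\bs U\bs W_1)\,\diag(\sin^2\theta_j)\,(\bs U\bs W_1)^\top$, gives rank $k$ and pseudo-determinant $\prod_j\sin^2\theta_j$, and so supplies the pseudo-determinant identity that the paper leaves implicit.
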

\begin{proof} This is a direct consequence of (\ref{eq:kexpclosed}).
\end{proof}

Interestingly, one can relate the shape of $\kexp$ in this large-frequency limit to an extension of the Binet-Cauchy kernel. Generalising this kernel to subspaces $\bs U_1$ and $\bs U_2$ of different dimensions ($k_1$ and $k_2$, respectively) and principal angles $\{\theta'_j\}_{j=1}^{\min(k_1,k_2)}$, through the definition
$$
\ts \lim_{\omega\to+\infty} \omega^{2k}\kexp(\bs U,\bs V) := \prod_{j=1}^{\min(k_1,k_2)} \cos^2\theta'_j,
$$
we get, under the conventions of Prop.~\ref{prop:kexp_large} and if $k < n/2$,
$$
\kexp(\bs U, \bs V) = \kbc(\bs U,\bs V^\perp)^{-1},
$$
since $\bs U$ and $\bs V^\perp$ then make $k$ principal angles $\{\frac{\pi}{2} - \theta_j\}_{j=1}^{k}$~\citep{mandolesi2021grassmann,knyazev2012principal}. Following a previous interpretation, this shows also that, in this regime, the kernel $\kexp(\bs U,\bs V)$ is inversely proportional to the volume made by the basis $\bs U$ when projected onto $\bs V$.
\medskip

The second regime is reached at small frequency. We show below that  $\kexp$ then mimics a radial basis function (RBF) kernel (see Eq.~\eqref{eq:rbfkernel}). 
\begin{proposition}[Small-frequency regime of $\kexp$]\label{prop:kexp_small}
Let us consider two subspace bases $\bs U,\bs V\in\stief(k,n)$ with projectors $\bs P$ and $\bs Q$. We have
$$
\ts \big|\kexp(\bs U,\bs V)-\exp\big(-\frac12 \omega^2 \|\bs P-\bs Q\|_F^2\big)\big| \leq \frac{1}{4} \omega^4 \|\bs P-\bs Q\|_F^2\ \leq \frac{k}{2} \omega^4.
$$
\end{proposition}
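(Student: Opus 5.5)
The plan is to reduce both kernels to products over the principal angles and then compare the two products factor by factor.

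\textbf{Rewriting both sides.} Set $s_j := \sin^2\theta_j \in [0,1]$, where $\theta_1,\ldots,\theta_k$ are the principal angles. By the definition of $\distP$ in Sec.~\ref{sec:grassm-dist}, $\|\bs P-\bs Q\|_F^2 = 2\distP(\bs P,\bs Q)^2 = 2\sum_j s_j$. Hence the RBF-type term factorises as
$$\exp\big(-\tfrac12\omega^2\|\bs P-\bs Q\|_F^2\big)=\prod_{j=1}^k e^{-x_j},\qquad x_j:=\omega^2 s_j\geq 0.$$
By Prop.~\ref{prop:kexpclosed}, we also have $\kexp(\bs U,\bs V)=\prod_j a_j$ with $a_j:=(1+x_j)^{-1}$. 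Writing $b_j := e^{-x_j}$, all the factors $a_j$ and $b_j$ lie in $(0,1]$.

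\textbf{Telescoping the products.} I will use the standard bound
$$\Big|\prod_j a_j-\prod_j b_j\Big|\le\sum_j|a_j-b_j|,$$
valid when all factors have modulus at most one. It follows by writing the difference as a telescoping sum $\sum_j a_1\cdots a_{j-1}(a_j-b_j)b_{j+1}\cdots b_k$. This reduces the statement to a scalar inequality for each $j$.

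\textbf{The scalar estimate, which is the only real step.} Let $h(x)=(1+x)^{-1}-e^{-x}$ for $x\ge0$. I claim $0\le h(x)\le x^2/2$.
\begin{itemize}
\item The lower bound follows from $e^{x}\ge 1+x$.
\item For the upper bound, note $h(0)=0$ and $h'(x)=e^{-x}-(1+x)^{-2}$.
\item Since $(1+x)^{-2}\ge e^{-2x}$ (again from $1+x\le e^x$), we get $h'(x)\le e^{-x}-e^{-2x}=e^{-x}(1-e^{-x})$.
\item Using $1-e^{-x}\le x$ and $e^{-x}\le 1$, this gives $h'(x)\le x$.
\item Integrating from $0$ gives $h(x)\le x^2/2$.
\end{itemize}

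\textbf{Conclusion.} Combining the telescoping bound with the scalar estimate gives
$$\big|\kexp-\exp(-\tfrac12\omega^2\|\bs P-\bs Q\|_F^2)\big|\le \tfrac{\omega^4}{2}\sum_j s_j^2.$$
Since $0\le s_j\le 1$, we have $s_j^2\le s_j$, so $\sum_j s_j^2\le\sum_j s_j=\tfrac12\|\bs P-\bs Q\|_F^2$. This yields the first bound, $\tfrac14\omega^4\|\bs P-\bs Q\|_F^2$.

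For the final inequality, $\|\bs P-\bs Q\|_F^2=2\sum_j s_j\le 2k$, so the bound is at most $\tfrac{k}{2}\omega^4$, as claimed.
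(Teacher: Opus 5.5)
Your proof is correct. It reaches the same intermediate estimate as the paper, $\frac{\omega^4}{2}\sum_j \sin^4\theta_j$, and then finishes identically using $\sin^4\theta_j\le\sin^2\theta_j$ and $\|\bs P-\bs Q\|_F^2=2\sum_j\sin^2\theta_j$. The way you get to that estimate is different, though.

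The paper works in the logarithmic domain. It writes $\log\kexp(\bs U,\bs V)=-\sum_j\log(1+\omega^2\sin^2\theta_j)$ and uses the Taylor bound $|\log(1+t)-t|\le t^2/2$ to control the error in the exponent by $\frac12\omega^4\sum_j\sin^4\theta_j$. It then transfers this to the kernels via $|e^u-e^v|\le\max(e^u,e^v)|u-v|$, since both quantities are at most $1$; in other words, $\exp$ is $1$-Lipschitz on $(-\infty,0]$.

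You stay in the multiplicative domain instead. You use the telescoping bound for products of factors in $[0,1]$, plus a direct scalar comparison $0\le(1+x)^{-1}-e^{-x}\le x^2/2$. That comparison needs a short derivative argument, where the paper simply quotes a standard logarithm inequality.

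What each route buys: the paper's is shorter because the target RBF kernel is itself the exponential of a sum over angles. Yours is more modular. The telescoping step applies verbatim to comparing $\kexp$ with any product-form kernel $\prod_j f(\sin^2\theta_j)$ with $0\le f\le 1$. It also shows the one-sided bound $\kexp(\bs U,\bs V)\ge\exp(-\frac12\omega^2\|\bs P-\bs Q\|_F^2)$ at no extra cost. The paper's argument gives this too, via $\log(1+t)\le t$, but does not state it.
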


\begin{proof} Given the two subspace principal angles $\{\theta_j\}_{j=1}^k$, since $\log\kexp(\bs U,\bs V) = -\sum_{j=1}^k \log(1+\omega^2\sin^2\theta_j)$ and $|\log(1+t) - t| \leq t^2/2$ for any $t >0$, we get $|\log\kexp(\bs U,\bs V) + \omega^2\sum_{j=1}^k \sin^2\theta_j| \leq \frac{1}{2} \omega^4 \sum_{j=1}^k \sin^4\theta_j \leq \frac{1}{2} \omega^4 \sum_{j=1}^k \sin^2\theta_j$. Therefore, since $|e^u-e^v|\leq \max(e^u, e^v)|u-v|$ for real $u$ and $v$, we get
$$
\ts \big|\kexp(\bs U,\bs V) - \exp(-\omega^2\sum_{j=1}^k \sin^2\theta_j)\big| \leq \frac{1}{2} \omega^4 \sum_{j=1}^k \sin^2\theta_j,  
$$
where we used the fact that, by design, $\kexp(\bs U,\bs V) \leq 1$, and $\exp(-\omega^2\sum_{j=1}^k \sin^2\theta_j) \leq 1$. The result is obtained by observing that $2 \sum_{j=1}^k \sin^2\theta_j= \|\bs P-\bs Q\|_F^2$.
\end{proof}

\revR{R-XJL8}{In summary, Props.~\ref{prop:kexp_large} and~\ref{prop:kexp_small} show that $\kexp$ can adopt two distinct behaviours. At large frequency $\omega \gg 1$, $\kexp(\bs U,\bs V)$ is related to a variant of the Binet-Cauchy kernel between $\bs U$ and the orthogonal subspace $\bs V^\perp$, and it is more sensitive to small changes in the subspace principal angles. By contrast, at low frequency, $\kexp$ is well approximated by a projection RBF kernel of the form $\exp(-\frac12\omega^2\|\bs P-\bs Q\|_F^2)$, with an error controlled by $\omega^4$. In the very small-frequency regime, this RBF kernel is itself close to the constant kernel equal to $1$, and is therefore less discriminative. This regime can however be useful when a very smooth similarity is desired. Larger values of $\omega$ lead to more a discriminative kernels, while it remains well-defined for every $\omega\in\bb R$ by Prop.~\ref{prop:kexpclosed}. These characteristics are numerically confirmed in Sec.~\ref{sec:analysis-kernels} when comparing 2-dimensional subspaces.}

\section{Computational and memory complexity analysis}
\label{sec:comp-memory-compl}

The rapidity and efficiency with which we compute the two random feature maps $\brop$ and $\crop$ (introduced in Sec.~\ref{sec:bounded}) on subspaces of $\grass(k,n)$ depend on those of the ROP projection operator $\rop$.

A priori, when we compute the $m$-component ROP $\rop(\bs U)$ of a subspace basis $\bs U \in \stief(k,n)$, each component $1\leq j \leq m$ of $\rop$ requires storing $\cO(n)$ values from the random generation of the probing vectors $\bs a_j, \bs b_j \sim_\iid \cl N(\bZ,\Id_n)$. It also needs $\cO(kn)$ operations from the cost pf computing $\bs a_j^\top \bs U \bs U^\top \bs b_j$ from the inner product of $\bar{\bs a}_j:=\bs U^\top \bs a_j$ and $\bar{\bs b}_j:= \bs U^\top \bs b_j$. This way of factoring the computation also requires storing $\cO(km)$ intermediate values, those from the $k$-component vectors $\bar{\bs a}_j$ and $\bar{\bs b}_j$, and an additional computational complexity $\cO(km)$ for the intermediate inner products evaluation. Consequently, the projection of one subspace into $\bb R^m$ through $\rop$ needs a total storage of $\cO(mn + km)$ values and a computational complexity of $\cO(kmn + km) = \cO(kmn)$.

Regarding the complexities of $\brop$ and $\crop$, considering that evaluating their respective non-linear functions brings negligible extra computations, we established through  Props.~\ref{prop:uniformksg} and \ref{prop:uniformkexp} that we must take $m=\cO(\delta^{-2} kn)$ random features, up to log factors, to ensure that the approximation error of the target kernels is uniformly bounded by $\delta > 0$. The overall memory complexity of $\brop$ and $\crop$ is thus $\cO(\delta^{-2} (kn^2 + k^2n))$\revR{R-FSmh}{$=\cO(kn^2)$ (since $k\leq n$)}, with a computational complexity of $\cO(\delta^{-2} k^2n^2)$. 

Overall, for large values of $k$ and $n$, these complexities are smaller complexities than those of the dense random projections $\denseCS$ introduced in Sec.~\ref{sec:dense-rand-proj}. Since dense random projections require us to pick $m=\cO(\delta^{-2} k^3n)$ projections to approximate the projection kernel with uniform error $\delta$, they have a complexity $\cO(\delta^{-2} k^3n^3)$ for both storing the $m$ probing matrices of size $n^2$ and computing the $m$ dense projections.

Nevertheless, the computational and memory complexities of $\rop$ may still be large for large datasets of subspaces in high dimensions.

\section{Faster random features over $\grass(k,n)$}
\label{sec:efficient}

A specific line of work has shown that random projections of vectors performed by dense random matrices can be replaced by compositions of \emph{fast orthogonal transforms} and \emph{random diagonal sign matrices}, creating vectors that behave similarly to \iid Gaussian vectors at a lower computational cost. Examples include the Fast Johnson–Lindenstrauss Transform \citep{AilonChazelle2009}, the Fastfood construction for random features \citep{Le2013}, the BCH–code based JL embeddings \citep{AilonLiberty2009}, Orthogonal Random Features \citep{Yu2016orthogonal}, or Hadamard blocks for compressive covariance sketching \citep{Vayer2023}. While differing in detail, these methods share a common leitmotif: a small number of random diagonal matrices mixed with fast transforms (Hadamard or Fourier), sometimes organised in blocks to generate large numbers of structured random vectors. In particular, the use of a small number of successive Hadamard–diagonal compositions has been studied explicitly in the context of structured random projections, with theoretical and empirical evidence that as few as three such blocks already yield distributions close to Gaussian \citep{Bojarski2017StructuredSpinners,Choromanski2016TripleSpin}.

We decide to adapt this philosophy to rank-one projections of low-rank matrices (obtained from low-rank subspace projectors) by adopting the method proposed in  \citep{Choromanski2016TripleSpin, Yu2016orthogonal, Vayer2023}. Mathematically, given a subspace basis $\bs U \in \stief(k,n)$ (with projector $\bs P = \bs U \bs U^\top$) made of $k$ orthonormal vectors $\bs U = (\bs u_1, \ldots, \bs u_k)$, defining $T=\lceil m/n\rceil$, we pick the $m$ probing vectors $\{\bs a_j\}_{j=1}^m$ and $\{\bs b_j\}_{j=1}^m$ involved in $\rop(\bs U)$ as the $m$ first columns of two $n \times Tn$ matrices $\overline{\bs G} := [\bs G_1, \ldots, \bs G_T]$ and $\overline{\bs G}' := [\bs G'_1, \ldots, \bs G'_T]$, respectively. Each $n \times n$ matrix $\bs G_t, \bs G'_t$, $1\leq t\leq T$, are \iid as the random matrix $\bs G$ whose \emph{distribution} is defined by the following $\HDHDfactors$-factor generative model:  
\begin{equation}
\label{eq:HDHD}
\ts \bs G = \sqrt{n}\, \prod_{j=1}^{\HDHDfactors} \big(\diag(\bs \varepsilon_j) \bs H \big),\quad \text{with}\ \bs \varepsilon_1, \ldots, \bs \varepsilon_\HDHDfactors \sim_\iid \bs \varepsilon,
\end{equation}
with the normalised Walsh-Hadamard matrix $\bs H \in \{\pm 1/\sqrt n\}^{n \times n} \cap {\sf O}(n)$, and $\bs \varepsilon$ an $n$-length Rademacher random vector with entries \iid as $\cl U(\pm 1)$. The normalisation $\sqrt{n}$ in \eqref{eq:HDHD} ensures that the columns of $\bs G$ have squared norm $n$, which is also the expected squared norm of a Gaussian random vector in $\bb R^n$.

For small values of $\HDHDfactors$ (typically $\HDHDfactors = 3$), picking $m$ columns of $\overline{\bs G}$ (or $\overline{\bs G}'$) is known to generate an $n \times m$ matrix $\bs W^\top$ with similar properties to an $n \times m$ Gaussian random matrix, \eg $\bs W$ satisfies the Johnson-Lindenstrauss lemma with high probability~\citep{AilonLiberty2009}.

Thanks to this specific design of the ROP probing vectors, to project the subspace $\bs U$, we first compute, for $1 \leq i \leq k$, 
\begin{equation}
  \label{eq:precomput-fft-random-feat}
  (\bs a_j^\top \bs u_i)_{j=1}^{m} = \bs S (\overline{\bs G}^\top \bs u_i),\quad   (\bs b_j^\top \bs u_i)_{j=1}^{m} = \bs S (\overline{\bs G}'{}^\top \bs u_i). 
\end{equation}
with $\bs S \in \{0,1\}^{m \times Tn}$ the selection matrix extracting the first $m$ components of any vector in $\bb R^{Tn}$. From the structure of $\overline{\bs G}$ and $\overline{\bs G}'$ and the use of the fast (butterfly) Walsh-Hadamard transform\footnote{We used the implementation developed for \citep{Andoni2015LSH} and available at \url{https://github.com/FALCONN-LIB/FFHT}.},  computing $\overline{\bs G}^\top \bs u_i$ and $\overline{\bs G}'{}^\top \bs u_i$ for all $1\leq i \leq k$ in \eqref{eq:precomput-fft-random-feat} requires $\cO(\HDHDfactors k T n\log n) = \cO(\HDHDfactors km \log n)$ operations, and storing $\cO(\HDHDfactors kT n) = \cO(\HDHDfactors k  m)$ Rademacher components (from \eqref{eq:HDHD}) as well as the storage of the $\cO(km)$ intermediate values computed in \eqref{eq:precomput-fft-random-feat}. The final multiplication by $\bs S$ has negligible complexity. 

Next, we form the fast, structured ROP operator 
$$
\srop(\bs U) = \Big( \sum_{i=1}^k \bs a_j^\top \bs u_i \bs u_i^\top \bs b_j \Big)_{j=1}^m = \Big( \bs a_j^\top \bs U \bs U^\top \bs b_j \Big)_{j=1}^m,
$$
which needs an extra $\cO(km)$ operations.

The whole computation of $\srop(\bs U)$ requires a total storage of $\cO(\HDHDfactors k  m)$ values, and a total computational complexity of $\cO(\HDHDfactors km \log n)$ operations.
In practice, we find that using more than $\HDHDfactors = 3$ provides no significant gain. This empirical observation is consistent with the findings in \citep[Equation~5]{Yu2016orthogonal} or \citet[Section~3.1]{Vayer2023}, where triple Hadamard blocks are shown to be sufficient for sketching covariance matrices.

Therefore, considering that $\HDHDfactors = \cO(1)$, and assuming that the results of Props.~\ref{prop:uniformksg} and \ref{prop:uniformkexp} still hold when $\brop$ and $\crop$ rely on $\rop$, if we take $m=\cO(\delta^{-2} kn)$ random features (up to log factors) to ensure that the approximation error of the target kernels is uniformly bounded by $\delta > 0$, we can state that the computation of both type of random features, binary or periodic, requires a total storage of $\cO(\delta^{-2} k^2 n)$ values, and a total computational complexity of $\cO(\delta^{-2} k^2 n \log n)$ operations.
  
We summarise the computational and memory costs of the various random feature maps introduced so far in Tab.~\ref{table:complexities}. 
\begin{table}[!t]
\centering
{
\renewcommand{\arraystretch}{1.5}
\scalebox{.9}{
\begin{tabular}{|l|l|l|l|l|l|}
\hline
  \textbf{Random feature}
& \textbf{\# of features $m$}   
& \textbf{Memory} 
& \textbf{Computation} 
& \textbf{Error bound} 
& \textbf{Target kernel} \\\hline

Dense $\denseCS$
& $\cO(\delta^{-2} k^3n)$
& $\cO(\delta^{-2}k^3n^3)$  
& $\cO(\delta^{-2}k^3n^3)$  
& Prop.~\ref{prop:dense-rand-proj-unif-bound} 
& Proj.\ kernel $\kpr$ \\\hline

Binary $\brop$
& $\cO(\delta^{-2} kn)$
& $\cO(\delta^{-2} (kn^2))$
& $\cO(\delta^{-2} k^2n^2)$ 
& Prop.~\ref{prop:uniformksg} 
& New kernel: $\ksg$ \\\hline

Periodic $\crop$ 
& $\cO(\delta^{-2} kn)$
& $\cO(\delta^{-2} (kn^2))$
& $\cO(\delta^{-2} k^2n^2)$ 
& Prop.~\ref{prop:uniformkexp} 
& New kernel: $\kexp$ \\\hline

Structured ($\supset \srop$) 
& $\cO(\delta^{-2} kn)$ (assumed)
& $\cO(\delta^{-2} k^2 n)$ 
& $\cO(\delta^{-2} k^2 n \log n)$ 
& (assumed) 
& Various: $\ksg$, $\kexp$ \\\hline

\end{tabular}}}
\caption{Computational and memory complexities per instance for $k$-dimensional subspaces of $\bb R^n$ to get a uniform approximation error $\delta > 0$ between the empirical kernel and the related target kernel.}
\label{table:complexities}
\end{table}

\section{Related Works}\label{sec:rw}

Representing data through low-dimensional subspaces is a well-established method. It arises naturally in applications such as representing images invariant under different illuminations \citep{Basri2003}, motion segmentation \citep{Rao2010}, and dynamic subspace modelling \citep{LiuEtAl2013,McGoniglePeng2021,Jiao2018}. In such settings, the object of interest is not individual samples but the subspace they span, leading naturally to the Grassmannian manifold $\grass(k,n)$ as the underlying representation space.

Early supervised classification methods for subspace-valued data relied on nearest-subspace algorithms, notably the CLAFIC approach of \citet{WatanabeEtAl1967,WatanabePakvasa1973}, which represents each class by a reference subspace and assigns labels by projecting new instances into the reference subspaces. More recent works embed subspaces into Euclidean spaces allowing efficient nearest-neighbour search. In particular, \citet{basri2011approximate} and \citet{ji2015angular} propose deterministic and binary embeddings based on vectorised projection matrices, approximately preserving the projection distance and enabling scalable classification.

\revR{R-XJL8}{Another related line of work is about Euclidean distance geometry and the recovery of Gram matrices from partial distance measurements \citep{dokmanic2015euclidean,tasissa2019exact}. In this setting, a low-rank positive-semidefinite matrix such as $\bs P=\bs U\bs U^\top$ can be interpreted as a Gram matrix, and measurements of the form $(\bs e_i-\bs e_j)^\top\bs P(\bs e_i-\bs e_j)$ correspond to squared distances between embedded points. This is related to our use of measurements of projection matrices, but with a different objective. They aim to recover the matrix $\bs P$ itself from limited measurements, whereas we aim to build random feature maps whose inner products approximate a Grassmannian kernel. Hence, bounded non-linear functions are not introduced for projector recovery, but to obtain kernel estimators with controlled concentration.}

Beyond distance-based methods, Grassmannian kernel machines are powerful tools for learning from subspace data. Classical kernels defined through principal angles have been extensively studied in \citet{harandi2014expanding}. While effective, these kernels suffer from poor scalability, as kernel-based learning requires forming and storing an $N\times N$ Gram matrix, which quickly becomes prohibitive for large datasets \citep{RahimiRecht2007}.

Several works have addressed this limitation through randomised representations. A first line of work applies random projections to vectorised projection matrices, relying on the Johnson-Lindenstrauss lemma to provide guarantees for control distortion \citep{basri2011approximate,ji2015angular}. A complementary approach, analysed in \citet{LiGu2018}, directly projects subspaces using dense Gaussian operators to embed $\grass(k,n)$ into a lower-dimensional Grassmannian $\grass(k,m)$ while approximately preserving pairwise distances. Although theoretically appealing, these approaches rely on dense random matrices and incur $\cO(n^2)$ storage or computation costs, limiting their applicability in high dimensions.

Binary and quantised measurements have also been explored for subspace estimation and search. In \citet{WangEtAl2013}, the authors propose a Grassmannian locality-sensitive hashing (LSH) scheme based on random one-dimensional subspaces, resulting in binary codes that preserve neighbourhood structure for approximate nearest-subspace search. More recently, \citet{chi2017subspace} introduce a simple sensing framework for recovering a principal subspace from binary measurements, demonstrating that accurate subspace estimation is possible without transmitting full data vectors. These works highlight the effectiveness of random angular comparisons and quantisation, though their focus lies on retrieval or subspace recovery rather than kernel approximation.

Closest in spirit to the present work is \citet{Wang2018} who propose random projection operators for fast nearest-subspace search, probing subspaces using random directions to construct compact summaries. While conceptually related to our use of random rank-one projections, their goal is efficient retrieval rather than the construction of random features whose inner products approximate Grassmannian kernels with theoretical guarantees.

In contrast to existing approaches, we combine kernel methods with carefully designed rank-one random projections to construct scalable random feature maps for Grassmannian data. Building upon preliminary binary sketching ideas introduced in \citet{Delogne2025}, we develop bounded random feature constructions that yield well-defined, positive-definite Grassmannian kernels depending only on principal angles, and establish uniform approximation guarantees. These properties allow kernel-based learning on subspaces at a computational and memory cost that scales linearly with the ambient dimension, enabling practical large-scale applications.

\section{Experiments}\label{sec:exp}
In this section we demonstrate experimentally the results developed in theory in the previous section. We start with some experiments to confirm the results in expectation of our kernels. We then give a basic analysis of the efficient random embedding methods introduced in section \ref{sec:efficient}. Finally, we demonstrate on a real dataset how our kernel approximations can be used in a real setting.

\subsection{Dense projections versus rank-one projections}
\label{sec:dense-vs-rop-exp}

\revR{R-FSmh}{As a first analysis, we want to stress the computational advantage of using (unbounded) rank-one projections instead of dense Gaussian projections in the objective of approximating the projection kernel $\kpr$. Indeed, the related empirical kernels both approximate $\kpr$, but their computational and memory costs differ substantially, since dense projections require storing $m$ full $n\times n$ matrices while ROP features only require $m$ pairs of vectors in $\bb R^n$. We remind that, as stated in Sec.~\ref{sec:rank-one-projections}, while unbounded ROPs allow for an approximation of the $\kpr$ on finite datasets, they are theoretically not optimal when one targets a uniform characterization of this kernel over the whole space $\grass(k,n)$.
 
We generate $100$ pairs of subspaces in $\grass(5,100)$ with pre-selected principal angles. For each value of $m$, we compare the approximated $\akd$ and $\akrop$ with the exact projection kernel $\kpr$. The experiment is repeated five times, and Fig.~\ref{fig:dense_vs_rop_projection_kernel} shows the average mean and maximum absolute errors of these approximations.

Both empirical kernels converge to $\kpr$ as $m$ increases. Dense projections are slightly more accurate, however, this comes at a much larger computational and memory cost. In this setting, storing dense matrices requires $mn^2$ real values, whereas storing ROP vectors requires only $2mn$. For $m=2000$, this makes about $\akd$ $50\times$ less efficient than $\akrop$ in terms of memory and about $125\times$ slower in terms of execution time. This illustrates the trade-off between a slight accuracy gain and computational efficiency, thus motivating the use of rank-one projections.

These observations confirm the computational interest of using ROPs for kernel approximations.}

\begin{figure}[t]
  \centering
  \includegraphics[width=0.5\textwidth]{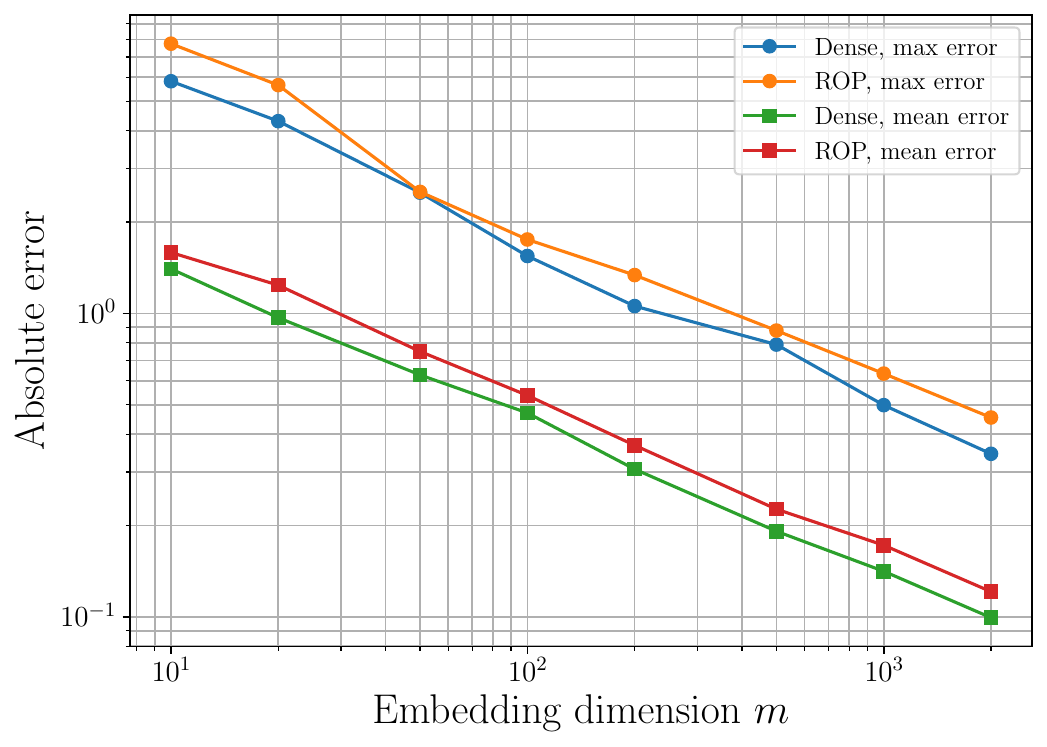}
  \caption{\revR{R-FSmh}{Approximation of the projection kernel with $\akd$ and $\akrop$ on $\grass(5,100)$. Both empirical kernels target $\kpr$. Both approximations converge, although dense projections are slightly more accurate, but at the cost of a much higher computational and memory cost.}}
  \label{fig:dense_vs_rop_projection_kernel}
\end{figure}

\subsection{Analysis of kernels}
\label{sec:analysis-kernels}

\begin{figure}[t]
    \centering

    \includegraphics[width=0.85\linewidth]{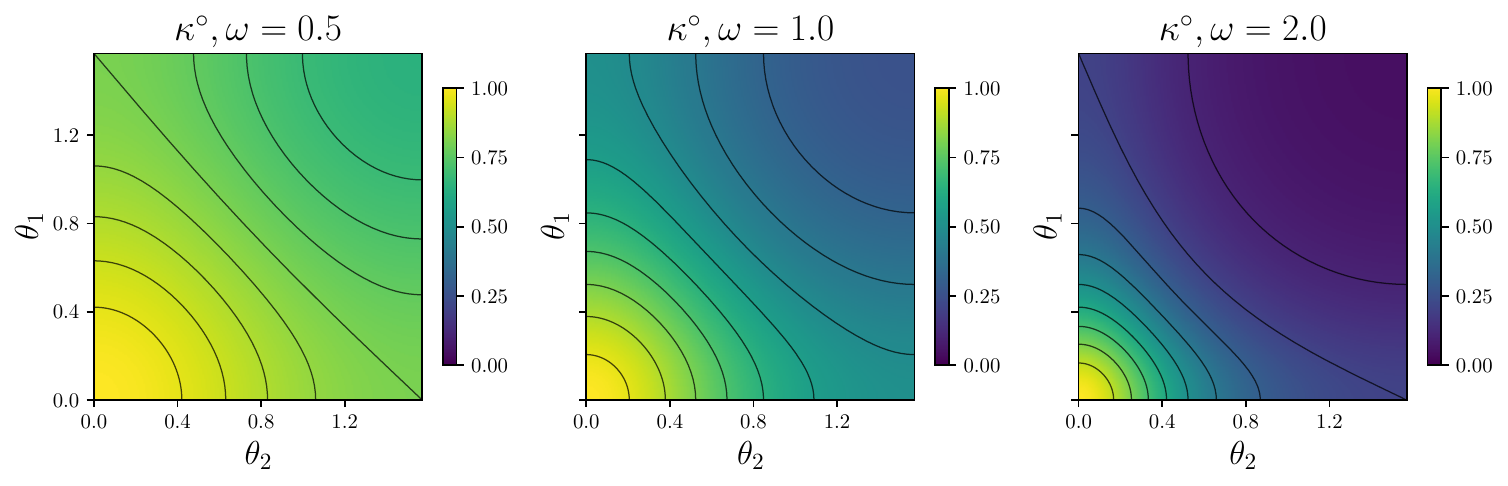}

    \vspace{1em} 

    \includegraphics[width=0.85\linewidth]{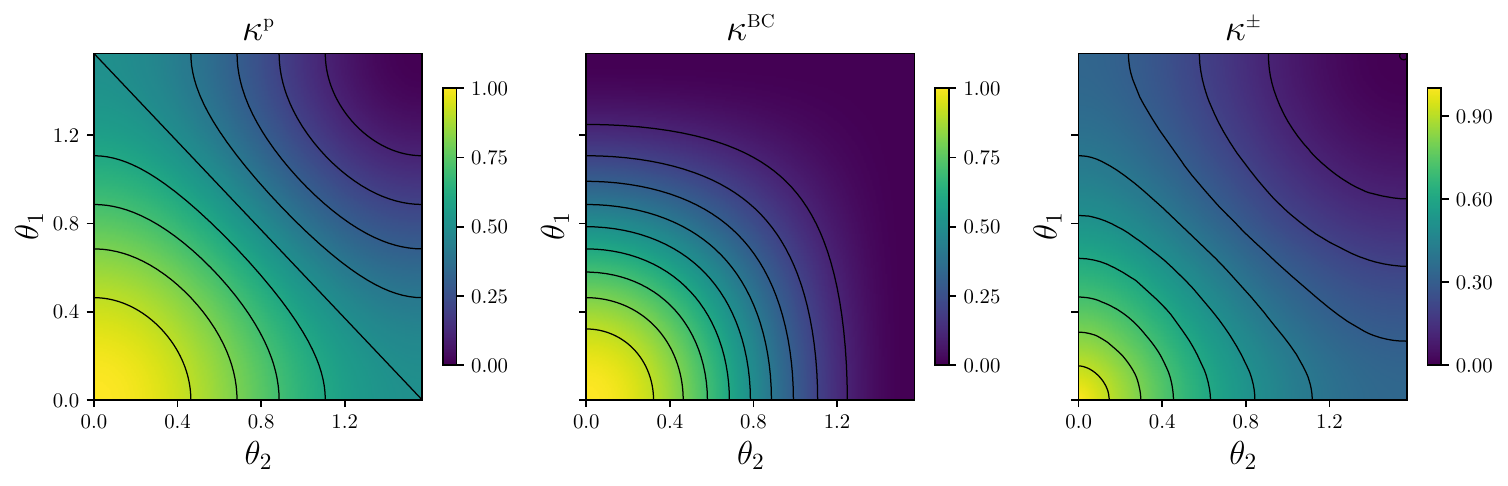}

    \caption{Top: geometry of the periodic Grassmannian kernel $\kexp$ for $k=2$ for various $\omega$. Bottom: comparison between the projection, Binet-Cauchy and sign kernels (all normalised to $[0,1]$) with $k=2$.}
    \label{fig:kernel_geometry_summary}
\end{figure}

We now illustrate the geometry of the periodic Grassmann kernel in the simple case $k=2$. Using the closed form expression, we evaluate $\kexp$ on a uniform grid of principal angles $(\theta_1,\theta_2) \in [0,\pi/2]^2$ and display the resulting $200\times 200$ heatmaps for three values of $\omega \in \{0.5,1,2\}$. 

We also execute the same experiment using the exact projection kernel $\kpr$, the Binet-Cauchy kernel $\kbc$ and the empirical sign kernel $\ksg$, evaluated with the formula of Prop.~\ref{prop:kern-sign-princ-ang}. The corresponding heat maps are illustrated in the lower row of Figure~\ref{fig:kernel_geometry_summary}. We see that all four kernels share the property of being large when the principal angles are small but their behaviour varies slightly as shown by the level curves. $\kbc$ and $\kexp$ with large $\omega$ show more sensitivity for small angles, $\kpr$, $\ksg$ and $\kexp$ with reasonable $\omega$ are well spread, and $\kexp$ with small $\omega$ shows a lot less sensitivity to large angles. This confirms that $\kexp$ gives a trade-off between a locally and a globally sensitive similarity metric on $\mathcal G(2,n)$.

\subsection{Analysis of the structured embedding}
\label{sec:analys-struct-sketch}

\begin{figure}[t]
  \centering
  \includegraphics[width=0.8\textwidth]{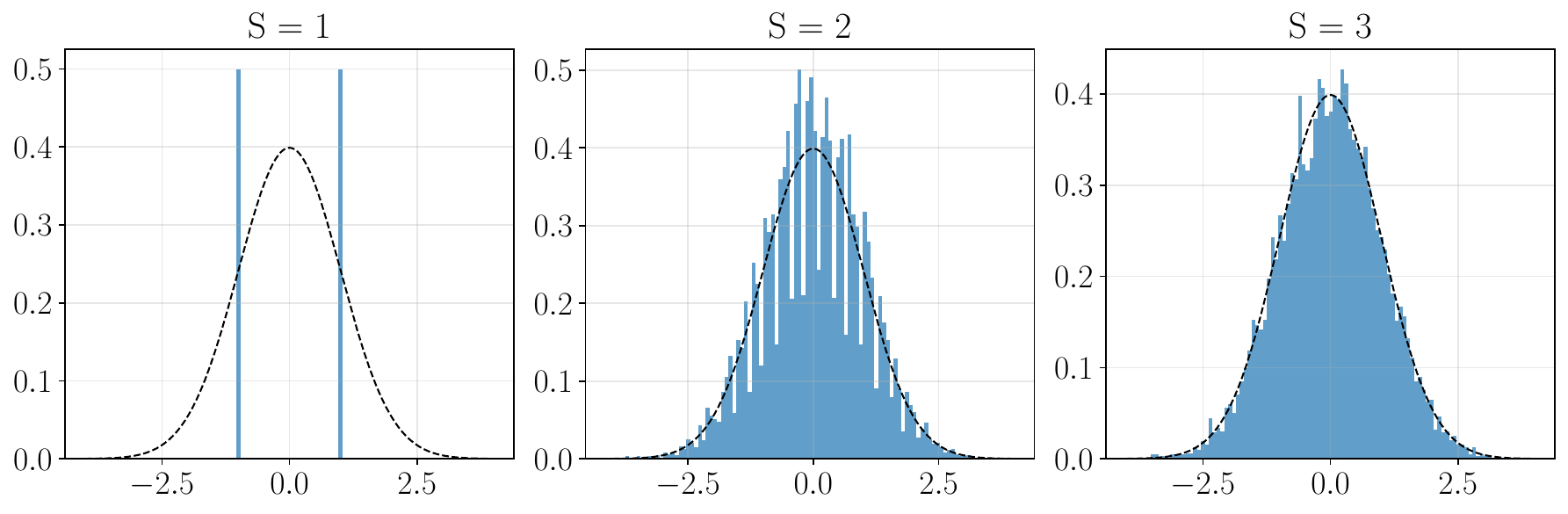}
  \vspace{1em} 
  \includegraphics[width=0.8\textwidth]{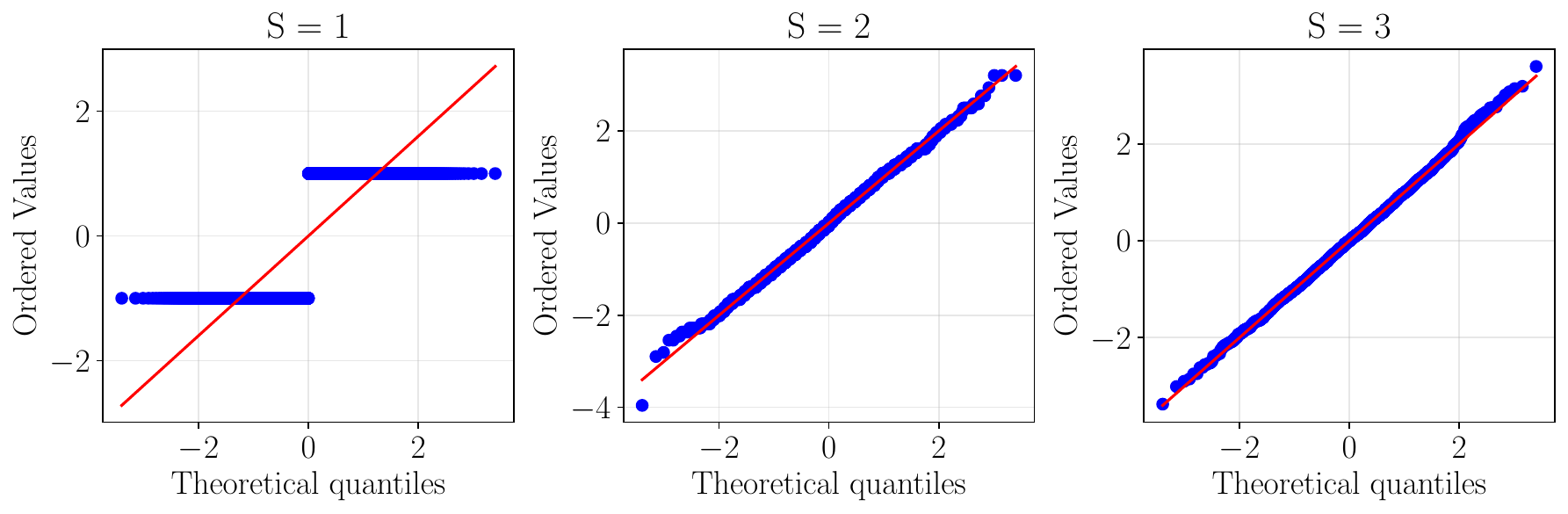}
  \caption{\textbf{Top:} Empirical distribution of a row of the structured Hadamard--diagonal sketch for increasing numbers of Hadamard blocks $S$.
\textbf{Bottom:} Q--Q plots comparing the row-wise distribution to a standard Gaussian.
}\label{fig:hadamard_sketch_analysis}
\end{figure}

In this experiment, we investigate how the depth $S$ of the structured Hadamard--diagonal sketch influences the behaviour of individual rows of the resulting random projection matrix. For a fixed dimension $n$ and a fixed row index $j$, we consider the random vector $\bs u = \sqrt{n}\,\bs e_j^\top \bs G$, where $\bs G$ is generated according to the model in~\eqref{eq:HDHD}, and $\bs e_j$ denotes the $j$-th canonical basis vector of $\bb R^n$. For increasing values of the number of Hadamard blocks $S$, we examine the empirical distribution of the entries of $\bs u$, which corresponds to the distribution of the coefficients of a single row of the structured sketching matrix.

Figure~\ref{fig:hadamard_sketch_analysis} shows both the empirical histograms of the row entries (top) and the corresponding Q-Q plots against a standard Gaussian (bottom). For $S=1$, the distribution is clearly non-Gaussian. As $S$ increases, the empirical distributions progressively smooth out and become increasingly symmetric, while the Q-Q plots approach the identity line. Already at $S=3$, the row-wise distribution closely matches a Gaussian.

These observations empirically support the use of $\HDHDfactors=3$ Hadamard-diagonal blocks to generate structured random projections with an approximately Gaussian distribution. This corroborates earlier findings on structured matrices~\citep{Choromanski2016TripleSpin,Bojarski2017StructuredSpinners}. In all subsequent experiments, we will therefore fix $S=3$ unless otherwise stated.

\subsection{ETH-80 Classification}
\label{sec:eth-80-class}

\begin{figure}[t]
    \centering
    \includegraphics[width=0.45\linewidth]{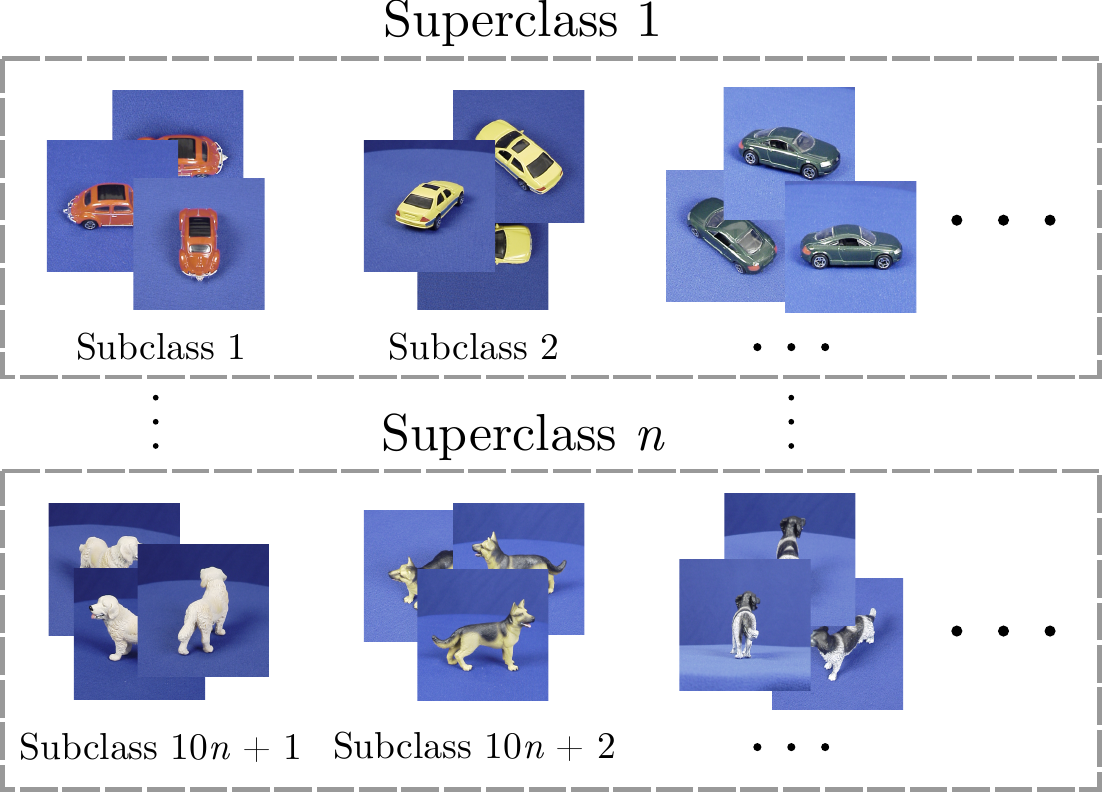}
    \caption{Structure of the ETH-80 dataset}
    \label{fig:eth80_example}
\end{figure}

We evaluate our methods on the ETH-80 dataset, which consists of $80$ objects grouped into $8$ \emph{superclasses} (\eg apple, car, cow), each object being captured from $41$ viewpoints under moderate illumination variations. Figure~\ref{fig:eth80_example} illustrates the hierarchical structure of the dataset, with multiple images per object and multiple objects per superclass. All images are resized to $32\times32$ pixels, converted to greyscale, so that $n=32\times32=1024$.

\revR{R-FSmh}{This experiment is an image-set classification problem, in line with other subspace classification experiments \citep{hamm2008grassmann,ji2015angular,Wei2020}. Each instance to be classified is a \emph{group} of images of the same object acquired under different camera poses or illumination conditions. This group is then represented by the subspace spanned by the corresponding observations.

More precisely, given a collection of images associated with one object, we collect them into a matrix $\bs X\in\bb R^{n\times N_i}$ and extract an orthonormal basis $\bs U\in\bb R^{n\times k}$ by retaining the $k$ leading left singular vectors of $\bs X$. Unless otherwise stated, we fix $k=9$, in line with the empirical observations reported in~\citep{ji2015angular}. The resulting classification problem is therefore a classification problem between subspace-valued instances. It does not address the separate problem of assigning a single isolated image to a subspace.}

We consider two classification settings of increasing difficulty. In the first one, \emph{superclass (8-way) classification}, each object is represented by a single subspace, but labels are given at the superclass level. This setting results in a small number of subspaces and is useful to assess the approximation quality of the proposed approximated kernels. In the second one, \emph{object (80-way) classification}, each object constitutes its own class. To increase the number of available subspaces and stress the computational aspects of kernel evaluation, multiple subspaces are generated per object by subsampling images.

For both settings, we compare the exact projection kernel $\kpr$ to its random approximation $\akrop$, evaluate the binary kernel $\aksg$ (for which we do not have a closed-form $\ksg$ available), and compare the periodic kernel $\akexp$ to its exact version $\kexp$. All experiments are further repeated with the structured variants of the rank-one projections introduced in Sec.~\ref{sec:efficient}.

\revR{R-FSmh}{Recall that $\akrop$ targets the exact projection kernel $\kpr$, whereas the binary and periodic constructions induce the kernels $\ksg$ and $\kexp$, respectively. Therefore, differences in classification accuracy may reflect both the quality of the random approximation and the suitability of the corresponding Grassmannian kernel for the considered dataset. Our goal is not to show that bounded ROP features always outperform unbounded ROP features in finite-data classification, but instead to show that they provide light random features associated with kernels for which stronger uniform approximation guarantees can be computed.}

\subsubsection{Superclass (8-way) classification}
\label{sec:super-class-class}

In this first setting, we perform superclass classification on ETH-80, following a similar experiment as~\citep{Wei2020}. Each object is represented by a single $k$-dimensional subspace, while labels are assigned at the superclass level. For each superclass, we randomly select $7$ objects for training and $3$ for testing. This makes a total of $56$ training subspaces and $24$ test subspaces. Subspaces are constructed as described above, keeping $k=9$ dimensions. Test subspaces are built independently from the corresponding test objects.

We begin by evaluating the exact projection kernel $\kpr$ and the exact periodic kernel $\kexp$, both of which reach perfect classification accuracy on this task. This confirms that the problem is well suited to subspace-based representations and kernel methods. We then replace these exact kernels with their approximated versions using both unstructured and structured variants.

To compare embedding sizes across methods, we show the results as a function of the \emph{oversampling ratio} defined as $\ts \rho = \frac{m}{nk}$, where $m$ denotes the embedding dimension. Small values of $\rho$ correspond to compressed representations, $\rho\approx 1$ to embeddings of comparable size to the original basis, and $\rho\gg 1$ to overcomplete embeddings. In the following experiments, we focus on $\rho=0.05$ and $\rho=0.20$, which already provide good accuracy results.

Figure~\ref{fig:accvsm_eth80_scenarioA} shows the classification accuracy as a function of $\rho$, while Tab.~\ref{tab:runtime_accuracy_A} summarises runtimes and accuracies for the two selected values of $\rho$. As expected, increasing $\rho$ improves the accuracy of all random embedding methods, with unstructured embeddings converging quickly towards the performance of the exact kernels. Structured embeddings show the same behaviour, although requiring larger embedding sizes to reach similar accuracy, but with more compact representations. \revR{R-FSmh}{It is important to keep in mind that different random embeddings do not approximate the same kernel. Unbounded ROP features approximate the projection kernel $\kpr$, whereas binary and periodic ROP features approximate or induce the kernels $\ksg$ and $\kexp$. Therefore, differences in accuracy reflects both the quality of the random approximation and the suitability of the corresponding kernel for the dataset and task considered.}

From a computational perspective, this setting is too small for random embeddings to fully show their strength. With only $56$ training subspaces, exact kernel computation is already fast, and unstructured embeddings do not yet accelerate the procedure. Structured embeddings are consistently faster than unstructured ones, but the overall gains remain modest in this setting. This experiment is therefore primarily a validation of the approximation behaviour of the proposed kernels. The computational benefits of random embeddings will become apparent when the number of subspaces increases, as shown in the next section.

\begin{figure}[t]
    \centering
    \begin{subfigure}[t]{0.48\linewidth}
        \centering
        \includegraphics[width=\linewidth]{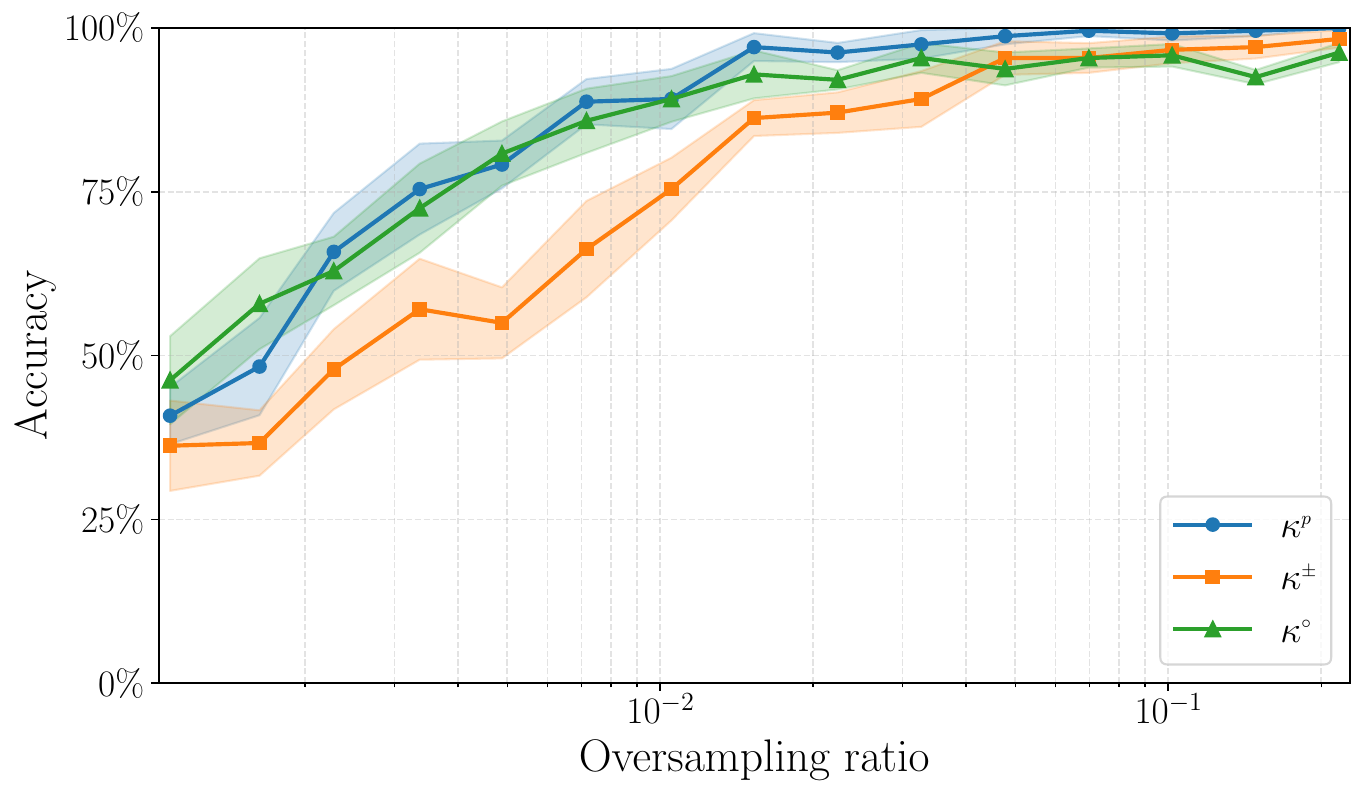}
        \caption{Accuracy vs $m$ for superclass classification.}
        \label{fig:accvsm_eth80_scenarioA}
    \end{subfigure}\hfill
    \begin{subfigure}[t]{0.48\linewidth}
        \centering
        \includegraphics[width=\linewidth]{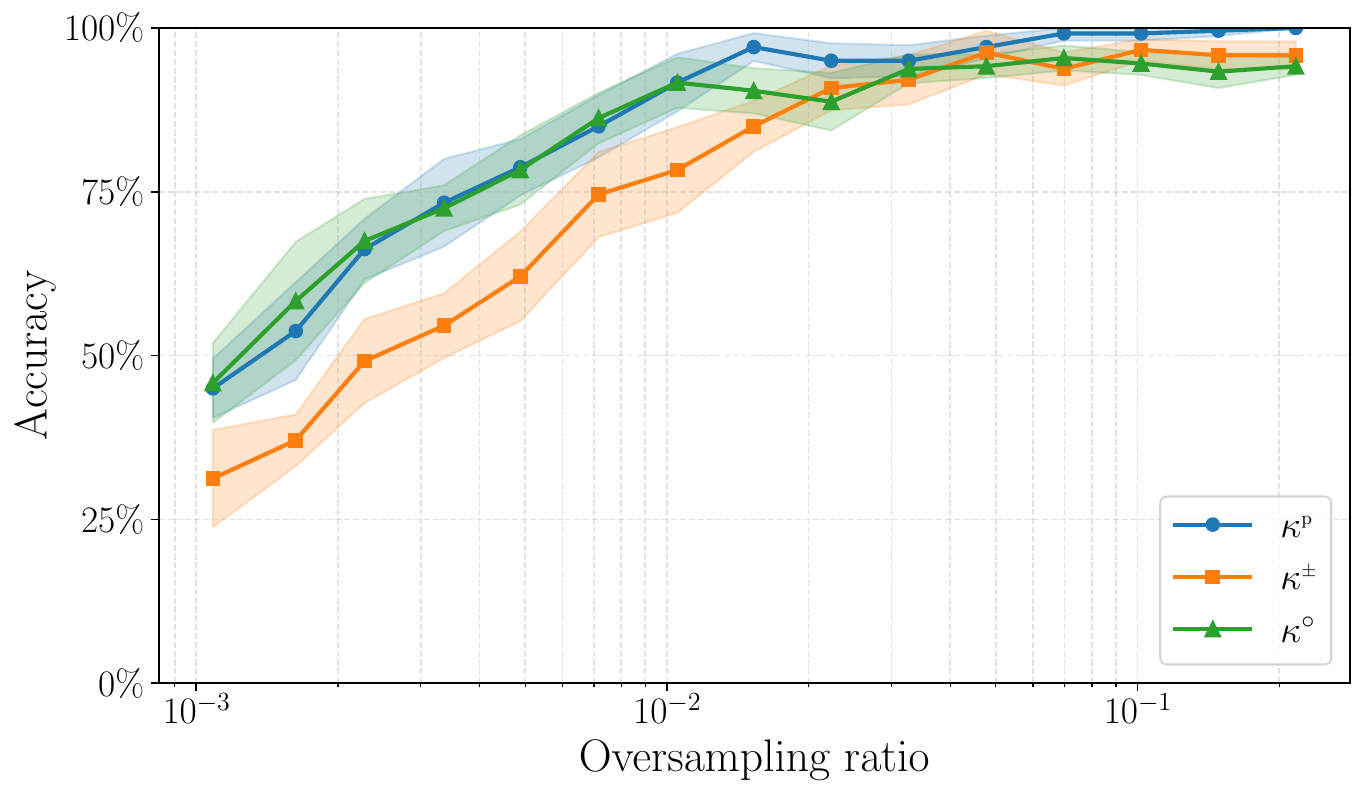}
        \caption{Structured embedding with Fourier embedding with $\HDHDfactors=3$ and varying $m$ }
        \label{fig:accvsm_eth80_structured_scenarioA}
    \end{subfigure}
    \caption{ETH-80 superclass classification: (a) unstructured embedding and (b) structured embedding.}
    \label{fig:accvsm_eth80_scenarioA_all}
\end{figure}

\begin{table}[t]
\centering
\renewcommand{\arraystretch}{1.3}
\scalebox{0.7}{
\begin{tabular}{|l|c|c|c|}
\hline
\textbf{Method} 
& \textbf{$\rho$} 
& \textbf{Total time (s)} 
& \textbf{Accuracy (\%)} 
\\\hline\hline

\multicolumn{4}{|c|}{\textbf{Exact kernels (no embedding)}} \\\hline

Projection kernel $\kpr$ 
& --- 
& 0.2633 
& 100.00 
\\\hline

Periodic kernel $\kexp$ 
& --- 
& 0.2381 
& 100.00 
\\\hline\hline

\multicolumn{4}{|c|}{\textbf{Random embeddings at two oversampling ratios}} 
\\\hline

ROP 
& 0.05 / 0.20 
& 1.3211 / 5.3102 
& 98.12 / 99.79 
\\\hline

Binary ROP 
& 0.05 / 0.20 
& 1.3211 / 5.3111 
& 93.12 / 96.04 
\\\hline

Periodic ROP 
& 0.05 / 0.20 
& 1.3240 / 5.3145 
& 93.54 / 93.75 
\\\hline

Structured 
& 0.05 / 0.20 
& 0.1046 / 0.1963 
& 94.79 / 93.54 
\\\hline

Structured Binary 
& 0.05 / 0.20 
& 0.1048 / 0.1972 
& 92.29 / 96.46 
\\\hline

Structured Periodic 
& 0.05 / 0.20 
& 0.1055 / 0.2025 
& 94.79 / 93.54 
\\\hline

\end{tabular}
}
\caption{Superclass classification. Total time includes embedding computation and SVM training/evaluation. Results averaged over 20 runs with different random embeddings.}
\label{tab:runtime_accuracy_A}
\end{table}

\subsubsection{Object (80-way) classification}
\label{sec:sub-class-class}

We now turn to the object classification task, in which each individual object constitutes its own class. This setting is significantly more complex, both in terms of classification difficulty and computational cost, and is well-suited to highlight the advantages of the approximated kernels.

For each object, the $41$ images are randomly split into $28$ training images and $13$ testing images. From the training images, we randomly select $15$ images to construct a $k$-dimensional subspace, keeping $k=9$ dimensions. This procedure is repeated $10$ times per object, yielding $10$ training subspaces in $\grass(9,1024)$ for each of the $80$ objects. At test time, a single subspace per object is built from the remaining $13$ images, still with $k=9$. This ensures that no test image contributes in any way to the construction of the training subspaces.

As in the previous setting, we first evaluate the exact projection kernel $\kpr$ and the exact periodic kernel $\kexp$. While both kernels achieve strong classification performance, the cost of computing the corresponding kernel matrices now becomes substantial due to the large number of subspaces involved, with computation times close to one minute under our experimental conditions.

We again evaluate the approximated versions of these kernels, using both unstructured and structured embeddings. Classification accuracy as a function of the oversamplingratio ratio $\rho$ is shown in Fig.~\ref{fig:accvsm_eth80_scenarioB_all}, with unstructured embeddings shown in Fig.~\ref{fig:accvsm_eth80_scenarioB} and structured embeddings in Fig.~\ref{fig:accvsm_eth80_structured_scenarioB}. Runtimes and accuracies at the same oversampling ratios $\rho=0.05$ and $\rho=0.20$ as in the superclass setting are summarised in Tab.~\ref{tab:runtime_accuracy_B}.

The accuracy trends are consistent with those observed previously: increasing $\rho$ improves performance for all random methods, although larger embedding sizes are required in this more complex setting to approach the accuracy of the exact kernels. Unstructured embeddings eventually get to the performance of the exact projection kernel, but at the cost of increased computation time, which can exceed that of the deterministic kernels.

In contrast, the structured embeddings reach excellent classification accuracy while being substantially faster than the exact kernels. In particular, for $\rho=0.05$ and $\rho=0.20$, structured embeddings run much faster compared to exact kernel computation. This experiment clearly demonstrates the computational advantages of structured random embeddings in large-scale Grassmannian kernel methods.

\begin{figure}[t]
    \centering
    \begin{subfigure}[t]{0.48\linewidth}
        \centering
        \includegraphics[width=\linewidth]{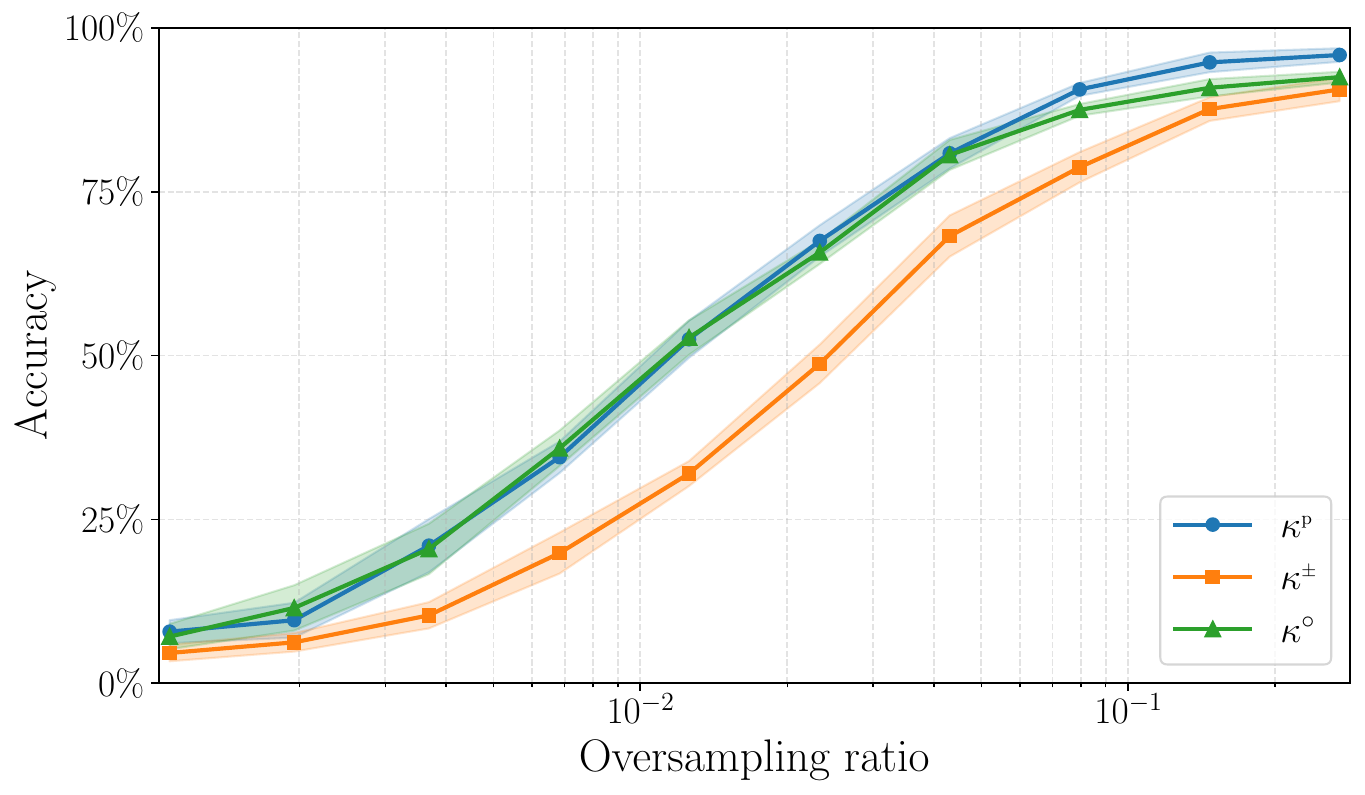}
        \caption{Accuracy vs $m$ for sub-class classification with unstructured embedding}
        \label{fig:accvsm_eth80_scenarioB}
    \end{subfigure}\hfill
    \begin{subfigure}[t]{0.48\linewidth}
        \centering
        \includegraphics[width=\linewidth]{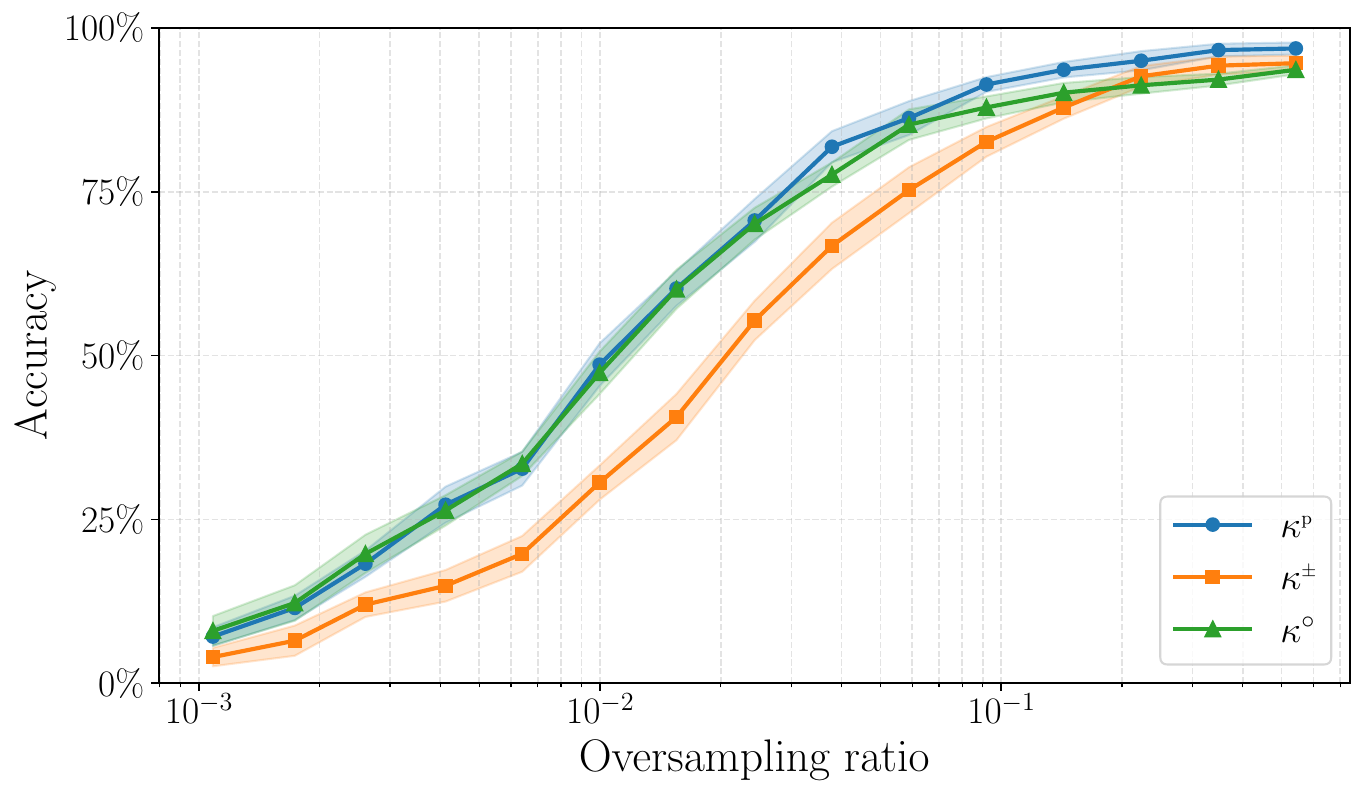}
        \caption{Accuracy vs $m$ for sub-class classification with structured embedding and $\HDHDfactors=3$}
        \label{fig:accvsm_eth80_structured_scenarioB}
    \end{subfigure}
    \caption{ETH-80 object (80-way) classification: (a) unstructured embedding and (b) structured embedding.}
    \label{fig:accvsm_eth80_scenarioB_all}
\end{figure}

\begin{table}[t]
\centering
\renewcommand{\arraystretch}{1.3}
\scalebox{0.7}{
\begin{tabular}{|l|c|c|c|}
\hline
\textbf{Method} 
& \textbf{$\rho$} 
& \textbf{Total time (s)} 
& \textbf{Accuracy (\%)} 
\\\hline\hline

\multicolumn{4}{|c|}{\textbf{Exact kernels (no embedding)}} \\\hline

Projection kernel $\kpr$ 
& --- 
& 52.8856 
& 98.75 
\\\hline

Periodic kernel $\kexp$ 
& --- 
& 58.7874 
& 90.00 
\\\hline\hline

\multicolumn{4}{|c|}{\textbf{Sketched methods at two oversampling ratios}} 
\\\hline

ROP 
& 0.05 / 0.20 
& 24.1868 / 85.2087 
& 84.19 / 94.06 
\\\hline

Binary ROP 
& 0.05 / 0.20 
& 24.2049 / 85.2449 
& 72.25 / 91.75 
\\\hline

Periodic ROP 
& 0.05 / 0.20 
& 24.3763 / 86.4142 
& 81.38 / 92.31 
\\\hline

Structured 
& 0.05 / 0.20 
& 1.8729 / 4.0293 
& 83.00 / 92.38 
\\\hline

Structured Binary 
& 0.05 / 0.20 
& 1.8925 / 4.0837 
& 72.13 / 91.25 
\\\hline

Structured Periodic 
& 0.05 / 0.20 
& 1.9967 / 5.2383 
& 83.00 / 92.38 
\\\hline

\end{tabular}
}
\caption{Sub-class classification. Total time includes sketch computation and SVM training/evaluation. Results averaged over 20 runs with different random sketches.}
\label{tab:runtime_accuracy_B}
\end{table}

\section{Conclusion}\label{sec:conclusion}

In this work, we proposed random feature methods for approximating Grassmannian kernels, both from the theoretical and computational standpoint. We started from the observation that dense random projections are impractical in high dimensions due to their memory and computational costs and introduced asymmetric rank-one projections (ROP) as a lightweight alternative for constructing kernel approximations on the Grassmannian manifold.

Since ROP measurements have heavy tails, we combined them with bounded non-linear functions to obtain stable kernel estimators with controlled concentration. We first focused on the sign function that led to a compact representation with statistical guarantees, although it does not admit a closed-form expression for the associated kernel. We also discussed how the complex exponential function yielded a new closed-form Grassmannian kernel. For both constructions, we provided approximation guarantees with explicit control of the probability of failure uniformly over all pairs of subspaces.

We further demonstrated the practical relevance of these random embeddings through experiments on a real-world image dataset. Both the basic ROP and its sign and periodic variants achieved high classification accuracy using embeddings as small as $5\%$ of the original subspace representation size. Using structured random mappings enabled additional computational gains while preserving accuracy, making these methods particularly well suited for kernel-based classification with random features in large-scale settings.

\revR{R-FSmh}{Several directions are still open for future work. From a theoretical perspective, a better understanding of the good behaviour of unbounded ROP in a finite-dataset context, despite its heavy-tailed nature and weaker uniform guarantees over the full Grassmannian, would be of interest. Another natural direction is the study of alternative bounded non-linear functions leading to potentially new kernel approximations.} The structured embedding strategy also calls for a dedicated theoretical analysis to better characterise its performance. Finally, optical computing offers another promising way to further accelerate these random projections by implementing Gaussian-like rank-one measurements directly in hardware~\citep{Saade2016}.

\appendix
\section{Proof of Prop.~\ref{prop:uniformksg}}
\label{app:binary}

Since $\brop = \sign \circ~\rop$ is discontinuous, it is not straightforward to prove a uniform bound on the error of the approximation $\aksg \approx \ksg$, \ie Prop.~\ref{prop:uniformksg}. We thus need to study the stability of $\brop$ in different ways, \eg we need to know the probability that it could be discontinuous within a small neighbourhood. Let us first observe what is the stability of a related random mapping applied to vectors, \ie $\bs x \in \bb R^n \mapsto \sign( \bs a^\top \bs x)$.  

\begin{lemma}
  \label{lem:stab-onebitmap-vector}
  Let $\cl C_\alpha(\bs u) = \{\bs x \in \bb R^n: \angle(\bs x, \bs u) \leq \alpha\}$ be a cone of axis $\bs u \in \bb S^{n-1}$ and half aperture $0\leq \alpha \leq \pi$. Given the random map $\bropz: \bs x \in \bb R^n \mapsto \sign(\bs a^\top\bs x)$ with $\bs a \sim \cl N(\bZ, \Id_n)$, we have $\bb P\big[ \bropz \notin {\sf C}^0\big(\cl C_\alpha(\bs u)\big) \big] = 1$ if $\alpha > \pi/2$ and
  $$
  \ts  \bb P\big[ \bropz \notin {\sf C}^0\big(\cl C_\alpha(\bs u)\big) \big] \leq \sqrt{\frac{2}{\pi}} (\tan \alpha) \sqrt n,\quad \text{if $0 < \alpha \leq \pi/2$},
  $$
  where ${\sf C}^0(\cl S)$ denotes continuous functions on a set $\cl S$.
\end{lemma}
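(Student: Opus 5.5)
First reduce the event to a geometric condition on $\bs a$. The map $\bropz(\bs x)=\sign(\bs a^\top\bs x)$ is piecewise constant, and its only discontinuities lie on the hyperplane $\bs a^\perp=\{\bs x:\bs a^\top\bs x=0\}$. Hence $\bropz$ fails to be continuous on $\cl C_\alpha(\bs u)$ exactly when this hyperplane meets the cone away from the apex, with the cone on both sides of it. (The apex $\bs x=\bZ$ is always a discontinuity point. I will follow the paper's implicit convention of ignoring it, e.g.\ by working on $\cl C_\alpha(\bs u)\setminus\{\bZ\}$ or on the normalised cone.)

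Writing $\varphi=\angle(\bs a,\bs u)\in[0,\pi]$, the hyperplane $\bs a^\perp$ cuts the cone if and only if $|\pi/2-\varphi|\le\alpha$. For this, use that the set of angles between $\bs u$ and the directions in $\bs a^\perp$ has minimum $|\pi/2-\varphi|$. Equivalently, the event is $|\bs a^\top\bs u|\le \sin\alpha\,\|\bs a\|$. If $\alpha>\pi/2$, the cone contains a half-space together with its boundary plus extra directions, so every hyperplane through the origin meets its interior; the event then has probability one.

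For $0<\alpha\le\pi/2$, decompose $\bs a=g\,\bs u+\bs a_\perp$ with $g=\bs u^\top\bs a\sim\cl N(0,1)$ independent of $\bs a_\perp\in\bs u^\perp$. The condition $|g|\le \sin\alpha\sqrt{g^2+\|\bs a_\perp\|^2}$ becomes $|g|\cos\alpha\le\sin\alpha\|\bs a_\perp\|$, that is, $|g|\le\tan\alpha\,\|\bs a_\perp\|$. Conditioning on $\bs a_\perp$ and using that the density of $g$ is at most $1/\sqrt{2\pi}$ gives $\bb P(|g|\le t)\le 2t/\sqrt{2\pi}=\sqrt{2/\pi}\,t$. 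Taking expectations and applying Jensen,
$$\bb P[\cdot]\le\sqrt{2/\pi}\,\tan\alpha\,\bb E\|\bs a_\perp\|\le\sqrt{2/\pi}\,\tan\alpha\,\sqrt{n-1}\le\sqrt{2/\pi}\,(\tan\alpha)\sqrt n.$$

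The main care point is the first, geometric step. It must cleanly justify that a discontinuity inside the cone is equivalent to $\angle(\bs a^\perp,\bs u)\le\alpha$; closed versus open cones only matter on null sets. It must also handle the apex convention. The probabilistic part is routine.
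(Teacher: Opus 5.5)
Your proof is correct and follows essentially the same route as the paper: reduce discontinuity to $|\bs a^\top\bs u|\le\sin\alpha\,\|\bs a\|$, split $\bs a$ into its component along $\bs u$ and an independent orthogonal part (the paper does this by rotating $\bs u$ to the first canonical vector), rewrite the event as $|g|\le\tan\alpha\,\|\bs a_\perp\|$, bound the conditional probability using the Gaussian density bound $1/\sqrt{2\pi}$, and finish with Jensen to get $\bb E\|\bs a_\perp\|\le\sqrt{n-1}$. You also justify the geometric equivalence and the $\alpha>\pi/2$ case, which the paper only asserts, and you correctly note that the apex must be excluded: with the convention $\sign(0)=-1$, the map is otherwise discontinuous at $\bZ$ with probability at least $1/2$.
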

\begin{proof}
  The proof is adapted from \citep[Corollary 12]{tachellalearning}. We first observe that $\bropz$ is discontinuous over $\cl C_\alpha(\bs u)$ iff $|\bs a^{\top} \bs u| \leq \epsilon \|\bs a\|$ with $\epsilon := \sin \alpha$. Therefore, by the rotational invariance of the Gaussian distribution we can choose $\bs u = (1,0, \ldots, 0)^{\top}$ and the probability above amounts to computing
$$
\ts p:= \bb{P}[|a_1|^2 \leq \epsilon^2 \|\bs a\|^2] = \bb{P}[a_1^2 \leq \frac{\epsilon^2}{(1-\epsilon^2)}(a_2^2+\ldots+a_n^2)] = \bb{E}_{\xi} \bb{P}[a_1^2 \leq \frac{\epsilon^2}{(1-\epsilon^2)} \xi|\xi],
$$
where $\xi \sim \chi^2(n-1)$ is independent of $a_1$. This gives
$$
\ts \bb{P}[a_1^2 \leq \frac{\epsilon^2}{(1-\epsilon^2)} \xi|\xi] \leq \sqrt{\frac{2}{\pi}} \frac{\epsilon}{\sqrt{1-\epsilon^2}} \sqrt{\xi} = \sqrt{\frac{2}{\pi}} (\tan \alpha) \sqrt{\xi}
$$
Observing that $\bb{E}_{\xi} \sqrt{\xi} \leq \sqrt{\bb{E}_{\xi} \xi} \leq \sqrt{n-1} \leq \sqrt{n}$ by Jensen's inequality gives the result.
\end{proof}

For this result, we can show that, when the projectors of two subspaces are close in Frobenius norm, each component of the binary random features related to $\brop$ coincide with high probability. 

\begin{lemma}[Local stability of the binary embedding]\label{lemma:continuity}
  We consider the subspace $\bs P^* \in \pgrass(k,n)$ and the random vectors $\bs a,\bs b \sim \cl N(\bZ,\Id_n)$. Define the map $\bropz: \bs M \in \bb R^{n\times n} \mapsto \sign(\bs a^\top \bs M \bs b)$, the ball $\bb B_\epsilon^F(\bs M^*)=\{\bs M \in \bb R^{n\times n}:\|\bs M-\bs P^*\|_F\le\epsilon\}$ of radius $\epsilon >0$ and centred on $\bs M^* \in \bb R^{n \times n}$, and the neighbourhood $\cl N_\epsilon(\bs P^*):=\bb B_\epsilon^F(\bs P^*) \cap \pgrass(k,n)$. Then,
  $$
  \bb P\Big[\exists \bs P,\bs Q \in \cl N_\epsilon(\bs P^*):\bropz(\bs P)\neq\bropz(\bs Q)\Big]
  = \bb P\Big[\bropz\notin {\sf C}^0\big(\cl N_\epsilon(\bs P^*)\big)\Big]
  \ts \le 4 \frac{n+k}{\sqrt k} \revR{EiC}{\epsilon^{\frac{k}{k+1}}}.
  $$
\end{lemma}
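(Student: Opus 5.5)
The plan is to condition on $\bs b$, which reduces the problem to the vector case of Lemma~\ref{lem:stab-onebitmap-vector} applied to the random vector $\bs a$. The first equality in the statement is just the definition of non-constancy of a $\{\pm1\}$-valued map on a connected set, so I focus on the inequality.

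\textbf{Step 1 (reduction to a cone).} Fix $\bs b$ and set $\bs x^* := \bs P^*\bs b$. For any $\bs P\in\cl N_\epsilon(\bs P^*)$ we have
\[
\|\bs P\bs b-\bs x^*\|\le \|\bs P-\bs P^*\|\,\|\bs b\|\le \|\bs P-\bs P^*\|_F\|\bs b\|\le \epsilon\|\bs b\|.
\]
So, whenever $\epsilon\|\bs b\|<\|\bs x^*\|$, every $\bs P\bs b$ lies in the cone $\cl C_\alpha(\bs x^*/\|\bs x^*\|)$ with $\sin\alpha = \epsilon\|\bs b\|/\|\bs P^*\bs b\|=: \epsilon/r(\bs b)$. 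Since $\bropz(\bs P)=\sign(\bs a^\top(\bs P\bs b))$, non-constancy of $\bropz$ on $\cl N_\epsilon(\bs P^*)$ implies non-constancy of $\bs x\mapsto\sign(\bs a^\top\bs x)$ on that cone. Because $\bs a$ is independent of $\bs b$, Lemma~\ref{lem:stab-onebitmap-vector} applies conditionally and gives
\[
\bb P[\,\cdot\,|\bs b]\le \sqrt{2/\pi}\,\sqrt n\,\tan\alpha .
\]
If $\epsilon/r\le 1/2$, then $\tan\alpha\le \tfrac{2}{\sqrt3}\,\epsilon/r$.

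\textbf{Step 2 (splitting on the ratio).} The ratio $r(\bs b)^2=\|\bs P^*\bs b\|^2/\|\bs b\|^2$ is $\mathrm{Beta}(k/2,(n-k)/2)$-distributed, by rotational invariance. Choose a threshold $\lambda\ge 2\epsilon$ and split on the event $\{r\ge\lambda\}$:
\[
\bb P[\bropz\notin{\sf C}^0]\le \tfrac{2}{\sqrt3}\sqrt{2/\pi}\,\sqrt n\,\frac{\epsilon}{\lambda}+\bb P[r\le\lambda].
\]
For the second term I would use a small-ball bound for the Beta law. Bounding the density by $\frac{\Gamma(n/2)}{\Gamma(k/2)\Gamma((n-k)/2)}t^{k/2-1}$ and integrating gives $\bb P[r^2\le\lambda^2]\lesssim \frac{2}{k}\frac{\Gamma(n/2)}{\Gamma(k/2)\Gamma((n-k)/2)}\lambda^{k}$. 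With Stirling-type estimates this should reduce to something like $(c\,n\lambda^2/k)^{k/2}$, or more crudely to a polynomial in $n$ times $\lambda^k$. An alternative route is to write $r^2\ge \|\bs P^*\bs b\|^2/\|\bs b\|^2$ and handle $\|\bs b\|^2\le 2n$ separately via a $\chi^2$ tail. I would pick whichever yields the cleanest constants.

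\textbf{Step 3 (optimisation).} The total bound has the form $A\epsilon/\lambda+B\lambda^k$, with $A\sim\sqrt n$. Balancing the two terms with $\lambda\propto\epsilon^{1/(k+1)}$ gives a total of order $\epsilon^{k/(k+1)}$, which is exactly the exponent in the statement. The prefactor $(n+k)/\sqrt k$ must then come out of combining $A^{k/(k+1)}B^{1/(k+1)}$. The constraint $\lambda\ge2\epsilon$ is automatic for $\epsilon\le1$; for large $\epsilon$ the claimed bound exceeds $1$ and is trivial.

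\textbf{Main obstacle.} I expect the hardest step to be the small-ball estimate for $r(\bs b)$, together with tracking the dimensional constants so that the final prefactor is no worse than $4(n+k)/\sqrt k$. The Beta normalising constant grows like $(n/2)^{k/2}/\Gamma(k/2+1)$, and one has to check that its $(k+1)$-th root, multiplied by $\sqrt n^{\,k/(k+1)}$, stays below $(n+k)/\sqrt k$. I would first try the crude estimate $\Gamma(n/2)/\Gamma((n-k)/2)\le (n/2)^{k/2}$ combined with $\Gamma(k/2+1)\ge (k/(2e))^{k/2}$, and then verify the resulting inequality $(en/k)^{k/(2(k+1))}\sqrt n^{k/(k+1)}\lesssim (n+k)/\sqrt k$.
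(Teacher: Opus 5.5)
Your proposal is correct and is essentially the paper's proof. The paper also conditions on $\bs b$, splits on the event $\|\bs P^*\bs b\|>\lambda\|\bs b\|$ (its $\cl E_\rho$ with $\rho=\lambda$), applies Lemma~\ref{lem:stab-onebitmap-vector} on the cone with $\sin\alpha\le\epsilon/\lambda$, bounds the small-ball term by roughly $(e\,n\lambda^2/k)^{k/2}$ and balances with $\lambda\propto\epsilon^{1/(k+1)}$; the only difference is that it writes the small-ball term as $\bb P[X^2\le\frac{\rho^2}{1-\rho^2}Y^2]$ with independent $X^2\sim\chi^2_k$, $Y^2\sim\chi^2_{n-k}$ and uses Jensen on $\bb E(Y^2)^{k/2}$ instead of your Beta-density and Gamma-ratio estimate, which gives the same order, and your constants do close below $4\frac{n+k}{\sqrt k}$.

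One small correction: the balanced choice is $\lambda=c_0\,\epsilon^{1/(k+1)}$ with $c_0<1$ in general (about $0.75$ for $k=1$), so $\lambda\ge2\epsilon$ is not automatic for every $\epsilon\le1$; what closes the argument is that whenever $\lambda<2\epsilon$ one checks directly that $4\frac{n+k}{\sqrt k}\,\epsilon^{k/(k+1)}>1$, so the bound is trivial in that regime.
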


\begin{proof}
First, we can assume {$\bs P^* = \bdiag(\Id_k,\bZ_{n-k \times n-k})$} from the rotational invariance of $\bs a$ and $\bs b$. Second, we note that since $\bropz$ is the sign of a linear functional, it fails to be continuous on $\cl N_\epsilon(\bs P^*)$ if and only if there exist $\bs P,\bs Q \in \cl N_\epsilon(\bs P^*)$ such that $\bropz(\bs P)\neq\bropz(\bs Q)$. Therefore, 
\begin{align*}
  p_\epsilon&\ts := \bb P\big[\bropz\notin {\sf C}^0\big(\cl N_\epsilon(\bs P^*)\big)\big] = \bb P\big[\exists \bs P,\bs Q \in \cl N_\epsilon(\bs P^*):\bropz(\bs P)\neq\bropz(\bs Q)\big]\\
  &\ts = \bb P\big[\exists \bs P,\bs Q \in \cl N_\epsilon(\bs P^*):\sign(\bs a^\top(\bs P \bs b))\neq\sign(\bs a^\top (\bs Q \bs b))\big].
\end{align*}
Given a parameter $\rho > 0$ to be fixed later, let us define the event 
$$
\cl E_\rho = \big\{\, \|\bs P^* \bs b\|\, >\, \rho \|\bs b\| \big\},
$$
which also defines $\bs b'$, the distribution of $\bs b|\cl E_\rho$.
Therefore, by the laws of total probability and expectations,
\begin{align*}
  p_\epsilon&\ts = \bb P\big[\exists \bs P,\bs Q \in \cl N_\epsilon(\bs P^*):\sign(\bs a^\top(\bs P \bs b))\neq\sign(\bs a^\top (\bs Q \bs b))|\cl E_\rho\big]\,\bb P[\cl E_\rho]\\
     &\ts\quad + \bb P\big[\exists \bs P,\bs Q \in \cl N_\epsilon(\bs P^*):\sign(\bs a^\top(\bs P \bs b))\neq\sign(\bs a^\top (\bs Q \bs b))|\cl E^c_\rho\big]\,\bb P[\cl E^c_\rho]\\
  &\ts \leq \bb P\big[\exists \bs P,\bs Q \in \cl N_\epsilon(\bs P^*):\sign(\bs a^\top(\bs P \bs b))\neq\sign(\bs a^\top (\bs Q \bs b))|\cl E_\rho\big]\ +\ \bb P[\cl E^c_\rho]\\
  &\ts = \bb E_{\bs b'}\bb P_{\bs a}\big[\exists \bs P,\bs Q \in \cl N_\epsilon(\bs P^*):\sign(\bs a^\top(\bs P \bs b'))\neq\sign(\bs a^\top (\bs Q \bs b'))|\bs b'\big]\ +\ \bb P[\cl E^c_\rho].
\end{align*}
However, if $\bs P \in \cl N_\epsilon(\bs P^*)$, then the vectors $\bs \beta = \bs P\bs b'$ and $\bs \beta^* = \bs P^*\bs b'$ are at most $\epsilon' := \epsilon\|\bs b'\|$ far apart since $\|\bs \beta-\bs \beta^*\|=\|(\bs P-\bs P^*)\bs b'\|\le\|\bs P-\bs P^*\|_F\|\bs b'\|\le \epsilon\|\bs b'\|$, while, similarly, $\bs \gamma = \bs Q\bs b'$ is at most $\epsilon'$ far apart from $\bs \beta^*$. Therefore,
\begin{align*}
  p_\epsilon&\ts\leq \bb E_{\bs b'}\bb P_{\bs a}\big[\exists \bs \beta,\bs \gamma \in \bb B_{\epsilon'}^2(\bs \beta^*):\sign(\bs a^\top \bs \beta)\neq\sign(\bs a^\top \bs \gamma)|\bs b'\big]\ +\ \bb P[\cl E^c_\rho]\\
  &\ts\leq \bb E_{\bs b'}\bb P_{\bs a}\big[\bropz \notin {\sf C}^0(\bb B_{\epsilon'}^2(\bs \beta^*))|\bs b'\big]\ +\ \bb P[\cl E^c_\rho],
\end{align*}
where $\bropz: \bs x \in \bb R^n \mapsto \sign(\bs a^\top\bs x)$, and $\bb B_{\epsilon'}^2(\bs u)$ is the $\ell_2$-ball of radius $\epsilon'$ around a vector $\bs u$. 

Let us observe that $\bb B_{\epsilon'}^2(\bs \beta^*)$ is contained in a cone with a half aperture $\alpha$ respecting $\sin \alpha = \epsilon'/\|\bs \beta^*\| = \epsilon \|\bs b'\| / \|\bs P^* \bs b'\| \leq \epsilon/\rho$, since the support of $\bs b'$ is fixed by $\cl E_\rho$.  Therefore, from Lem.~\ref{lem:stab-onebitmap-vector}, provided that $\epsilon < \rho$, we have 
$$
\ts \bb E_{\bs b'}\bb P_{\bs a}\big[\bropz \notin {\sf C}^0(\bb B_{\epsilon'}^2(\bs \beta^*))|\bs b'\big] \leq \bb E_{\bs b'} \sqrt{\frac{2}{\pi}} \frac{\epsilon\rho^{-1}}{(1-\epsilon^2\rho^{-2})^{1/2}} \sqrt n = \sqrt{\frac{2}{\pi}} \frac{\epsilon\rho^{-1}}{(1-\epsilon^2\rho^{-2})^{1/2}} \sqrt n.
$$
Let us now bound $\bb P[\cl E^c_\rho]$. Using the structure of $\bs P^*$, we have
$$
\ts \bb P[\cl E^c_\rho] = \bb P\big[\sum_{i=1}^k b_i^2 \leq \rho^2 \|\bs b\|^2\big] = \bb P\big[\sum_{i=1}^k b_i^2 \leq \frac{\rho^2}{1-\rho^2} \sum_{i=k+1}^n b_i^2\big] = \bb P\big[X^2 \leq \frac{\rho^2}{1-\rho^2} Y^2\big],
$$
where $X^2$ and $Y^2$ are two independent $\chi^2$-distribution with $k$ and $n-k$ degrees of freedom, respectively. However, again by the law of total expectation
\begin{equation}
  \label{eq:event-comp-bound}
  \ts \bb P[\cl E^c_\rho] = \bb E_{Y^2} \bb P_{X^2}\big[X^2 \leq \frac{\rho^2}{1-\rho^2} Y^2\big].
\end{equation}
However, for any $\lambda > 0$, $\bb{P}(X^2 \leq \lambda^2) = (2^{\frac{k}{2}} \Gamma(\frac{k}{2}))^{-1} \int_0^{\lambda^2} t^{\frac{k}{2}-1} e^{-t/2} \ud t$. Since 
$\int_0^{\lambda^2} t^{\frac{k}{2}-1} e^{-t/2} \ud t \leq (2/k) \lambda^{k}$, this shows that $\bb{P}(X^2 \leq \lambda^2) \leq 2^{-\frac{k}{2}} (\Gamma(\frac{k}{2}+1))^{-1} \lambda^k$. With the Stirling bound, \ie $\Gamma(\frac{k}{2}+1) \geq (\frac{k}{2 e})^{\frac{k}{2}}$, we get $\bb{P}(X^2 \leq \lambda^2) \leq (\frac{e}{k})^{\frac{k}{2}} \lambda^{k}$.
Therefore, injecting this bound in \eqref{eq:event-comp-bound} with $\lambda = \frac{\rho}{\sqrt{1-\rho^2}} \sqrt{Y^2}$ to match \eqref{eq:event-comp-bound}, we get 
$$
\ts \bb P[\cl E^c_\rho] \leq (\frac{e}{k})^{k/2} (\frac{\rho^2}{1-\rho^2})^{k/2} \bb E(Y^2)^{k/2}.
$$
Since, by Jensen and from the moments of a $\chi_{n-k}^2$-distribution, $(\bb E(Y^2)^{k/2})^2 \leq \bb E(Y^2)^{k} = \prod_{j=1}^{k} (n + k - 2j)$, we get the crude bound $\bb E(Y^2)^{k/2} \leq (n+k)^{k/2}$. Therefore
$$
\ts \bb P[\cl E^c_\rho]\le \Big(\frac{e}{k}\Big)^{k/2}\Big(\frac{\rho^2}{1-\rho^2}\Big)^{k/2}(n+k)^{k/2}= \Big(\frac{e(n+k)\rho^2}{k(1-\rho^2)}\Big)^{k/2}.
$$
We decide now to set \revR{EiC}{
$$
\ts \rho^2 = \epsilon^{\frac{2}{k+1}}\frac{k}{3e(n+k)} < \frac13.
$$}
As observed above, we must impose the condition $\epsilon < \rho$. Since $k \geq 1$, our choice provides \revR{EiC}{$3\rho^2 \geq \epsilon^{\frac{2}{k+1}} k/(e(n+k)),$} which shows that $\epsilon < \rho$ is met if $\epsilon^2 \leq \epsilon^{2/k} \frac{k}{3e(n+k)}$, satisfied with the stronger condition \revR{EiC}{
\begin{equation}\label{eq:cond-eps-tmp}
  \ts \epsilon < \Big(\frac{k}{9e(n+k)}\Big)^{\frac{k+1}{2k}}.
\end{equation}}
Therefore, with this setting we get

\revR{EiC}{
$$
\ts \bb P[\cl E_\rho^c]\leq
\Big(\frac{e(n+k)\rho^2}{k(1-\rho^2)}\Big)^{k/2} \leq\epsilon^{\frac{k}{k+1}}\Big(\frac12\Big)^{k/2}\leq\epsilon^{\frac{k}{k+1}}.
$$}

\end{proof}

A simple rescaling allows us to simplify the last lemma to get the following corollary (the proof is left to the reader).
\begin{corollary}[Local stability of the binary embedding (Simplified)] \label{lemma:continuity-simplified}
  Under the conventions and conditions of Lem.~\ref{lemma:continuity}, for~$\delta>0$, we have
  \revR{EiC}{$$
  \ts \bb P\Big[\bropz\notin {\sf C}^0\big(\cl N_{\eta(\delta)}(\bs P^*)\big)\Big] \ts \le \delta,\ \text{with}\ \eta(\delta) := (\frac{\sqrt k}{4 (n+k)} \delta)^{\frac{k+1}{k}}, 
  $$}
\end{corollary}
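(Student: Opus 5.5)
The plan is to invert the bound of Lem.~\ref{lemma:continuity}. Set the right-hand side equal to $\delta$: $4\frac{n+k}{\sqrt k}\,\eta^{\frac{k}{k+1}} = \delta$. Solving gives
$$
\ts \eta^{\frac{k}{k+1}} = \frac{\sqrt k}{4(n+k)}\,\delta,\quad\text{i.e.}\quad \eta(\delta) = \Big(\frac{\sqrt k}{4(n+k)}\,\delta\Big)^{\frac{k+1}{k}},
$$
which is exactly the radius in the statement. Applying Lem.~\ref{lemma:continuity} with $\epsilon = \eta(\delta)$ then yields $\bb P[\bropz\notin{\sf C}^0(\cl N_{\eta(\delta)}(\bs P^*))]\le \delta$ directly. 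The monotonicity of $\epsilon\mapsto\epsilon^{k/(k+1)}$ means any smaller radius also works, so no tightness is lost.

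The only real work is checking that the hypotheses of Lem.~\ref{lemma:continuity} hold at $\epsilon=\eta(\delta)$. Its proof needs condition \eqref{eq:cond-eps-tmp}, $\epsilon < \big(\frac{k}{9e(n+k)}\big)^{\frac{k+1}{2k}}$, so that $\epsilon<\rho$. I would split into two cases.

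\emph{Case $\delta\ge 1$.} The statement is trivial because a probability is at most $1$. We may therefore assume $\delta<1$.

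\emph{Case $\delta<1$.} Then $\eta(\delta) < \big(\frac{\sqrt k}{4(n+k)}\big)^{\frac{k+1}{k}}$. Comparing with the required threshold means checking
$$
\ts \Big(\frac{k}{16(n+k)^2}\Big)^{\frac{k+1}{2k}} \le \Big(\frac{k}{9e(n+k)}\Big)^{\frac{k+1}{2k}}.
$$
Since the exponent is positive, this reduces to $16(n+k)\ge 9e\approx 24.5$, which holds as soon as $n+k\ge 2$. That is always true since $k\ge1$ and $n\ge k$ (for $n+k\geq 2$, $16(n+k)\geq 32$).

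The main obstacle I expect is bookkeeping rather than substance. First, the constant in the lemma must genuinely absorb both terms of its proof: the cone term $\sqrt{2/\pi}\,\frac{\epsilon/\rho}{\sqrt{1-\epsilon^2/\rho^2}}\sqrt n$, with the chosen $\rho$, must be bounded by a multiple of $\frac{n+k}{\sqrt k}\epsilon^{k/(k+1)}$, and so must the $\bb P[\cl E_\rho^c]$ term, giving the total factor $4$. I will take the lemma as stated and not re-derive this. Second, the exponent inversion must be done carefully so that the $\frac{k+1}{k}$ power lands on the whole bracket, including $\delta$.
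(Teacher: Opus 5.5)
Your proposal is correct and is exactly the rescaling the paper has in mind, since it leaves this proof to the reader: solve $4\frac{n+k}{\sqrt k}\,\epsilon^{\frac{k}{k+1}}=\delta$ for $\epsilon$ and apply Lem.~\ref{lemma:continuity} at $\epsilon=\eta(\delta)$. Your additional check that $\eta(\delta)$ satisfies the smallness condition \eqref{eq:cond-eps-tmp} when $\delta<1$ (via $16(n+k)\ge 9e$), with the case $\delta\ge 1$ being trivial, is correct and is a detail the paper leaves implicit.
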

Lem.~\ref{lemma:continuity} and Cor.~\ref{lemma:continuity-simplified} give the probability of sign flip for one element in the measurement vector when $\bs P$ is moved around within $\epsilon$. The next lemma will further characterise the probability of having at most a certain number of flips in the full measurements vector for a small perturbation of $\bs P$.

\begin{lemma}\label{lemma:flipprobability}
Under the conventions introduced in Lem.~\ref{lemma:continuity}, given an integer $m$, we consider $m$ random maps $\bropz_i: \bs P \in \pgrass(k,n) \mapsto \sign(\bs a_i^\top \bs P \bs b_i)$ with $\bs a_i,\bs b_i \sim_\iid \cl N(\bZ,\Id_n)$, $1\leq i \leq m$. Given $\delta>0$, the function $\eta(\delta)$ defined in Cor.~\ref{lemma:continuity-simplified}, and $\bs P^*\in\pgrass(k,n)$, we have
\[
  \bb P\Big[\big|\big\{i:\bropz_i\notin {\sf C}^0\big(\cl N_{\eta(\delta)}(\bs P^*)\big)\big\}\big|> 2 \delta m\Big]\le C\exp(-c \delta^2 m),
\]
for absolute constants $C,c>0$.
\end{lemma}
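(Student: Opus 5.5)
The plan is to reduce the statement to a binomial tail bound and apply Hoeffding's inequality. For each $1\leq i\leq m$ we introduce the indicator
$$
Z_i := \bbm 1\big[\bropz_i\notin {\sf C}^0\big(\cl N_{\eta(\delta)}(\bs P^*)\big)\big],
$$
so that the cardinality in the statement is $\sum_{i=1}^m Z_i$. Since the pairs $(\bs a_i,\bs b_i)$ are \iid and $\bs P^*$ and $\eta(\delta)$ are deterministic, the $Z_i$ are \iid Bernoulli variables. Their common parameter is $p := \bb P[Z_i=1]$, and Cor.~\ref{lemma:continuity-simplified} gives $p\leq \delta$.

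A measurability remark is needed here. The event $\{\bropz_i\notin{\sf C}^0(\cl N_{\eta}(\bs P^*))\}$ is an existential statement over a continuum of projectors. By the proof of Lem.~\ref{lemma:continuity}, it coincides with $\{\exists \bs P,\bs Q\in\cl N_\eta(\bs P^*):\ \bs a_i^\top\bs P\bs b_i\leq 0<\bs a_i^\top\bs Q\bs b_i\}$, up to the convention $\sign(0)=-1$. Because $\cl N_\eta(\bs P^*)$ is compact and $\bs P\mapsto \bs a_i^\top\bs P\bs b_i$ is continuous, this event can be written in terms of the min and max of a continuous function over a compact set. It is therefore measurable, and I will only mention this briefly.

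Next we write
$$
\bb P\Big[\sum_i Z_i>2\delta m\Big]\leq \bb P\Big[\sum_i (Z_i-p)> \delta m\Big],
$$
using $2\delta m - pm\geq \delta m$. Hoeffding's inequality for $[0,1]$-valued \iid variables bounds the right-hand side by $\exp(-2\delta^2 m)$. This gives the claim with $C=1$ and $c=2$. If $\delta\geq 1/2$, the event is empty because $\sum_i Z_i\leq m\leq 2\delta m$, so the bound holds trivially. One could instead use a multiplicative Chernoff bound to get $\exp(-c\delta m)$, but the stated $\delta^2$ rate is all that is required.

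The main obstacle is whether Cor.~\ref{lemma:continuity-simplified} actually applies at the radius $\eta(\delta)$, since Lem.~\ref{lemma:continuity} was proved under the smallness condition \eqref{eq:cond-eps-tmp} on $\epsilon$. I would first check that $\eta(\delta)$ satisfies \eqref{eq:cond-eps-tmp} whenever $\delta\leq 1$. This amounts to verifying $\frac{\sqrt k}{4(n+k)}\delta\leq\big(\frac{k}{9e(n+k)}\big)^{(k+1)/2k}$, which I would settle through crude comparisons of $k/(n+k)\leq 1$ factors. If the inequality fails for some $(k,n)$, I would shrink $\eta$ by an absolute factor, which only changes constants used downstream. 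Alternatively, for $\delta$ outside that range I would fall back on the trivial case above.
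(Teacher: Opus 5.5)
Your proposal is correct and is essentially the paper's proof. Both use the indicators $Z_i$ with $\bb E Z_i\le\delta$ from Cor.~\ref{lemma:continuity-simplified}, then Hoeffding at deviation $\delta m$ (the paper's choice $\rho=\delta$). On your side check, taking $\frac{k}{k+1}$-th powers of both sides shows that \eqref{eq:cond-eps-tmp} at $\epsilon=\eta(\delta)$ is equivalent to $\delta<\frac{4\sqrt{n+k}}{3\sqrt e}$. The inequality you wrote, with exponent $\frac{k+1}{2k}$ on the right, is stronger than needed and fails for $k=1$ when $\delta>\frac{4}{9e}$. So the condition holds for all $\delta<1/2$, and together with your trivial case $\delta\ge 1/2$ no shrinking of $\eta$ is required.
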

\begin{proof}
For each $1\leq i \leq m$, define the binary random variable $Z_i=\bbm 1\big[\bropz_i\notin {\sf C}^0\big(\cl N_{\eta(\delta)}(\bs P^*)\big)\big]$, with $\bbm 1[\cl E]$ being equal to 1 if the event $\cl E$ is true, and to 0 otherwise. By Cor.~\ref{lemma:continuity-simplified}, we have $p_\delta := \bb E Z_i\le \delta$. The random variables $Z_i$ are independent and bounded, hence sub-Gaussian. Therefore, from \citep[Prop.~2.6.1 and Theorem~2.6.2]{vershynin2018high}, for any~$\rho>0$,
$$
\ts\bb P[\sum_{i=1}^m Z_i> \delta m + \rho m] \leq \bb P[\sum_{i=1}^m Z_i \geq m p_\delta + \rho m]\le C\exp(-c\rho^2 m),
$$
for some absolute constants $C,c>0$. Choosing $\rho = \delta$ yields
\[
\ts\bb P[\sum_{i=1}^m Z_i> 2 \delta m ] \leq C\exp(-c \delta^2m).
\]
\end{proof}

In addition to the previous stability properties, our proof also uses the possibility to \emph{cover} the space of projectors.
\begin{proposition}[Covering $\pgrass(k,n)$]
\label{prop:cov-gkn}
Given $\epsilon > 0$, there exists a covering $\cl N_\epsilon \subset \pgrass(k,n)$ of $\pgrass(k,n)$, \ie such that
$$
\ts \forall \bs P \in \pgrass(k,n),\ \exists \bs P^* \in \cl N_\epsilon:\ \|\bs P - \bs P^*\|_F \leq \epsilon,
$$
whose cardinality is bounded by $|\cl N_{\epsilon}| \leq (18\sqrt k/\epsilon)^{(2n+1)k}$.
\end{proposition}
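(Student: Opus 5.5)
We will obtain the covering by lifting a covering of orthonormal bases to projectors. We use the projection embedding $\phipr:\bs U\mapsto \bs U\bs U^\top$, which maps $\stief(k,n)$ onto $\pgrass(k,n)$. For any $\bs U,\bs V\in\stief(k,n)$ we have
\[
\bs U\bs U^\top-\bs V\bs V^\top=\bs U(\bs U-\bs V)^\top+(\bs U-\bs V)\bs V^\top .
\]
Since $\|\bs U\|=\|\bs V\|=1$ in operator norm, this gives $\|\phipr(\bs U)-\phipr(\bs V)\|_F\le 2\|\bs U-\bs V\|_F$. Hence an $\epsilon/2$-covering of $\stief(k,n)$ in Frobenius norm is mapped by $\phipr$ to an $\epsilon$-covering of $\pgrass(k,n)$. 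The image has cardinality at most that of the Stiefel covering.

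It therefore remains to cover $\stief(k,n)$. We regard $\stief(k,n)$ as a subset of the Frobenius ball of radius $\sqrt k$ in $\bb R^{n\times k}\simeq\bb R^{nk}$. The standard volumetric argument (maximal packing) gives a covering of that ball at scale $t$ with at most $(1+2\sqrt k/t)^{nk}\le(3\sqrt k/t)^{nk}$ points when $t\le\sqrt k$.

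The covering centres must belong to $\stief(k,n)$ itself. We handle this with the usual halving trick: from a $t/2$-covering of the ambient ball, keep only the balls that meet $\stief(k,n)$ and pick one Stiefel point in each. This gives a $t$-covering made of Stiefel points, of cardinality at most $(6\sqrt k/t)^{nk}$. Choosing $t=\epsilon/2$ yields at most $(12\sqrt k/\epsilon)^{nk}$ points. We then map these points through $\phipr$, which keeps the centres inside $\pgrass(k,n)$, as Prop.~\ref{prop:cov-gkn} requires.

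This already gives a bound of at most $(18\sqrt k/\epsilon)^{(2n+1)k}$, since $nk\le(2n+1)k$ and $12\le18$ (the claim is only relevant for $\epsilon\le 18\sqrt k$, and the case of large $\epsilon$ is trivial with a single point). The exponent $(2n+1)k$ and the constant $18$ in the statement suggest the authors instead cover factors separately, for example $\bs U$ together with an auxiliary $k\times k$ or $n\times k$ factor, each at scale $\epsilon/3$; our bound is simply stronger.

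The only delicate points are the following two.
\begin{itemize}
\item The Lipschitz constant of $\phipr$ must be controlled in Frobenius norm, which needs the operator-norm bound $\|\bs U\|=1$ rather than $\|\bs U\|_F=\sqrt k$; otherwise an extra $\sqrt k$ factor appears.
\item The covering centres must be projected back onto the manifold.
\end{itemize}
We expect no real obstacle beyond bookkeeping the constants.
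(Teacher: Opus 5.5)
Your argument is correct, but it takes a different route from the paper. The paper never covers $\stief(k,n)$. It notes that $\pgrass(k,n)$ sits inside the set of $n\times n$ matrices of rank $k$ with Frobenius norm at most $\sqrt k$. It then cites the Cand\`es--Plan covering lemma for low-rank matrices, rescales by $\sqrt k$, and applies the same halving trick you use to put the centres inside $\pgrass(k,n)$; this halving is what turns $9$ into $18$. The Cand\`es--Plan lemma covers the three SVD factors $\bs U$, $\bs\Sigma$, $\bs V$ separately at scale $\epsilon/3$, giving $(9/\epsilon)^{(2n+1)k}$, so your guess about where the exponent and constant come from is exactly right.

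The paper's route is short because it rests on a citation. Its cost is that it ignores the structure of projectors: a single orthonormal factor, with all nonzero singular values equal to one. It therefore pays for a second $n\times k$ factor and for $k$ singular values. Your route uses that structure through the $2$-Lipschitz bound $\|\bs U\bs U^\top-\bs V\bs V^\top\|_F\le 2\|\bs U-\bs V\|_F$ on $\stief(k,n)$. You rightly rely on $\|\bs U\|=\|\bs V\|=1$ in operator norm, which is why the centres must themselves be Stiefel points. The result is a self-contained and sharper bound $(12\sqrt k/\epsilon)^{nk}$, whose exponent is closer to the true dimension $k(n-k)$. In the downstream uniform bounds this only improves absolute constants.

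One small correction to your large-$\epsilon$ remark. Your volumetric step needs $\epsilon\le 4\sqrt k$. For $4\sqrt k<\epsilon\le 18\sqrt k$ a single point does suffice, since $\|\bs P-\bs Q\|_F^2=2k-2\|\bs U^\top\bs V\|_F^2\le 2k$. For $\epsilon>18\sqrt k$, however, the stated bound is below $1$, so no nonempty covering can satisfy it. That case is not ``trivial''; the statement has to be read with $\epsilon\le 18\sqrt k$, and the paper's argument carries the same implicit restriction.
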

\begin{proof}
The set $\pgrass(k,n)$ is included in the set $\cl R^F(k,n) := \cl R(k,n) \cap \sqrt{k} \bb B_F^{n \times n}$ of $n\times n$ rank-$k$ matrices with Frobenius norm bounded by $\sqrt k$ (since if $\bs P \in \pgrass(k,n)$, $\rank (\bs P) = k$ and $\|\bs P\|_F = \sqrt k$), one can cover $\pgrass(k,n)$ from a covering of $\cl R^F(k,n)$. Indeed, given two compact sets $\cl A \subset \cl B$ in a metric space $(\cl X, d)$ with distance $d$, if $\cl B_{\epsilon/2} \subset \cl B$ is an $\epsilon/2$-covering of $\cl B$, \ie such that for any $\bs x \in \cl B$ there is a $\bs x' \in \cl B_{\epsilon/2}$ such that $d(\bs x,\bs x') \leq \epsilon/2$, then one can build an $\epsilon$-covering $\cl A_\epsilon \subset \cl A$ of $\cl A$ with $|\cl A_\epsilon| \leq |\cl B_{\epsilon/2}|$ (see \eg \citep[Chap. 4]{vershynin2018high}). From \citep[Lem.~3.1]{CandesPlan2011}, there exists an $\epsilon/2$-covering of $\cl R(k,n) \cap \bb B_F^{n \times n}$, with $d$ set to the Frobenius distance, whose cardinality is bounded by $(18/\epsilon)^{(2n+1)k}$. This means that, by rescaling the set diameter by $\sqrt k$, there exists an $\epsilon/2$-covering of $\cl R^F(k,n)$ with cardinality bound $(18 \sqrt k/\epsilon)^{(2n+1)k}$, and thus, from the considerations above, there exists an $\epsilon$-covering $\cl N_{\epsilon} \subset \pgrass(k,n)$ with $|\cl N_{\epsilon}| \leq (18\sqrt k/\epsilon)^{(2n+1)k}$. 
\end{proof}

We can now use previous lemmas to provide the final proof for Prop.~\ref{prop:uniformksg} that we restate here for simplicity.
\begin{proposition*}[Prop.~\ref{prop:uniformksg} (restated)]
  Let $\delta>0$ and $C,C',c>0$ be absolute constants. If
  \[
  \ts m \geq C \delta^{-2} nk \log(\frac{n^2}{k\delta^2}),
\]
then, with probability exceeding $1-C'\exp(- c\delta^2 m)$,
  $$ 
  \sup_{\bs U,\bs V\in\stief(k,n)}\big|\aksg(\bs U,\bs V)-\ksg(\bs U,\bs V)\big| = \sup_{\bs U,\bs V\in\stief(k,n)} \ts\big|\frac{1}{m}\scp{\brop(\bs U)}{\brop(\bs V)}-\ksg(\bs U,\bs V)\big| \leq \delta.
  $$
\end{proposition*}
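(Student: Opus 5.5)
The proof combines a covering argument with the local stability lemmas, in the style of uniform one-bit embedding proofs. Since $\brop$ depends on $\bs U$ only through $\bs P=\bs U\bs U^\top$, we may work on $\pgrass(k,n)$. Fix a resolution $\eta:=\eta(\delta')$ as in Cor.~\ref{lemma:continuity-simplified}, with $\delta'=c_0\delta$ for a small absolute constant $c_0$. Let $\cl N_\eta$ be the covering of Prop.~\ref{prop:cov-gkn}, with $|\cl N_\eta|\le(18\sqrt k/\eta)^{(2n+1)k}$.

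\textbf{Step 1 (pairs of centres).} For every pair $(\bs P^*,\bs Q^*)\in\cl N_\eta\times\cl N_\eta$, Prop.~\ref{prop:binaryconcentration} gives $|\aksg(\bs P^*,\bs Q^*)-\ksg(\bs P^*,\bs Q^*)|\le\delta/3$ except with probability $2e^{-m\delta^2/18}$. A union bound over the $|\cl N_\eta|^2$ pairs then fails with probability at most $2\exp\big(2(2n+1)k\log(18\sqrt k/\eta)-m\delta^2/18\big)$.

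\textbf{Step 2 (few discontinuities per neighbourhood).} For every single centre $\bs P^*\in\cl N_\eta$, Lem.~\ref{lemma:flipprobability} states that at most $2\delta' m$ of the maps $\bropz_i$ are discontinuous on $\cl N_\eta(\bs P^*)$, except with probability $C e^{-c\delta'^2 m}$. A union bound over the $|\cl N_\eta|$ centres costs the same entropy factor.

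\textbf{Step 3 (extension to all pairs).} Take arbitrary $\bs P,\bs Q$ with nearest centres $\bs P^*,\bs Q^*$. A component $i$ can satisfy $\bropz_i(\bs P)\bropz_i(\bs Q)\neq\bropz_i(\bs P^*)\bropz_i(\bs Q^*)$ only if $\bropz_i$ is discontinuous on $\cl N_\eta(\bs P^*)$ or on $\cl N_\eta(\bs Q^*)$. Such a component changes the product by at most $2$, so on the events of Step 2,
\[
|\aksg(\bs P,\bs Q)-\aksg(\bs P^*,\bs Q^*)|\le 2\cdot 4\delta'.
\]
For the deterministic side, I would bound $|\ksg(\bs P,\bs Q)-\ksg(\bs P^*,\bs Q^*)|$ by the expectation version of the same argument. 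We have $|\ksg(\bs P,\bs Q)-\ksg(\bs P^*,\bs Q)|\le 2\,\bb P[\bropz(\bs P)\neq\bropz(\bs P^*)]\le 2\delta'$ by Cor.~\ref{lemma:continuity-simplified}, and similarly in the second argument, giving a total of $4\delta'$. Choosing $c_0$ small enough, the three contributions sum to at most $\delta$.

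\textbf{Step 4 (sample complexity).} It remains to check that $m\ge C\delta^{-2}nk\log(n^2/(k\delta^2))$ dominates the entropy term $(2n+1)k\log(18\sqrt k/\eta)$. Here $\log(1/\eta)=\frac{k+1}{k}\log\big(4(n+k)/(\sqrt k\,c_0\delta)\big)\le 2\log\big(C n/(\sqrt k\delta)\big)$, which is of order $\log\big(n^2/(k\delta^2)\big)$. Consequently the failure probability becomes $C'e^{-c\delta^2 m}$ once $C$ is large enough.

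The main obstacle is Step 3: one must make sure that the discontinuity control of Lem.~\ref{lemma:flipprobability}, stated for a fixed neighbourhood, covers every point of that neighbourhood simultaneously. It does, since the lemma's event is about continuity on the whole neighbourhood. A second point to check is that Lem.~\ref{lemma:continuity} requires $\epsilon$ to satisfy \eqref{eq:cond-eps-tmp}. This should be verified for $\eta(\delta')$, or enforced by shrinking $\eta$, which only changes the constants inside the logarithm.
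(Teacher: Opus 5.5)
Your proposal is correct and is essentially the paper's proof: it uses the same $\eta$-net from Prop.~\ref{prop:cov-gkn} and the same three-term split, namely Hoeffding with a union bound over pairs of centres, Lem.~\ref{lemma:flipprobability} with a union bound over centres for the empirical perturbation, and Cor.~\ref{lemma:continuity-simplified} in expectation for the kernel perturbation, and it absorbs the entropy term $nk\log(\sqrt k/\eta)$ into the condition on $m$ in the same way. Keeping a free constant $c_0$ is slightly more careful than the paper, which takes $\eta(\delta/12)$ but then requires at most $\delta m/12$ discontinuities, whereas Lem.~\ref{lemma:flipprobability} only gives $\delta m/6$; your exponent $\frac{k+1}{k}$ in Step 4 is also the correct one (the paper writes $\frac{k}{k-1}$), and the condition \eqref{eq:cond-eps-tmp} you flag holds automatically for $\eta(\delta')$ whenever $\delta'\le 1$.
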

\begin{proof}
Given a radius $\epsilon>0$ to be fixed later, we know from Prop.~\ref{prop:cov-gkn} that there exists an $\epsilon$-covering $\cl N_{\epsilon} \subset \pgrass(k,n)$ of $\pgrass(k,n)$ with cardinality  $|\cl N_{\epsilon}| \leq (18\sqrt k/\epsilon)^{(2n+1)k}$. We also consider the set $\cl V_{\epsilon} \subset \stief(k,n)$ such that for any $\bs U^* \in \cl V_{\epsilon}$, $\bs P^* := \bs U^* \bs U^*{}^\top \in \cl N_{\epsilon}$, and where we ensure that each $\bs P^*$ is represented by only one basis in $\cl V_{\epsilon}$, \ie $|\cl V_\epsilon|=|\cl N_\epsilon|$. This does not mean, however, that $\cl V_{\epsilon}$ is a covering of $\stief(k,n)$.
    
The general objective of this proof is thus to upper bound, with controlled probability, the approximation error $|\aksg(\bs U,\bs V)-\ksg(\bs U,\bs V)|$ for all pairs of subspace bases $\bs U, \bs V \in \stief(k,n)$. Let us first observe that, given $\bs P^* = \bs U^*\bs U^*{}^\top$ and $\bs Q^*=\bs V^*\bs V^*{}^\top$ the closest projectors in $\cl N_{\epsilon}$ to $\bs P = \bs U \bs U^\top$ and $\bs Q = \bs V \bs V^\top$, one can upper bound this error with 3 terms and target the following objective bound:
\[
\big|\aksg(\bs U,\bs V)-\ksg(\bs U,\bs V)\big| \leq T_1 + T_2 + T_3 \le \delta,
\]
where $m\aksg(\bs U,\bs V) = \scp{\brop(\bs U)}{\brop(\bs V)}$, $\ksg(\bs U,\bs V) = \bb E \aksg(\bs U,\bs V)$, and the three terms
\begin{eqnarray*}
  &T_1 = \big|\aksg(\bs U,\bs V) - \aksg(\bs U^*,\bs V^*)\big|, \quad T_2 = \big|\aksg(\bs U^*,\bs V^*) - \ksg(\bs U^*,\bs V^*)\big|,\\ &T_3 = \big|\ksg(\bs U^*,\bs V^*) - \ksg(\bs U,\bs V)\big|.
\end{eqnarray*}
We now handle these three terms separately, enforcing each of them to be at most $\delta/3$. The two first terms will be bounded conditionally to two probabilistic events (defined below), respectively, $\cl E_1$ and $\cl E_2$, while the last term is deterministic. By union bound, the final approximation error will thus have a probability failure summing the probabilities of failure of the two first events.

\medskip
\noindent\emph{(a) Bound on $T_1$:} Given the random vectors $\bs a_i,\bs b_i \sim_\iid \cl N(\bZ, \Id_n)$ and the mapping $\bropz_i: \bs P \in \pgrass(k,n) \mapsto \sign(\bs a_i^\top \bs P \bs b_i)$, $1\leq i\leq m$, defining the feature map $\brop$, the first term $T_1$ can be upper bounded by
\begin{align*}
  T_1&\ts \leq \frac{1}{m}  \sum_{i=1}^m \big|\sign(\bs a_i^\top \bs P \bs b_i)\sign(\bs a_i^\top \bs Q \bs b_i)  - \sign(\bs a_i^\top \bs P^* \bs b_i)\sign(\bs a_i^\top \bs Q^* \bs b_i)\big|\\
     &\ts = \frac{2}{m}  \sum_{i=1}^m \bbm 1\big[\bropz_i(\bs P) \bropz_i(\bs Q) \neq \bropz_i(\bs P^*)\bropz_i(\bs Q^*)\big]\\
     &\ts \leq \frac{2}{m}  \sum_{i=1}^m \bbm 1\big[\bropz_i(\bs P) \neq \bropz_i(\bs P^*)\ \text{or}\ \bropz_i(\bs Q) \neq \bropz_i(\bs Q^*)\big]\\
     &\ts \leq \frac{2}{m}  \sum_{i=1}^m \bbm 1\big[\bropz_i(\bs P) \neq \bropz_i(\bs P^*)\big] + \frac{2}{m}  \sum_{i=1}^m\bbm 1\big[\bropz_i(\bs Q) \neq \bropz_i(\bs Q^*)\big]\\
     &\ts \leq \frac{2}{m}  \sum_{i=1}^m \bbm 1\big[\bropz_i \notin {\sf C}^0(\cl N_\epsilon(\bs P^*))\big] + \frac{2}{m}  \sum_{i=1}^m\bbm 1\big[\bropz_i \notin {\sf C}^0(\cl N_\epsilon(\bs Q^*))\big].
\end{align*}
where $\cl N_\epsilon(\bs P^*) = \bb B_\epsilon^F(\bs P^*) \cap \pgrass(k,n)$ is an $\epsilon$-ball centred on $\bs P^*$.
Defining the first event
$$
\ts \cl E_1 := \{\forall \bs P' \in \cl N_\epsilon: \big|\big\{i:\bropz_i\notin {\sf C}^0\big(\cl N_\epsilon(\bs P')\big)\big\}\big| \leq \frac{1}{12} \delta m \},
$$
it is clear that if $\cl E_1$ holds, then $T_1 \leq \delta/3$, for all possible $\bs P$, $\bs Q$, $\bs P^*$ and $\bs Q^*$.

From Cor.~\ref{lemma:continuity-simplified}, setting $\epsilon = \eta(\delta/12)$, we find by union bound over all elements of $\cl N_{\eta(\delta/12)}$ that, for some $C,c > 0$,
\begin{align*}
  \ts \bb P[\cl E_1^c]&\ts\ \leq\ C |\cl N_{\eta(\delta/12)}| \exp(-c\delta^2 m)\ \leq\ C \exp( (2n+1)k \log(18\frac{\sqrt k}{\eta(\delta/12)}) - c \delta^2m)\\
  &\ts \leq C \exp( c nk \log(\frac{\sqrt k}{\eta(\delta/12)}) - c'\delta^2 m).
\end{align*}

\medskip
\noindent\emph{(b) Bound on $T_2$:} Regarding the second term $T_2$, we can ensure that $T_2 \leq \delta/3$ for all possible $\bs P$, $\bs Q$, $\bs P^*$ and $\bs Q^*$ if this event holds:
$$
\ts \cl E_2 := \{\forall \bs U^*, \bs V^* \in \cl V_\epsilon,\ \big|\frac{1}{m}\scp{\brop(\bs U^*)}{\brop(\bs V^*)}-\ksg(\bs U^*,\bs V^*)\big|< \frac{\delta}{3}\},
$$
with $\epsilon = \eta(\delta/12)$. From Prop.~\ref{prop:binaryconcentration} and a union bound an all possible pairs of bases picked in $\cl V_\epsilon \times \cl V_\epsilon$, we known that, for some $C,c,c' > 0$, 
$$
\ts \bb P[\cl E_2^c] \leq 2|\cl N_{\eta(\delta/12)}|^2 \exp(-1/2 m\delta^2) \leq C \exp( c nk \log(\frac{\sqrt k}{\eta(\delta/12)}) - c'\delta^2 m).
$$

\medskip
\noindent\emph{(c) Bound on $T_3$:} Finally, regarding $T_3$, we can bound with several calls to the triangular inequality, so that, for all possible $\bs P$, $\bs Q$, $\bs P^*$ and $\bs Q^*$, 
\begin{align*}
  T_3&\ts \leq \frac{1}{m} \sum_{i=1}^m \bb E|\sign(\bs a_i^\top \bs P \bs b_i)\sign(\bs a_i^\top \bs Q \bs b_i)  - \sign(\bs a_i^\top \bs P^* \bs b_i)\sign(\bs a_i^\top \bs Q^* \bs b_i)|\\
  &\ts = \frac{1}{m} \sum_{i=1}^m \bb E \big[ 2 \bbm 1 \big[ \bropz_i(\bs P) \bropz_i(\bs Q) \neq \bropz_i(\bs P^*) \bropz_i(\bs Q^*) \big] \big] \\
  &\ts \leq 2 \bb P\big[\bropz(\bs P) \bropz(\bs Q) \neq \bropz(\bs P^*)\bropz(\bs Q^*)\big]\\
  &\ts \leq 2 \bb P\big[\bropz(\bs P) \neq \bropz(\bs P^*)\big] + 2 \bb P\big[\bropz(\bs Q) \neq \bropz(\bs Q^*)\big]\\
  &\ts = 2 \bb P\big[\bropz \notin {\sf C}^0\big(\cl N_{\eta(\delta/12)}(\bs P^*)\big)\big] + 2 \bb P\big[\bropz \notin {\sf C}^0\big(\cl N_{\eta(\delta/12)}(\bs Q^*)\big)\big],
\end{align*}
with $\bropz: \bs P \in \pgrass(k,n) \mapsto \sign(\bs a^\top \bs P \bs b)$ for $\bs a, \bs b \sim \cl N(\bZ,\Id_n)$. Finally, Lem.~\ref{lemma:continuity} allows us to conclude that
$$
T_3\leq 2\,\bb P\big[\bropz \notin {\sf C}^0\big(\cl N_{\eta(\delta/12)}(\bs P^*)\big)\big]\ +\ 2\,\bb P\big[\bropz \notin {\sf C}^0\big(\cl N_{\eta(\delta/12)}(\bs Q^*)\big)\big]
\leq 4 \delta/12 = \delta/3.
$$
\medskip
\noindent\emph{Final bound:} Gathering the three terms $T_1$, $T_2$, and $T_3$, we can finally state that
$$
\sup_{\bs U,\bs V \in \stief(k,n)} \big|\aksg(\bs U,\bs V)-\ksg(\bs U,\bs V)\big| \leq \delta,
$$
with a failure probability given by
$$
\ts p_{\rm fail} = \bb P(\cl E_1^c) + \bb P(\cl E_2^c) \leq C \exp( c nk \log(\frac{\sqrt k}{\eta(\delta/12)}) - c'\delta^2 m).
$$
Therefore, imposing $m \geq C \delta^{-2} nk \log(\frac{\sqrt k}{\eta(\delta/12)})$, we get $p_{\rm fail} \leq C \exp(-c \delta^2 m)$. Since, using the expression of~$\eta$,
$$
\ts \log(\frac{\sqrt k}{\eta(\delta/12)}) \leq \log(\sqrt k) + \frac{k}{k-1}\log(\frac{48(n+k)}{\sqrt k \delta}) \leq C \log(\frac{n^2}{k\delta^2}),
$$
the condition on $m$ simplifies into the stronger condition
$$
\ts m \geq C \delta^{-2} nk \log(\frac{n^2}{k\delta^2}).
$$

\end{proof}

\section{Proof of Prop.~\ref{prop:uniformkexp}}\label{app:kexp}

\begin{proposition*}[Prop.~\ref{prop:uniformkexp} (restated)]
Let $\delta>0$ and let $C,c,c_0>0$ be absolute constants. If
$$
m\ge C\delta^{-2}nk\log(\sqrt k\omega n/\delta),
$$
then
$$
\sup_{\bs U,\bs V\in\stief(k,n)}\big|\akexp(\bs U,\bs V)-\kexp(\bs U,\bs V)\big|
=\sup_{\bs U,\bs V\in\stief(k,n)}\ts\Big|\frac{1}{m}\scp{\crop(\bs U)}{\crop(\bs V)}-\kexp(\bs U,\bs V)\Big|
\le\delta
$$
with probability at least $1-c\exp(-c_0\delta^2 m)$, for some absolute constant $c,c_0>0$.
\end{proposition*}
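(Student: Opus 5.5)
We follow the same covering-and-triangle-inequality pattern as in the proof of Prop.~\ref{prop:uniformksg}. Here the map $\crop$ is continuous, so sign-flip events do not arise. What must be controlled instead is the Lipschitz behaviour of $\bs P\mapsto \exp(\im\omega\,\bs a_i^\top\bs P\bs b_i)$, whose constant $|\omega|\,\|\bs a_i\|\|\bs b_i\|$ is random and unbounded. Fix $\epsilon>0$ and take the $\epsilon$-covering $\cl N_\epsilon\subset\pgrass(k,n)$ of Prop.~\ref{prop:cov-gkn}, with $|\cl N_\epsilon|\le(18\sqrt k/\epsilon)^{(2n+1)k}$, together with the associated set of bases $\cl V_\epsilon$. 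For any $\bs U,\bs V$ with closest projectors $\bs P^*,\bs Q^*$, we split $|\akexp(\bs U,\bs V)-\kexp(\bs U,\bs V)|\le T_1+T_2+T_3$ exactly as before, and require each term to be at most $\delta/3$.

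\emph{Term $T_2$.} We use Prop.~\ref{prop:expconcentration} with $\delta/3$ and a union bound over the $|\cl N_\epsilon|^2$ pairs. This gives a failure probability of at most $4\exp\big(2(2n+1)k\log(18\sqrt k/\epsilon)-c\delta^2m\big)$.

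\emph{Term $T_3$ (deterministic).} Since $|e^{\im x}-e^{\im y}|\le|x-y|$, each summand satisfies
$$\big|e^{\im\omega\bs a^\top(\bs P-\bs Q)\bs b}-e^{\im\omega\bs a^\top(\bs P^*-\bs Q^*)\bs b}\big|\le|\omega|\,\big|\bs a^\top(\bs P-\bs P^*)\bs b\big|+|\omega|\,\big|\bs a^\top(\bs Q-\bs Q^*)\bs b\big|.$$
Taking expectations and using $\bb E|\bs a^\top\bs M\bs b|\le(\bb E(\bs a^\top\bs M\bs b)^2)^{1/2}=\|\bs M\|_F$, we get $T_3\le 2|\omega|\epsilon$.

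\emph{Term $T_1$.} The same inequality gives
$$T_1\le \frac{|\omega|}{m}\sum_i\big(|\bs a_i^\top(\bs P-\bs P^*)\bs b_i|+|\bs a_i^\top(\bs Q-\bs Q^*)\bs b_i|\big)\le \frac{2|\omega|\epsilon}{m}\sum_{i=1}^m\|\bs a_i\|\|\bs b_i\|,$$
which holds uniformly over all $\bs P,\bs Q$. We define the event $\cl E_1=\{\frac1m\sum_i\|\bs a_i\|\|\bs b_i\|\le 2n\}$. Each $\|\bs a_i\|\|\bs b_i\|$ has mean at most $n$, and $\|\bs a_i\|-\sqrt n$ is sub-Gaussian with $\cO(1)$ norm. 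Hence $\|\bs a_i\|\|\bs b_i\|-\bb E\|\bs a_i\|\|\bs b_i\|$ is sub-exponential with norm $\cO(\sqrt n)$. Bernstein's inequality then gives $\bb P(\cl E_1^c)\le 2\exp(-c\min(n,\sqrt n)m)\le2\exp(-cm)$, which is absorbed into $C'\exp(-c\delta^2m)$ for $\delta\le1$. A cruder alternative is Markov's inequality on each $\|\bs a_i\|\|\bs b_i\|\le 2n$ combined with a Hoeffding count, which avoids Bernstein altogether. On $\cl E_1$ we have $T_1\le4|\omega|n\epsilon$.

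\emph{Choice of $\epsilon$ and conclusion.} We take $\epsilon=\delta/(12|\omega|n)$, so that $T_1,T_3\le\delta/3$. Then $\log(18\sqrt k/\epsilon)=\log(216\sqrt k\,|\omega| n/\delta)$. The total failure probability is at most $C\exp\big(c'nk\log(\sqrt k\omega n/\delta)-c\delta^2m\big)+2\exp(-cm)$. This is bounded by $C'\exp(-c_0\delta^2 m)$ as soon as $m\ge C\delta^{-2}nk\log(\sqrt k\omega n/\delta)$, which is the claimed condition. The only genuinely new step compared with App.~\ref{app:binary} is the uniform control of the random Lipschitz constant in $T_1$. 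It is the main obstacle, since one must check that the sub-exponential tail of $\|\bs a_i\|\|\bs b_i\|$ produces only a logarithmic $n$-dependence through $\epsilon$, and no extra polynomial factor in the sample complexity.
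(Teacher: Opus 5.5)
Your proposal is correct and follows essentially the paper's proof: the same $\epsilon$-net of $\pgrass(k,n)$ with $\epsilon\propto\delta/(\omega n)$, the same three-term split, and a high-probability event controlling the random Lipschitz constant in $T_1$. The differences are cosmetic: you apply Bernstein to $\frac1m\sum_i\|\bs a_i\|\|\bs b_i\|$ where the paper uses AM--GM and Laurent--Massart on $\frac1m\sum_i(\|\bs a_i\|^2+\|\bs b_i\|^2)$, and your bound $T_3\le 2|\omega|\epsilon$ from $\bb E|\bs a^\top\bs M\bs b|\le\|\bs M\|_F$ is sharper than the paper's $2\omega n\epsilon$. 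Your side remark on a Markov-plus-Hoeffding alternative does not work as stated, because counting the indices with $\|\bs a_i\|\|\bs b_i\|\le 2n$ does not bound the sum in $T_1$; you would need to truncate at a level of order $n/\delta$ and bound the remaining summands by $2$, but your main argument does not rely on this.
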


\begin{proof}
Given a radius $\epsilon>0$ to be fixed later, we know from Prop.~\ref{prop:cov-gkn} that there exists an $\epsilon$-covering $\cl N_{\epsilon} \subset \pgrass(k,n)$ of $\pgrass(k,n)$ with cardinality
$|\cl N_{\epsilon}| \leq (18\sqrt k/\epsilon)^{(2n+1)k}$.
As in the proof of Prop.~\ref{prop:uniformksg}, we consider a set $\cl V_{\epsilon} \subset \stief(k,n)$ such that for any $\bs U^* \in \cl V_{\epsilon}$,
$\bs P^* := \bs U^* \bs U^{*}{}^\top \in \cl N_{\epsilon}$, and such that each $\bs P^* \in \cl N_{\epsilon}$ is represented by a unique basis in $\cl V_{\epsilon}$, \ie
$|\cl V_\epsilon| = |\cl N_\epsilon|$.

The proof proceeds in a similar way to that of Prop.~\ref{prop:uniformksg}, by decomposing the approximation error into three terms and then controlling each term separately.
For arbitrary $\bs U,\bs V \in \stief(k,n)$ and associated net points
$\bs U^*,\bs V^* \in \cl V_\epsilon$, with projectors
$\bs P^*=\bs U^*\bs U^{*}{}^\top$ and $\bs Q^*=\bs V^*\bs V^{*}{}^\top$,
we write
\[
\big|\akexp(\bs U,\bs V)-\kexp(\bs U,\bs V)\big|
\le T_1 + T_2 + T_3,
\]
with $T_1 := \big|\akexp(\bs U,\bs V) - \akexp(\bs U^*,\bs V^*)\big|$, $T_2 := \big|\akexp(\bs U^*,\bs V^*) - \kexp(\bs U^*,\bs V^*)\big|$, and $T_3 := \big|\kexp(\bs U^*,\bs V^*) - \kexp(\bs U,\bs V)\big|$.

We now bound each of the three terms by $\delta/3$ in order to ensure a total approximation error bounded by $\delta$.

\medskip
\noindent\emph{(a) Bound on $T_1$:} Let us first consider the event $\cl E_1 := \{\frac{1}{mn} \sum_{j=1}^m(\|\bs a_j\|_2^2+\|\bs b_j\|_2^2) \leq 3\}$. 
Since $\bs a_j,\bs b_j\sim_\iid\cl N(\bs 0,\bs I_n)$, the random variable $Z := \sum_{j=1}^m(\|\bs a_j\|_2^2+\|\bs b_j\|_2^2)$ is distributed as a $\chi^2_{2mn}$-distribution with $2mn$ degrees of freedom, and $\bb E Z = 2mn$. Moreover, by the Laurent--Massart inequality \citep[Lemma 1]{LaurentMassart2000}, there exist absolute constants $c>0$ such that 
\begin{equation}
\bb P(\cl E_1)\ge1-\exp(-cmn).
\label{eq:chi_square_bound}
\end{equation}
Let us assume that $\cl E_1$ holds. For any $\bs P,\bs Q\in\pgrass(k,n)$, let $f(\bs P,\bs Q):=\ts\frac1m\sum_{j=1}^m\exp\big(i\omega\bs a_j^\top(\bs P-\bs Q)\bs b_j\big)$, and $\bs P^*,\bs Q^*\in\pgrass(k,n)$ be associated net points with $\|\bs P-\bs P^*\|_F\le\epsilon$ and $\|\bs Q-\bs Q^*\|_F\le\epsilon$. Then
$$
m T_1 = m|f(\bs P,\bs Q)-f(\bs P^*,\bs Q^*)|\le\ts\sum_{j=1}^m\big|\exp\big(i\omega\bs a_j^\top(\bs P-\bs Q)\bs b_j\big)-\exp\big(i\omega\bs a_j^\top(\bs P^*-\bs Q^*)\bs b_j\big)\big|.
$$
Using $|\exp(ix)-\exp(iy)|\le|x-y|$ for any $x,y \in \bb R$, we obtain
$$
m|f(\bs P,\bs Q)-f(\bs P^*,\bs Q^*)|\le\ts\omega\sum_{j=1}^m\big|\bs a_j^\top\big((\bs P-\bs P^*)-(\bs Q-\bs Q^*)\big)\bs b_j\big|.
$$
Hence, by the triangle inequality,
$$
|f(\bs P,\bs Q)-f(\bs P^*,\bs Q^*)|\le\ts\frac{\omega}{m}\sum_{j=1}^m\Big(|\bs a_j^\top(\bs P-\bs P^*)\bs b_j|+|\bs a_j^\top(\bs Q-\bs Q^*)\bs b_j|\Big).
$$
Using Cauchy--Schwarz gives $|\bs a_j^\top(\bs P-\bs P^*)\bs b_j|=|\langle\bs P-\bs P^*,\bs a_j\bs b_j^\top\rangle|\le \|\bs P-\bs P^*\|_F\|\bs a_j\|_2\|\bs b_j\|_2$, and similarly for $\bs Q$. Using $\|\bs P-\bs P^*\|_F\le\epsilon$, $\|\bs Q-\bs Q^*\|_F\le\epsilon$ and $\|x\|\|y\|\le(\|x\|^2+\|y\|^2)/2$ yields
$$
T_1=|f(\bs P,\bs Q)-f(\bs P^*,\bs Q^*)|\le\ts\frac{2\omega\epsilon}{m}\sum_{j=1}^m\|\bs a_j\|_2\|\bs b_j\|_2 \le\ts\frac{\omega\epsilon}{m}\sum_{j=1}^m\big(\|\bs a_j\|_2^2+\|\bs b_j\|_2^2\big).
$$
Therefore, under the event $\cl E_1$, $T_1\le 3 \omega n\epsilon$. In particular, choosing $\epsilon = \frac{\delta}{9 \omega n}$ ensures $T_1\leq \frac{\delta}{3}$.
 
\noindent\emph{(b) Bound on $T_2$:} We now control the second term $T_2$. For any fixed $\bs U^*,\bs V^*\in\cl V_\epsilon$, with associated projectors $\bs P^*=\phipr(\bs U^*)$ and $\bs Q^*=\phipr(\bs V^*)$, Prop.~\ref{prop:expconcentration} ensures the existence of absolute constants $C,c>0$ such that
$$
\bb P\Big(\big|\akexp(\bs U^*,\bs V^*)-\kexp(\bs U^*,\bs V^*)\big|\ge\ts\frac{\delta}{3}\Big)\le C\exp(-cm\delta^2).
$$
Let us define the second event $\cl E_2=\Big\{\forall\bs U^*,\bs V^*\in\cl V_\epsilon,\big|\akexp(\bs U^*,\bs V^*)-\kexp(\bs U^*,\bs V^*)\big|<\ts\frac{\delta}{3}\Big\}$. Since $|\cl V_\epsilon|=|\cl N_\epsilon|$, a union bound over all pairs $\bs U^*,\bs V^* \in\cl V_\epsilon$ yields
$$
\bb P(\cl E_2^c)\le C|\cl N_\epsilon|^2\exp(-cm\delta^2).
$$
Using the covering bound from Prop.~\ref{prop:cov-gkn} with the value of $\epsilon = {\delta}/{9 \omega n}$ set above, we have $|\cl N_\epsilon|\le(18\sqrt{k}/\epsilon)^{(2n+1)k} \leq
(162\omega\sqrt{k}n/\delta)^{(2n+1)k}$, and hence, for some absolute constants $C,C',c>0$,
$$
\bb P(\cl E_2^c)\le C\exp\Big(C'nk\log(\omega\sqrt{k}n/\delta)-c m\delta^2\Big).
$$
In particular, updating the constants, if
$$
m\ge C \delta^{-2}nk\log(\omega\sqrt{k}n/\delta),
$$
then $\bb P(\cl E_2^c)\le C'\exp(-c m\delta^2)$. Under this condition, $T_2\le\ts\frac{\delta}{3}$ simultaneously for all pairs of $\bs U^*, \bs V^* \in \cl V_\epsilon$ with probability at least $1-C'\exp(-c m\delta^2)$.

\noindent\emph{(c) Bound on $T_3$:}
We now control the deterministic term
$$
T_3=\big|\kexp(\bs U,\bs V)-\kexp(\bs U^*,\bs V^*)\big|.
$$
Let us define for $\bs P,\bs Q\in\pgrass(k,n)$, $h(\bs P,\bs Q)=\exp\big(i\omega\bs a^\top(\bs P-\bs Q)\bs b\big)$, so that $\kexp(\bs U,\bs V)=\bb E[h(\bs P,\bs Q)]$. By linearity of expectation and the triangle inequality,
$$
T_3\le\bb E\big[|h(\bs P,\bs Q)-h(\bs P^*,\bs Q^*)|\big].
$$
Using again $|\exp(ix)-\exp(iy)|\le|x-y|$, we obtain by Cauchy--Schwarz
$$
|h(\bs P,\bs Q)-h(\bs P^*,\bs Q^*)|
\le\omega\big(|\bs a^\top(\bs P-\bs P^*)\bs b|+|\bs a^\top(\bs Q-\bs Q^*)\bs b|\big) \leq \omega (\|\bs P-\bs P^*\|_F + \|\bs Q-\bs Q^*\|_F) \|\bs a\|_2\|\bs b\|_2,
$$
so that, taking expectations, using $\bb E[\|\bs a\|_2\|\bs b\|_2]\leq n$ and the value of $\epsilon$ set above, 
$$
\ts T_3\le\omega\big(\|\bs P-\bs P^*\|_F+\|\bs Q-\bs Q^*\|_F\big)\bb E[\|\bs a\|_2\|\bs b\|_2] \leq 2\omega n\epsilon \leq  \frac{2}{9} \delta < \frac{1}{3} \delta,
$$
which holds uniformly over all $\bs U,\bs V\in\stief(k,n)$.

\medskip
\noindent\emph{Final bound:}
We now combine the bounds $T_1 \leq \delta/3$, $T_2 \leq \delta/3$, and $T_3 \leq \delta/3$ obtained above, as well as the probabilities (by union bound) and conditions under which they hold. We proved that $\bb P[T_1 \leq \delta/3|\cl E_1] = 1$ and $\bb P[T_2 \leq \delta/3|\cl E_2] = 1$, with also $\bb P[T_3 \leq \delta/3] = 1$. Therefore, $
\ts \bb P[T_1 + T_2 + T_3 > \delta] \leq \bb P[T_1 > \delta/3] + \bb P[T_2 > \delta/3]$. However,
$$
\ts \bb P[T_1 > \delta/3] = 1 - \bb P[T_1 \leq \delta/3|\cl E_1]\bb P[\cl E_1] - \bb P[T_1 \leq \delta/3|\cl E_1^c]\bb P[\cl E_1^c] = 1 - \bb P[\cl E_1] - \bb P[T_1 \leq \delta/3|\cl E_1^c]\bb P[\cl E_1^c] \leq \bb P[\cl E_1^c],
$$
and, similarly, $\bb P[T_2 > \delta/3] \leq \bb P[\cl E_2^c]$. This shows that, for some constants $C,C',c>0$, provided that
$$
m\ge C \delta^{-2}nk\log(\omega\sqrt{k}n/\delta),
$$
with probability exceeding $1 - \bb P(\cl E_1^c) - \bb P(\cl E_2^c) \geq 1 - C' \exp(-c \delta^2 m)$, we have 
$$
\sup_{\bs U,\bs V\in\stief(k,n)}\big|\akexp(\bs U,\bs V)-\kexp(\bs U,\bs V)\big| = T_1 + T_2 + T_3 \le\delta,
$$
which concludes the proof. 
\end{proof} 
\bibliography{biblio}

@article{cai2015ROP,
  author = {Cai, T. and Zhang, A.},
  journal = {The annals of statistics},
  number = {1},
  title = {ROP: Matrix recovery via rank-one projections},
  volume = {43},
  year = {2015},
  language = {en},
  month = {feb},
}

@article{tachellalearning,
  title={Learning to reconstruct signals from binary measurements alone},
  author={Tachella, Juli{\'a}n and Jacques, Laurent},
  journal={Transactions on Machine Learning Research}
}

@article{knyazev2012principal,
  title={Principal angles between subspaces and their tangents},
  author={Knyazev, Andrew V and Zhu, Peizhen},
  journal={arXiv preprint arXiv:1209.0523},
  year={2012}
}

@article{boufounos2017representation,
  title={Representation and coding of signal geometry},
  author={Boufounos, Petros T and Rane, Shantanu and Mansour, Hassan},
  journal={Information and Inference: A Journal of the IMA},
  volume={6},
  number={4},
  pages={349--388},
  year={2017},
  publisher={Oxford University Press}
}

@book{bach2024learning,
  title={Learning theory from first principles},
  author={Bach, Francis},
  year={2024},
  publisher={MIT press}
}

@inproceedings{boufounos20081,
  title={1-bit compressive sensing},
  author={Boufounos, Petros T and Baraniuk, Richard G},
  booktitle={2008 42nd Annual Conference on Information Sciences and Systems},
  pages={16--21},
  year={2008},
  organization={IEEE}
}

@inproceedings{delogne2023signal,
  author = {Delogne, R. and Schellekens, V. and Daudet, L. and Jacques, L.},
  title = {Signal processing with optical quadratic random sketches},
  booktitle = {ICASSP 2023 - 2023 IEEE International Conference on Acoustics, Speech and Signal Processing (ICASSP)},
  year = {2023},
  pages = {1--5},
}

@article{schwarz2021codebook,
  title={Codebook training for trellis-based hierarchical Grassmannian classification},
  author={Schwarz, S. and Tsiftsis, T.},
  journal={IEEE Wireless Communications Letters},
  volume={11},
  number={3},
  pages={636--640},
  year={2021},
  publisher={IEEE}
}

@article{srivastava2004bayesian,
  title={Bayesian and geometric subspace tracking},
  author={Srivastava, A. and Klassen, E.},
  journal={Advances in Applied Probability},
  volume={36},
  number={1},
  pages={43--56},
  year={2004},
  publisher={Cambridge University Press}
}

@inproceedings{harandi2014expanding,
  title={Expanding the family of grassmannian kernels: An embedding perspective},
  author={Harandi, M.T. and Salzmann, M. and Jayasumana, S. and Hartley, R. and Li, H.},
  booktitle={European conference on computer vision},
  pages={408--423},
  year={2014},
  organization={Springer}
}

@article{wong1967differential,
  author = {Wong, Y.-C.},
  title = {Differential Geometry of Grassmann Manifolds},
  journal = {Proceedings of the National Academy of Sciences},
  volume = {57},
  number = {3},
  pages = {589--594},
  month = {Mar},
  year = {1967}
}

@article{jayasumana2015kernel,
  author = {Jayasumana, S. and Hartley, R. and Salzmann, M. and Li, H. and Harandi, M.},
  title = {Kernel Methods on Riemannian Manifolds with Gaussian RBF Kernels},
  journal = {IEEE Transactions on Pattern Analysis and Machine Intelligence},
  volume = {37},
  number = {12},
  pages = {2464--2477},
  month = {Dec},
  year = {2015},
  optdoi = {10.1109/TPAMI.2015.2414422}
}

@inproceedings{saad2024subspace,
  title={Subspace Tracking with Dynamical Models on the Grassmannian},
  author={Saad-Falcon, A. and Ancelin, B. and Romberg, J.},
  booktitle={2024 IEEE 13rd Sensor Array and Multichannel Signal Processing Workshop (SAM)},
  pages={1--5},
  year={2024},
  organization={IEEE}
}

@article{conway1996packing,
  author = {Conway, J. H. and Hardin, R. H. and Sloane, N. J. A.},
  title = {Packing Lines, Planes, etc.: Packings in Grassmannian Spaces},
  journal = {Experimental Mathematics},
  volume = {5},
  number = {2},
  pages = {139--159},
  month = {Jan},
  year = {1996},
  optdoi = {10.1080/10586458.1996.10504585}
}

@article{chi2017subspace,
  title={Subspace learning from bits},
  author={Chi, Yuejie and Fu, Haoyu},
  journal={IEEE Transactions on Signal Processing},
  volume={65},
  number={17},
  pages={4429--4442},
  year={2017},
  publisher={IEEE}
}

@article{mandolesi2021grassmann,
  author = {Mandolesi, A. L. G.},
  title = {Grassmann Angles Between Real or Complex Subspaces},
  year = {2021},
  month = {Jan},
  eprint = {arXiv:1910.00147},
  note = {Available on arXiv},
  optdoi = {10.48550/arXiv.1910.00147}
}

@article{basri2011approximate,
  author = {Basri, R. and Hassner, T. and Zelnik-Manor, L.},
  title = {Approximate Nearest Subspace Search},
  journal = {IEEE Transactions on Pattern Analysis and Machine Intelligence},
  volume = {33},
  number = {2},
  pages = {266--278},
  month = {Feb},
  year = {2011},
  optdoi = {10.1109/TPAMI.2010.110}
}

@article{ji2015angular,
  author = {Ji, J. and Li, J. and Tian, Q. and Yan, S. and Zhang, B.},
  title = {Angular-Similarity-Preserving Binary Signatures for Linear Subspaces},
  journal = {IEEE Transactions on Image Processing},
  volume = {24},
  number = {11},
  pages = {4372--4380},
  month = {Nov},
  year = {2015},
  optdoi = {10.1109/TIP.2015.2451173}
}

@inproceedings{Wang2018,
  author = {Wang, B. and Liu, X. and Xia, K. and Ramamohanarao, K. and Tao, D.},
  title = {Random Angular Projection for Fast Nearest Subspace Search},
  booktitle = {Pacific Rim Conference on Multimedia},
  pages = {15--26},
  year = {2018},
  organization = {Springer}
}

@book{rose2002mathematical,
  author    = {Rose, C. and Smith, M.D.},
  title     = {Mathematical Statistics with Mathematica},
  publisher = {Springer-Verlag},
  address   = {New York},
  year      = {2002},
}

@book{vershynin2018high,
  title     = {High-Dimensional Probability: An Introduction with Applications in Data Science},
  author    = {Vershynin, R.},
  year      = {2018},
  publisher = {Cambridge University Press},
  series    = {Cambridge Series in Statistical and Probabilistic Mathematics}
}

@article{Chen2015,
author = {Chen, Y. and Chi, Y. and Goldsmith, A. J.},
title = {Exact and Stable Covariance Estimation from Quadratic Sampling via Convex Programming},
journal = {IEEE Trans. Inf. Theory},
year = {2015},
volume = {61},
number = {7},
pages = {4034--4059}
}

@inproceedings{RahimiRecht2007,
  author = {Rahimi, A. and Recht, B.},
  title = {Random Features for Large‑Scale Kernel Machines},
  booktitle = {Advances in Neural Information Processing Systems},
  year = {2007},
  volume = {20},
  pages = {1177--1184}
}

@inproceedings{hamm2008grassmann,
  title={Grassmann discriminant analysis: a unifying view on subspace-based learning},
  author={Hamm, J. and Lee, D. D.},
  booktitle={Proceedings of the 25th international conference on Machine learning},
  pages={376--383},
  year={2008}
}

@article{CandesPlan2011,
  author  = {Candès, E. J. and Plan, Y.},
  title   = {Tight Oracle Bounds for Low‑Rank Matrix Recovery from a Minimal Number of Random Measurements},
  journal = {IEEE Trans. Inf. Theory},
  year    = {2011},
  volume  = {57},
  number  = {4},
  pages   = {2342--2359},
  optdoi     = {10.1109/TIT.2011.2111771}
}

@article{Jacques2013,
  author  = {Jacques, L. and Laska, J. N. and Boufounos, P. T. and Baraniuk, R. G.},
  title   = {Robust 1‑Bit Compressive Sensing via Binary Stable Embeddings of Sparse Vectors},
  journal = {IEEE Transactions on Information Theory},
  year    = {2013},
  volume  = {59},
  number  = {4},
  pages   = {2082--2102},
}

@article{Bong2023,
  author       = {Bong, H. and Kuchibhotla, A. K.},
  title        = {Tight Concentration Inequality for Sub‐Weibull Random Variables with Generalized Bernstein Orlicz Norms},
  journal      = {arXiv preprint},
  year         = {2023},
  eprint       = {2302.03850},
  archivePrefix = {arXiv}
}

@article{Wei2020,
  author   = {Wei, D. and Shen, X. and Sun, Q. and Gao, X. and Yan, W.},
  title    = {Prototype Learning and Collaborative Representation Using Grassmann Manifolds for Image Set Classification},
  journal  = {Pattern Recognit.},
  year     = {2020},
  volume   = {100},
  pages    = {107123}
}

@article{Liu2022,
  author   = {Liu, F. and Huang, X. and Chen, Y. and Suykens, J. A. K.},
  title    = {Random Features for Kernel Approximation: A Survey on Algorithms, Theory, and Beyond},
  journal  = {IEEE Trans. Pattern Anal. Mach. Intell.},
  year     = {2022},
  volume   = {44},
  number   = {10},
  pages    = {6674--6695}
}

@inproceedings{Delogne2025,
  author       = {Delogne, R. and Jacques, L.},
  title        = {Random Features for Grassmannian Kernels},
  booktitle    = {2025 IEEE Statistical Signal Processing Workshop (SSP)},
  year         = {2025},
  pages        = {221--224}
}

@article{Basri2003,
  author  = {Basri, R. and Jacobs, D. W.},
  title   = {Lambertian Reflectance and Linear Subspaces},
  journal = {IEEE Trans. Pattern Anal. Mach. Intell.},
  year    = {2003},
  volume  = {25},
  number  = {2},
  pages   = {218--233}
}

@article{LiuEtAl2013,
  author  = {Liu, G. and Lin, Z. and Yan, S. and Sun, J. and Yu, Y. and Ma, Y.},
  title   = {Robust Recovery of Subspace Structures by Low‑Rank Representation},
  journal = {IEEE Trans. Pattern Anal. Mach. Intell.},
  year    = {2013},
  volume  = {35},
  number  = {1},
  pages   = {171--184}
}

@article{Rao2010,
  author  = {Rao, S. and Tron, R. and Vidal, R. and Ma, Y.},
  title   = {Motion Segmentation in the Presence of Outlying, Incomplete, or Corrupted Trajectories},
  journal = {IEEE Transactions on Pattern Analysis and Machine Intelligence},
  year    = {2010},
  volume  = {32},
  number  = {10},
  pages   = {1832--1845}
}

@article{Jiao2018,
  author  = {Jiao, Y. and Chen, Y. and Gu, Y.},
  title   = {Subspace Change-Point Detection: A New Model and Solution},
  journal = {IEEE J. Sel. Top. Signal Process.},
  year    = {2018},
  volume  = {12},
  number  = {6},
  pages   = {1224--12379}
}

@article{McGoniglePeng2021,
  author  = {McGonigle, E.T. and Peng, H.},
  title   = {Subspace Change‑Point Detection via Low‑Rank Matrix Factorisation},
  journal = {arXiv preprint arXiv:2110.04044},
  year    = {2021}
}

@inproceedings{WatanabePakvasa1973,
  author = {Watanabe, S. and Pakvasa, N.},
  title = {Subspace Method of Pattern Recognition},
  booktitle = {Proc. 1st Int. Conf. Pattern Recognit. (ICPR)},
  year = {1973},
  pages = {25--32}
}

@incollection{WatanabeEtAl1967,
  author = {Watanabe, S. and Lambert, P. F. and Kulikowski, C. A. and Buxton, J. L. and Walker, R.},
  title = {Evaluation and Selection of Variables in Pattern Recognition},
  booktitle = {Computer and Information Sciences},
  editor = {Tou, J.},
  volume = {2},
  year = {1967},
  publisher = {Academic Press},
  address = {New York},
  pages = {91--122}
}

@book{florescu2014probability,
  title={Probability and stochastic processes},
  author={Florescu, Ionut},
  year={2014},
  publisher={John Wiley \& Sons}
}

@article{WolfShashua2003,
  title={Learning over sets using kernel principal angles},
  author={Wolf, Lior and Shashua, Amnon},
  journal={Journal of Machine Learning Research},
  volume={4},
  number={Oct},
  pages={913--931},
  year={2003}
}

@inproceedings{WangEtAl2013,
  author    = {Wang, X. and Atev, S. and Wright, J. and Lerman, G.},
  title     = {Fast Subspace Search via Grassmannian Based Hashing},
  booktitle = {Proc. IEEE Int. Conf. Comput. Vis. (ICCV)},
  year      = {2013},
  pages     = {2776--2783}
}

@incollection{DrineasMahoney2005,
  author    = {Drineas, P. and Mahoney, M.~W.},
  title     = {Approximating a Gram Matrix for Improved Kernel‑Based Learning},
  booktitle = {Learning Theory: 18th Annual Conference on Computational Learning Theory (COLT 2005), Lecture Notes in Computer Science, Volume 3559},
  editor    = {Auer, P. and Meir, R.},
  pages     = {323--337},
  year      = {2005},
  publisher = {Springer, Berlin Heidelberg}
}

@inproceedings{bach2005,
    author = {Bach, F. and Jordan, M.},
    title = {Predictive low-rank decomposition for kernel methods},
    year = {2005},
    publisher = {Association for Computing Machinery},
    address = {New York, NY, USA},
    booktitle = {Proceedings of the 22nd International Conference on Machine Learning},
    pages = {33–40},
    numpages = {8},
    location = {Bonn, Germany},
    series = {ICML '05}
}

@article{LiGu2018,
  author    = {Li, G. and Gu, Y.},
  title     = {Restricted Isometry Property of Gaussian Random Projection for Finite Set of Subspaces},
  journal   = {IEEE Trans. Signal Process.},
  year      = {2018},
  volume    = {66},
  number    = {7},
  pages     = {1705--1720}
}

@inproceedings{foucart2016flavors,
  title={Flavors of compressive sensing},
  author={Foucart, Simon},
  booktitle={International Conference Approximation Theory},
  pages={61--104},
  year={2016},
  organization={Springer}
}

@book{ScholkopfSmola2002,
  author    = {Sch{\"o}lkopf, B. and Smola, A. J.},
  title     = {Learning with Kernels: Support Vector Machines, Regularization, Optimization, and Beyond},
  series    = {Adaptive Computation and Machine Learning},
  publisher = {MIT Press},
  address   = {Cambridge, MA},
  year      = {2002},
  pages     = {626},
  isbn      = {978‑0‑262‑19475‑4}
}

@article{VanVleckMiddleton1966,
  author = {Van Vleck, J. H. and Middleton, D.},
  title = {The Spectrum of Clipped Noise},
  journal = {Proc. IEEE},
  year = {1966},
  volume = {54},
  number = {1},
  pages = {2--19}
}

@article{AilonChazelle2009,
  author  = {Ailon, N. and Chazelle, B.},
  title   = {The Fast Johnson–Lindenstrauss Transform and Approximate Nearest Neighbors},
  journal = {SIAM Journal on Computing},
  year    = {2009},
  volume  = {39},
  number  = {1},
  pages   = {302--322},
  optdoi     = {10.1137/060673096}
}

@article{AilonLiberty2009,
  author  = {Ailon, N. and Liberty, E.},
  title   = {Fast Dimension Reduction Using Rademacher Series on Dual BCH Codes},
  journal = {Discrete and Computational Geometry},
  year    = {2009},
  volume  = {42},
  number  = {4},
  pages   = {615--630},
  optdoi     = {10.1007/s00454‑008‑9110‑x}
}

@inproceedings{Le2013,
  author={Le, Q. and Sarlos, T. and Smola, A.},
  title={Fastfood: Approximating Kernel Expansions in Log-linear Time},
  booktitle={Proc. 30th Int. Conf. Mach. Learn. (ICML)},
  year={2013},
  pages={244--252}
}

@article{Vayer2023,
  author={Vayer, T. and Lasalle, E. and Gribonval, R. and Gonçalves, P.},
  title={Compressive Recovery of Sparse Precision Matrices},
  journal={arXiv preprint arXiv:2311.04673},
  year={2023}
}

@inproceedings{Saade2016,
  author={Saade, A. and Caltagirone, F. and Carron, I. and Daudet, L. and Drémeau, A. and Gigan, S. and Krzakala, F.},
  title={Random Projections through Multiple Optical Scattering: Approximating Kernels at the Speed of Light},
  booktitle={Proc. 2016 IEEE Int. Conf. Acoust., Speech, Signal Process. (ICASSP)},
  year={2016},
  pages={6215--6219},
  optdoi={10.1109/ICASSP.2016.7472872}
}

@inproceedings{Yu2016orthogonal,
  author    = {Yu, F. X. and Suresh, A. T. and Choromanski, K. M. and Holtmann-Rice, D. N. and Kumar, S.},
  title     = {Orthogonal Random Features},
  booktitle = {Advances in Neural Information Processing Systems 29},
  pages     = {1975--1983},
  year      = {2016}
}

@article{LaurentMassart2000,
  author = {Laurent, B{\'e}atrice and Massart, Pascal},
  title = {Adaptive estimation of a quadratic functional by model selection},
  journal = {The Annals of Statistics},
  year = {2000},
  volume = {28},
  number = {5},
  pages = {1302--1338},
  optdoi = {10.1214/aos/1015957395},
}

@inproceedings{Andoni2015LSH,
  author    = {Andoni, Alexandr and Indyk, Piotr and Laarhoven, Thijs and Razenshteyn, Ilya and Schmidt, Ludwig},
  title     = {Practical and Optimal LSH for Angular Distance},
  booktitle = {Advances in Neural Information Processing Systems},
  year      = {2015},
  volume    = {28},
  pages     = { -- },
  opturl       = {https://arxiv.org/abs/1509.02897},
  note      = {Full version available at arXiv:1509.02897}
}

@article{Choromanski2016TripleSpin,
  author    = {Choromanski, Krzysztof and Fagan, Fran\c{c}ois and Gouy-Pailler, C{\'e}dric and Morvan, Anne and Sarl{\'o}s, Tam{\'a}s and Atif, Jamal},
  title     = {TripleSpin: A Generic Compact Paradigm for Fast Machine Learning Computations},
  journal   = {arXiv preprint arXiv:1605.09046},
  year      = {2016},
  opturl       = {https://arxiv.org/abs/1605.09046},
  optdoi       = {10.48550/arXiv.1605.09046},
  note      = {Preprint},
}

@inproceedings{Bojarski2017StructuredSpinners,
  author    = {Bojarski, Mariusz and Choromanska, Anna and Choromanski, Krzysztof and Fagan, Fran\c{c}ois and Gouy-Pailler, C{\'e}dric and Morvan, Anne and Sakr, Nourhan and Sarl{\'o}s, Tam{\'a}s and Atif, Jamal},
  title     = {Structured Adaptive and Random Spinners for Fast Machine Learning Computations},
  booktitle = {Proceedings of the 20th International Conference on Artificial Intelligence and Statistics},
  series    = {AISTATS 2017},
  year      = {2017},
  pages     = {1020--1029},
  address   = {Fort Lauderdale, Florida, USA},
  opturl       = {http://proceedings.mlr.press/v54/bojarski17a.html},
  publisher = {PMLR},
}

@article{dokmanic2015euclidean,
author = {Dokmanić, I. and Parhizkar, R. and Ranieri, J. and Vetterli, M.},
title = {Euclidean distance matrices: Essential theory, algorithms, and applications},
journal = {IEEE Signal Process. Mag.},
year = {2015},
volume = {32},
number = {6},
pages = {12--30},
doi = {10.1109/MSP.2015.2398954},
eprint = {1502.07541},
archivePrefix = {arXiv},
primaryClass = {cs.OH}
}

@article{tasissa2019exact,
author = {Tasissa, A. and Lai, R.},
title = {Exact reconstruction of {Euclidean} distance geometry problem using low-rank matrix completion},
journal = {IEEE Trans. Inf. Theory},
year = {2019},
volume = {65},
number = {5},
pages = {3124--3144},
doi = {10.1109/TIT.2018.2881749},
eprint = {1804.04310},
archivePrefix = {arXiv},
primaryClass = {cs.IT}
}
\bibliographystyle{tmlr}

\end{document}